 \newtheorem{definition}{\bf{Definition}}
 \newtheorem{theorem}{\bf{Theorem}}
\newcommand{\Xcal}{{\mathcal{X}}}
\newcommand{\lrincir}[1]{\left( #1 \right)}
\newcommand{\lrnorm}[1]{\left\lVert#1\right\rVert}
\newcommand{\lrangle}[1]{\left\langle#1 \right\rangle}
\newcommand{\RR}{\mathbb{R}}
\begin{document}

\title{Triangle Lasso for Simultaneous Clustering and Optimization in Graph Datasets}

\author{Yawei Zhao,
        Kai Xu,
        En Zhu$^{\ast}$,
        Xinwang Liu,
        Xinzhong Zhu,
        Jianping Yin$^{\ast}$
        \thanks{${\ast}$ represents the corresponding authors.}
\IEEEcompsocitemizethanks{
\IEEEcompsocthanksitem Yawei Zhao, Kai Xu, Xinwang Liu and En Zhu are with College of Computer, National University of Defense Technology, Changsha, Hunan, 410073, China. E-mail: zhaoyawei@nudt.edu.cn; kevin.kai.xu@gmail.com; xinwangliu@nudt.edu.cn; enzhu@nudt.edu.cn.
\IEEEcompsocthanksitem  Xinzhong Zhu is with School of Life Science and Technology, XIDIAN University, Xi'an, Shanxi, China, 710126 and  College of Mathematics, Physics and Information Engineering, Zhejiang Normal University, Jinhua, Zhejiang, 321004, China. He makes equal contribution with Yawei Zhao. E-mail: zxz@zjnu.edu.cn.
\IEEEcompsocthanksitem  Jianping Yin is with Dongguan University of Technology, Dongguan, Guangdong, 523000, China. E-mail: jpyin@dgut.edu.cn.
} 
}


\IEEEcompsoctitleabstractindextext{%
\begin{abstract}
Recently, network lasso has drawn many attentions due to its remarkable performance on simultaneous clustering and optimization. However,  it usually suffers from the imperfect data (noise, missing values etc), and yields sub-optimal solutions. The reason is that it finds the similar instances according to their features directly, which is usually impacted by the imperfect data, and thus returns sub-optimal results.    In this paper, we propose  triangle lasso to avoid its disadvantage for graph datasets. In a graph dataset, each instance is represented by a vertex. If two instances have many common adjacent vertices, they tend to become similar. Although some instances are profiled by the imperfect data, it is still able to find the similar counterparts. Furthermore, we develop an efficient algorithm based on Alternating Direction Method of Multipliers (ADMM) to obtain a moderately accurate solution. In addition, we present a dual method to obtain the accurate solution with the low additional time consumption. We demonstrate through extensive  numerical experiments that triangle lasso is robust to the imperfect data. It usually yields a better performance than the state-of-the-art method when performing data analysis tasks in practical scenarios.  
\end{abstract}

\begin{IEEEkeywords}
Triangle lasso, robust, clustering, sum-of-norms regularization.
\end{IEEEkeywords}}
\maketitle

\IEEEpeerreviewmaketitle

\section{Introduction}
\label{introduction}
\IEEEPARstart{I}{t} has been attractive to  find  the similarity among instances and conduct data analysis simultaneously via convex optimization for recent years. Let us take an example to explain this kind of tasks. Consider the price prediction of houses in New York. Suppose that we use ridge regression to conduct prediction tasks. We need to learn the weights of features (cost,  area, number of rooms etc) for each house. The price of the houses situated in a district should be predicted by using the similar or identical weights due to the same location-based factors, e.g. school district etc. But, those location-based factors are usually difficult to be quantified as the additional features. Thus, it is challenging to predict the price of houses under those location-based factors.   Recently,  network lasso is proposed to conduct this kind of tasks, and yields remarkable performance \cite{Hallac:2015fy}.

However, it is worth noting that some features of an instance are usually missing, noisy or unreliable in practical scenarios, which are collectively referred to as the \textit{imperfect} data in the paper. For instance, the true cost of a house is usually a core secret for a company, which cannot be obtained in many cases. The market expectation is usually not stable, and has some fluctuations for a period of time. Network lasso suffers from the imperfect data (noise, missing values etc), and yields sub-optimal solutions. One of the reasons is that they use those features directly to learn the unknown weights whose accuracy is usually impaired due to such  imperfect data. It is thus challenging to learn the correct weights for a house, and make a precise prediction. Therefore,  it is valuable to develop a robust method to handle the imperfect data.

  Many excellent  researches have been conducted  and obtain impressive results.  There are some pioneering researches in convex clustering \cite{Wang:2016kw,Wangsparse}.  \cite{Wang:2016kw} focuses on finding and removing the outlier features. \cite{Wangsparse}  is proposed to find and remove the uninformative features in a high dimensional clustering scenario. Assuming that those targeting features are sparse,  the pioneering researches successfully find and remove them via an $l_{2,1}$ regularization.  However, their methods rely on an extra hyper-parameter for such a regularized item. The extra need-to-tune hyper-parameter limits their usefulness in practical tasks.  As an extension of convex clustering, network lasso is proposed to conduct clustering and optimization simultaneously for large graphs \cite{Hallac:2015fy}.  It formulates the empirical loss of each vertex into the loss function and  each edge into the regularization. If the imperfect data exists, the formulations of the vertex and edge are inaccurate. Due to such inaccuracy, network lasso returns  sub-optimal  solutions. As the pioneering researches, \cite{Jung:2017ug} investigates the conditions on the network topology to obtain the accurate clustering result in the network lasso.  However, given a network topology, it is still not able to handle vertices with the imperfect data. Additionally, we find that it is not efficient for those previous methods, which impedes them to be used in the practical scenarios. In a nutshell, it is important to propose a robust method to handle those imperfect data and meanwhile yield the solution efficiently.

In the paper, we introduce \textit{triangle lasso} to conduct data analysis and clustering simultaneously via convex optimization. Triangle lasso re-organizes a dataset as a graph or network\footnote{The graph and network have the  equivalent meanings in the paper.}. Each instance is represented by a vertex. If two instances are closely related in a data analysis task, they are connected by an edge. Here, the \textit{related} has various meanings for specific tasks. For example, two vertices may be connected if  an vertex is one of the $k$ nearest neighbours of the other one. Our key idea is illustrated in Fig. \ref{figure_illustration_triangle_lasso2}.  If there is a shared neighbour between a vertex and its direct adjacency, a triangle exists. It implies that the vertices may be  similar. If two vertices exist in multiple triangles, they tend to be very similar because that they have many shared neighbours. Benefiting from the triangles, triangle lasso is robust to the imperfect values. Although a vertex, e.g. $v_i$ has some noisy values, we can still find its similar counterpart $v_j$ and $v_k$ via their shared neighbours. 

\begin{figure}[t]
\centering 
\includegraphics[width=0.75\columnwidth]{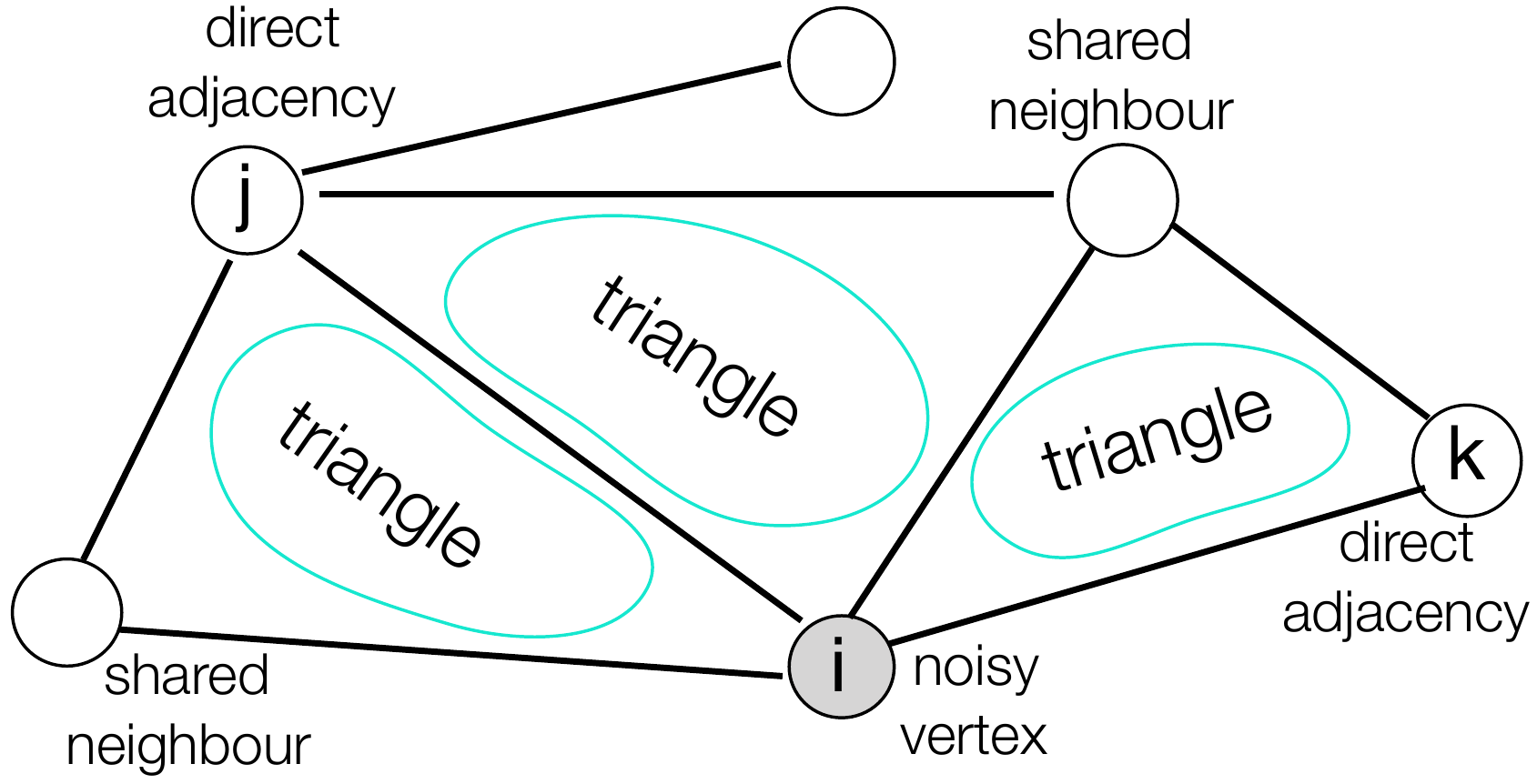}
\caption{The noisy vertex $v_i$ is more similar to $v_j$ than to $v_k$ because there are two shared neighbours between them. }
\label{figure_illustration_triangle_lasso2}
\end{figure}

It is worthy noting that the neighbouring information of a vertex is formulated into a sum-of-norms regularization in  triangle lasso. It is challenging to solve the triangle lasso efficiently due to three reasons. First, it is non-separable for the weights of the adjacent vertices. If a vertex has a large number of neighbours, it is time-consuming to obtain the optimal weights.  Second, the objective function is non-smooth at the optimum when more than one vertex belongs to a cluster. In triangle lasso, if two vertices belong to a cluster, their weights are identical. But, the sum-of-norms regularization implies that the objective function is non-differentiable in the case. There usually exist a large number of non-smooth points for a specific task. Third, we have to optimize a large number of variables, i.e. $O(nd)$, where $n$ is the number of instances, and $d$ is the number of features. In the paper,  we develop an efficient method based on ADMM to obtain a moderately accurate solution. After that, we transform  the triangle lasso to an easy-to-solve Second Order Cone Programming (SOCP) problem in the dual space. Then, we propose a dual method to obtain the accurate solution efficiently. Finally, we use the learned weights to conduct various data analysis tasks. Our contributions are outlined as follows:
\begin{itemize}
\item We formulate the triangle lasso as a general robust optimization framework.
\item  We provide an ADMM method to yield the moderately accurate solution, and develop a dual method to obtain the accurate solution efficiently.
\item  We demonstrate that triangle lasso is robust to the imperfect data, and yields the solution efficiently  according to empirical studies.
\end{itemize}

The rest of paper is organized as follows. Section \ref{sect_related_work} outlines the related work. Section \ref{sect_formulation} presents the formulation of triangle lasso. Section \ref{sect_admm} presents our ADMM method which obtains a moderately accurate solution. Section \ref{sect_dual} presents the dual method which obtains the accurate solution. Section \ref{sect_time_complexity} discusses the time complexity  of our proposed methods. Section \ref{sect_empirical_studies} presents the evaluation studies. Section  \ref{sect_conclusion} concludes the paper.

\section{Related Work}
\label{sect_related_work}
Recently, there are a lot of excellent researches on clustering and data analysis simultaneously, and they obtain impressive results. 

\subsection{Convex clustering}
As a specific field of triangle lasso, convex clustering has drawn many attentions \cite{icml2017,Chi:2013ey,Han:2016tg,Tan:2015vr,Chi:2016bu,Wang:2016kw, Wangsparse}.  \cite{icml2017} proposes a new stochastic incremental algorithm to conduct convex clustering. \cite{Chi:2013ey} proposes a splitting method to conduct convex clustering via ADMM. \cite{Han:2016tg} proposes a reduction technique to conduct graph-based convex clustering. \cite{Tan:2015vr} investigates the statistical properties of convex clustering. \cite{Chi:2016bu} formulates a new variant of convex clustering, which conducts clustering on instances and features simultaneously. Comparing with our methods, those previous researches focus on improving the efficiency of convex clustering, which cannot handle the imperfect data.   \cite{Wang:2016kw} uses an $l_{2,1}$ regularization to pick noisy features when conducting convex clustering. \cite{Wangsparse} investigates to remove sparse outlier or uninformative features when conducting convex clustering. However, both of them uses more than one convex regularized items in the formulation, which needs to tune multiple hyper-parameters in practical scenarios. Specifically,  the previous methods including \cite{Wang:2016kw,Wangsparse} focus on being robust to the imperfect data. They usually add a new regularized item, e.g. $l_1$-norm regularization or $l_{2,1}$-norm regularization to obtain a sparse solution. Although it is effective, the newly-added regularized item usually need to tune a hyper-parameter for the regularized item, which limits their use in the practical scenarios. 
\subsection{Network lasso}
 As the extension of convex clustering, network lasso is good at conducting clustering and optimization simultaneously \cite{Hallac:2015fy,Ghosh2016An,Jung:2017ug}. As a general framework, network lasso yields remarkable performance in various machine learning tasks \cite{Hallac:2015fy,Ghosh2016An}.  However, its solution is easily impacted by the imperfect data, and yields sub-optimal solutions in the practical tasks. \cite{Jung:2017ug} investigates the network topology in order to obtain accurate solution.  Triangle lasso aims to obtain a robust solution with inaccurate vertices for a known network topology, which is orthogonal to \cite{Jung:2017ug}.

\begin{figure}[t]
\centering 
\subfigure[$6$ clusters, $\alpha=4$]{\includegraphics[width=0.48\columnwidth]{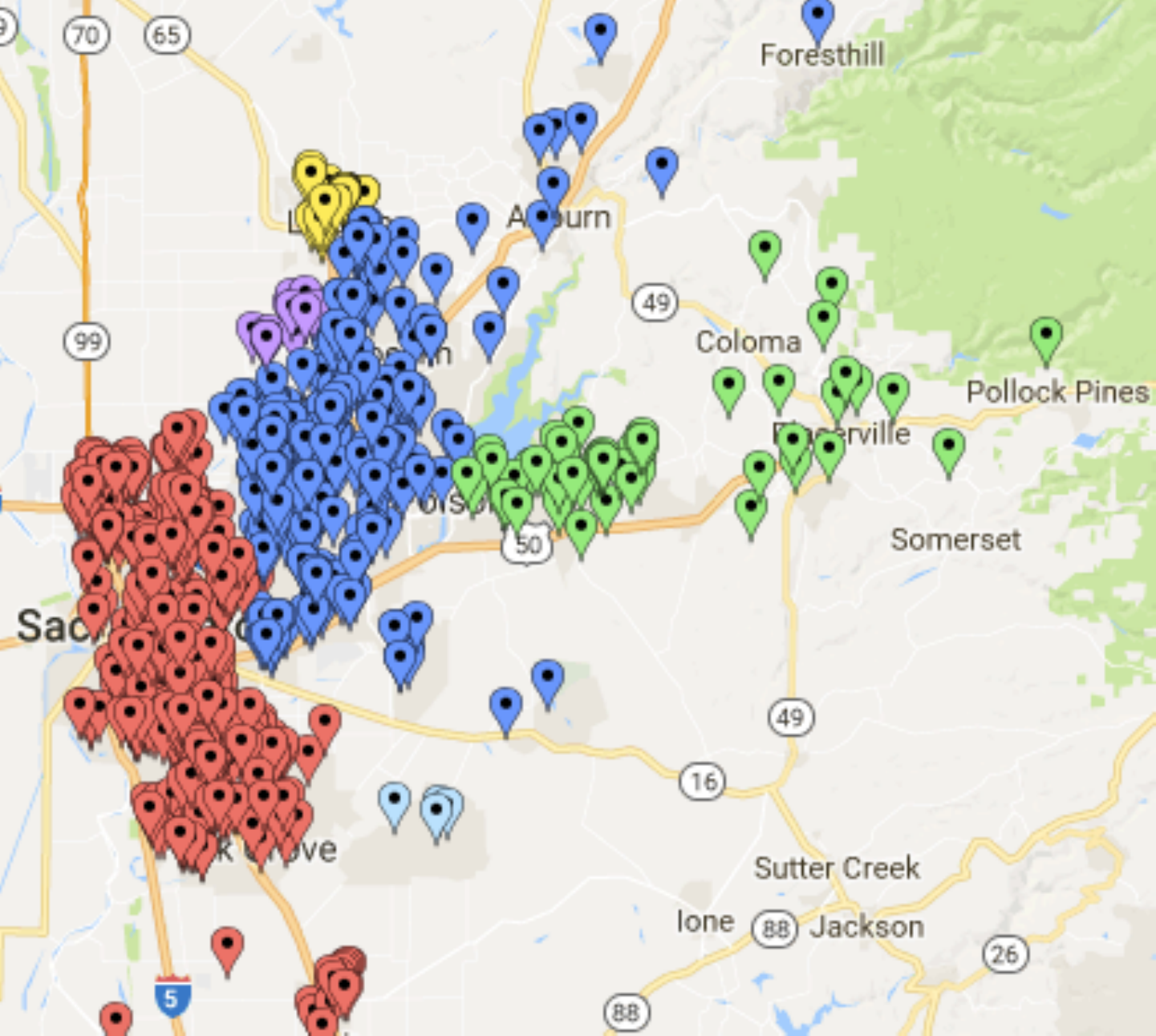}\label{figure_example_triangle_6clusters}}
\hspace{3pt}
\subfigure[$4$ clusters, $\alpha=5$]{\includegraphics[width=0.475\columnwidth]{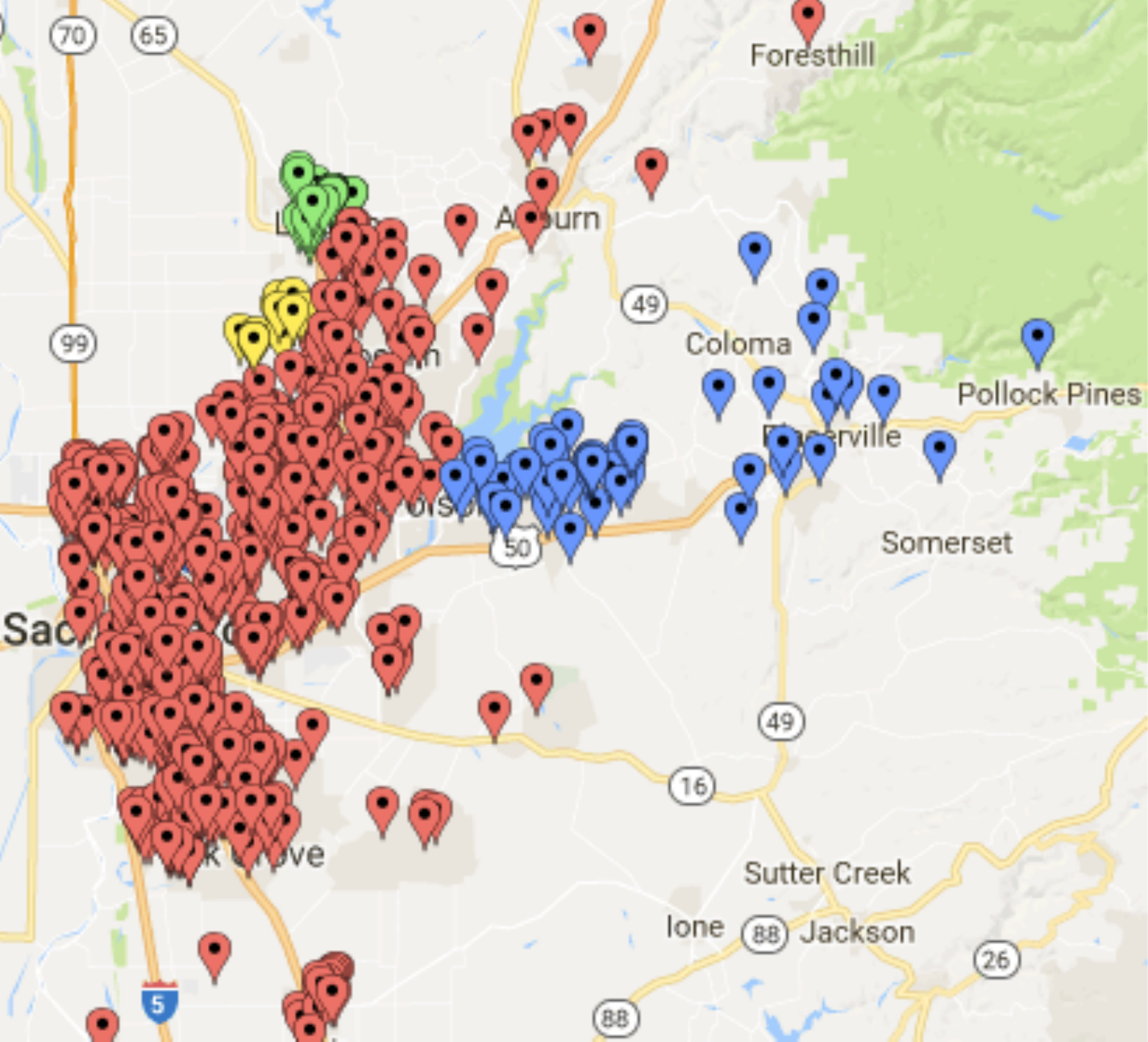}\label{figure_example_triangle_4clusters}}
\caption{The prediction of the house price in the Greater Sacramento area.  With the increase of $\alpha$, more houses are fused to a cluster. The houses located in a cluster use an identical weight to predict their prices. }
\label{figure_example_house}
\end{figure}

\section{Problem formulation}
\label{sect_formulation}
In this section, we first present the formulation of the triangle lasso. Then, we instantiate it in some applications, and present the result in a demo example. After that, we present the workflow of the triangle lasso. Finally, we shows the symbols used in the paper and their notations.

\subsection{Formulation}
We formulate the triangle lasso as an unconstrained convex optimization problem:
\begin{eqnarray}
\setlength{\abovedisplayskip}{0pt}
\setlength{\belowdisplayskip}{0pt}
\nonumber
\min_{\tiny X\in \mathbb{R}^{n\times d}} \sum\limits_{v_i\in \mathcal{V}} f_{i}(X_i, y_i) + \alpha \sum\limits_{e_{ij}\in \mathcal{E}}w_{ij}g_{i,j}(X_i,X_j). 
\end{eqnarray}
Given a graph, $\mathcal{V}$ represents the vertex set containing $n$ vertices, and $\mathcal{E}$ represents the edge set containing $m$ edges. $y_i$ represents the response of the $i$-th instance, i.e. $A_i$. $w_{ij}$ denotes the weight for the edge $e_{ij}$, which could be specifically defined according to the task in practical.  $\alpha>0$ is the regularization  coefficient. It is highlighted that $f_{i}(X_i, y_i)$ becomes $f_{i}(X_i)$ in the unsuperivised learning tasks such as clustering because that there is no response for an instance in the unsupervised learning tasks.  We let
\begin{eqnarray}
\setlength{\abovedisplayskip}{0pt}
\setlength{\belowdisplayskip}{0pt}
\nonumber
g_{i,j}(X_i,X_j) \mathrm{=} g_0(X_i,X_j) \mathrm{+}g_1(X_i,N(X_j)) \mathrm{+}g_2(X_j,N(X_i))
\end{eqnarray} hold, where $N(\cdot)$ represents the neighbour set of a vertex. $f_{i}(X_i,y_i)$ represents the empirical loss on the vertex $v_i$.  As a regularization,  $g_0$, $g_1$ and $g_2$ can have various formulations. In the paper, we focus on the sum-of-norms regularization, i.e.,
\begin{eqnarray}
\setlength{\abovedisplayskip}{0pt}
\setlength{\belowdisplayskip}{0pt}
\nonumber
&&g_{i,j}(X_i,X_j) \\ \nonumber
&\mathrm{=}& \lVert X_i \mathrm{-} X_j \rVert \mathrm{+} \sum\limits_{\tiny \substack{k_j \in N(X_j); \\(i, k_j)\in \mathcal{E}}} \lVert  X_i \mathrm{-} X_{k_j} \rVert \mathrm{+} \sum\limits_{\tiny \substack{k_i\in N(X_i);\\ (j, k_i)\in \mathcal{E}}} \lVert  X_j \mathrm{-} X_{k_i} \rVert.
\end{eqnarray} 

Since $g_{i,j}$ is the sum of $l_2$ norms, i.e. $l_1/l_2$ norm, we denote it by $(1,2)$-norm. Given $n$ vertices and $m$ edges, define an auxiliary matrix $Q \in \mathbb{R}^{m\times n}$.  The non-zero elements of a row of $Q$ represent an edge. If the edge is $e_{ij}$, and it is represented by the $k$-th row of $Q$, we have
\begin{eqnarray}
\setlength{\abovedisplayskip}{0pt}
\setlength{\belowdisplayskip}{0pt}
\nonumber
Q_k=\begin{pmatrix}
\mathbf{0}_{1\times (i-1)},\alpha w_{ij} q_{ij},\mathbf{0}_{1\times (j-i-1)},-\alpha w_{ij} q_{ij}, \mathbf{0}_{1\times n-j}
\end{pmatrix}
\end{eqnarray} where $q_{ij}$ is a positive known integer for a known graph. Triangle lasso is finally formulated as:
\begin{eqnarray}
\setlength{\abovedisplayskip}{0pt}
\setlength{\belowdisplayskip}{0pt}
\label{equa_triangle_lasso}
\min_{\tiny X\in \mathbb{R}^{n\times d}} \sum\limits_{i=1}^n f_{i}(X_i, y_i) + \lVert QX\rVert_{1,2}.
\end{eqnarray}

Note that $\alpha$  is a hyper-parameter for triangle lasso, and it can be varied in order to control the similarity  between instances. 
Additionally, the global minimum of ($\ref{equa_triangle_lasso}$) is denoted by $X_\ast$. The $i$-th row of $X_\ast$ with $1\le i\le n$  is the optimal weights for the $i$-th instance. It is worthy noting that the regularization encourages the similar instances to use the similar or even identical weights. If some rows of $X_\ast$  are identical, it means that the corresponding instances belong to a cluster. As illustrated in Figure \ref{figure_example_house}, we can obtain different clustering results by varying $\alpha$\footnote{The details are presented in the empirical studies.}. When $\alpha$ is very small, each vertex represents a cluster. With the increase of $\alpha$, more vertices are fused into a cluster.

We explain the model by using an example which is illustrated in Figure \ref{figure_illustration_triangle_lasso}. The vertex $v_5$ is profiled by using noisy data. As we have shown, we obtain
\begin{eqnarray}
\setlength{\abovedisplayskip}{0pt}
\setlength{\belowdisplayskip}{0pt}
\nonumber
g_{1,2} &=& \lVert  X_1 - X_2  \rVert \\ \nonumber
g_{1,4} &=& \lVert  X_1 - X_4  \rVert  + \lVert  X_1 - X_5  \rVert + \lVert  X_4 - X_5  \rVert \\ \nonumber
g_{1,5} &=& \lVert  X_1 - X_5 \rVert + \lVert  X_1 - X_4  \rVert + \lVert  X_5 - X_4  \rVert\\ \nonumber
g_{3,4} &=& \lVert  X_3 - X_4  \rVert \\ \nonumber
g_{4,5} &=& \lVert  X_4 - X_5  \rVert  + \lVert  X_4 - X_1  \rVert + \lVert  X_5 - X_1  \rVert.
\end{eqnarray} The regularized term is:
\begin{eqnarray}
\setlength{\abovedisplayskip}{0pt}
\setlength{\belowdisplayskip}{0pt}
\label{equa_triangle_lasso_large_regularization}
&&\sum\limits_{e_{ij}\in \mathcal{E}}g_{i,j}(X_i,X_j)\\ \nonumber
 &=& \lVert  X_1 - X_2  \rVert + 3 \lVert  X_1 - X_4  \rVert \\ \nonumber 
&+& 3\lVert  X_1 - X_5  \rVert + \lVert  X_3 - X_4  \rVert +  3 \lVert  X_4 - X_5  \rVert.
\end{eqnarray} For similarity, we consider the case of no weights for edges, and further let $\alpha = 1$. $Q$ is 
\begin{eqnarray}
\setlength{\abovedisplayskip}{0pt}
\setlength{\belowdisplayskip}{0pt}
\nonumber
Q=\begin{pmatrix}
 1&  -1&  0& 0 & 0\\ 
 3&  0&  0&  -3 & 0\\ 
 3&  0& 0 & 0 & -3\\ 
 0&  0&  1&  -1 & 0\\ 
 0&  0&  0&  3& -3
\end{pmatrix}.
\end{eqnarray}

\begin{figure}[t]
\centering 
\includegraphics[width=0.48\columnwidth]{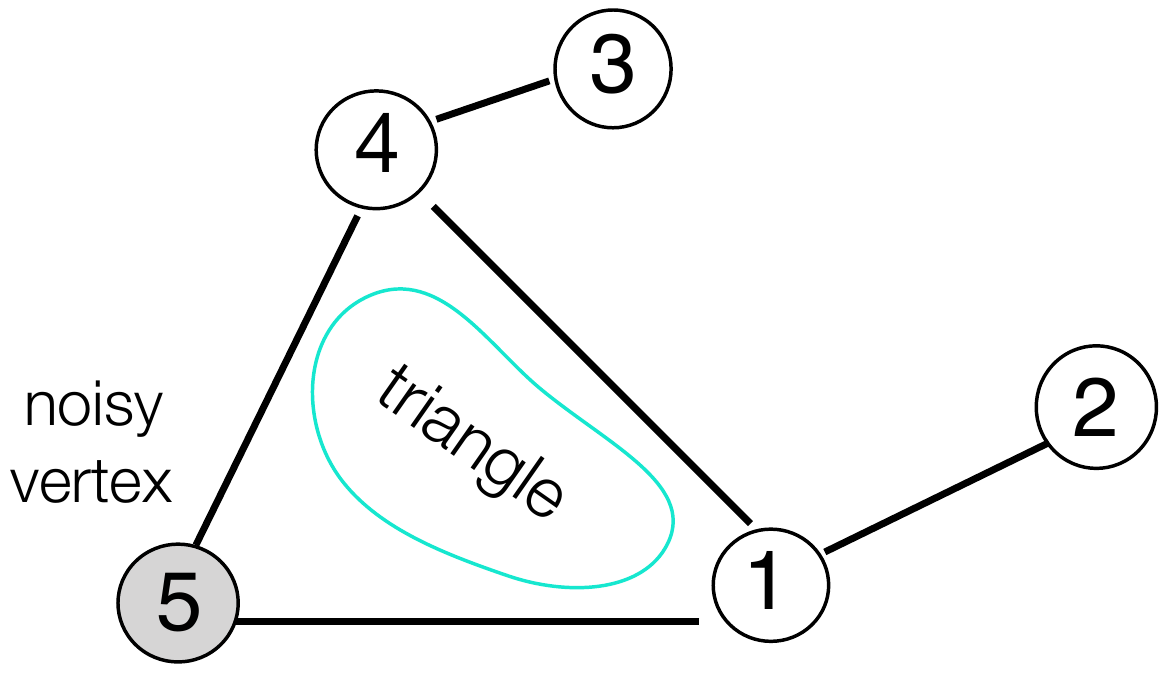}
\caption{If an edge exists in many triangles, its vertices are penalized more than other vertices. Thus, their weights tends to be more similar or even identical than others.}
\label{figure_illustration_triangle_lasso}
\end{figure}

As illustrated in Figure \ref{figure_illustration_triangle_lasso}, the vertices $v_1$, $v_4$ and $v_5$ exist in a triangle. The difference of their weights, namely $\lVert X_1-X_4\rVert$, $\lVert X_1-X_5\rVert$ and $\lVert X_4-X_5\rVert$   are penalized more than others.  The large penalization on the difference of $\lVert X_1-X_5\rVert$ and $\lVert X_4-X_5\rVert$ makes $X_5$ is close to $X_1$ and $X_4$.  Although $v_5$ is profiled by noisy data, we can still find its similar counterparts $v_1$, $v_4$. More generally, if some instances have missing values, those values are usually filled by using the mean value, the maximal value, the minimal value of the corresponding features, or the constant $0$. Comparing with the true values, those estimated values lead to noise. The noise impairs the performance of many classic methods when conducting data analysis tasks on those values directly. Note that triangle lasso does not only use the values, but also use the relation between different instances. If the vertices have many common neighbors in the graph, they tend to be similar even though they are represented by using noisy values. That is the reason why triangle lasso is robust to the imperfect data. 

Triangle lasso is a general and robust framework to simultaneously conduct clustering and optimization for various tasks. The whole workflow is presented in Figure \ref{figure_illustration_triangle_lasso_framework}. First, a graph is constructed to represent the dataset. Second, we obtain a convex optimization problem by formulating a specific data analysis task to the triangle lasso. Third, we provide two methods to solve the triangle lasso. Finally, we obtain the solution of the triangle lasso, and use it to complete data analysis tasks. 
Note that the graph or network datasets are the main targeting datasets for triangle lasso. For a graph or network dataset, $Q$ can be obtained trivially. Otherwise, we  represent the dataset as a graph as follows:\\
\textbf{Case 1:} If the dataset does not contain the imperfect data (missing, noisy, or unreliable values), we run $K$-Nearest Neighbours (KNN) method to find the $K$ nearest neighbours for each an instance. After that, we can obtain the graph by the following rules.
\begin{itemize}
\item Each instance is denoted by a vertex.
\item If an instance is one of the $K$ nearest neighbours of the other instance, then the vertices corresponding to them are connected by an edge. 
\end{itemize}
\textbf{Case 2:} If the dataset contains imperfect data, or contains redundant features in the high dimensional scenarios, we run the dimension reduction methods such as Principal Component Analysis (PCA) or feature selection to improve the quality of the dataset. Then, as mentioned above, we use KNN to find the $K$ nearest neighbours for each an instance, and obtain the graph.  
The procedure is suitable to both supervised learning and unsupervised learning.  Additonally, the $Q$ matrix in  (\ref{equa_triangle_lasso}) plays an essential role in the triangle lasso. Each element of $Q$ contains $\alpha$, $w_{ij}$ and $q_{ij}$. $\alpha$ is a hyper-parameter which needs to be given before optimizing the formulation. $w_{ij}$ and $q_{ij}$ are closely related to the graph.     When a dataset is represented as a graph, we need to determine $w_{ij}$ and $q_{ij}$ in order to obtain $Q$. $w_{ij}$ is the weight of the edge $e_{ij}$, which can be used to measure the importance of the edge. Some literatures recommend $w_{ij} = \exp(-\varrho \left \lVert X_i - X_j \right\rVert_2^2)$ where $\varrho $ is a non-negative constant \cite{Chi:2013ey,Chen:2015kn,Chi:2016bu}. When $\varrho = 0$, it represents the uniform weights. When   $\varrho > 0$, it represents the Gaussian kernel.  Besides, $q_{ij}$   measures the similiarity of nodes $v_i$ and $v_j$ due to their common adjacent nodes. 

\begin{figure}[t]
\centering 
\includegraphics[width=0.95\columnwidth]{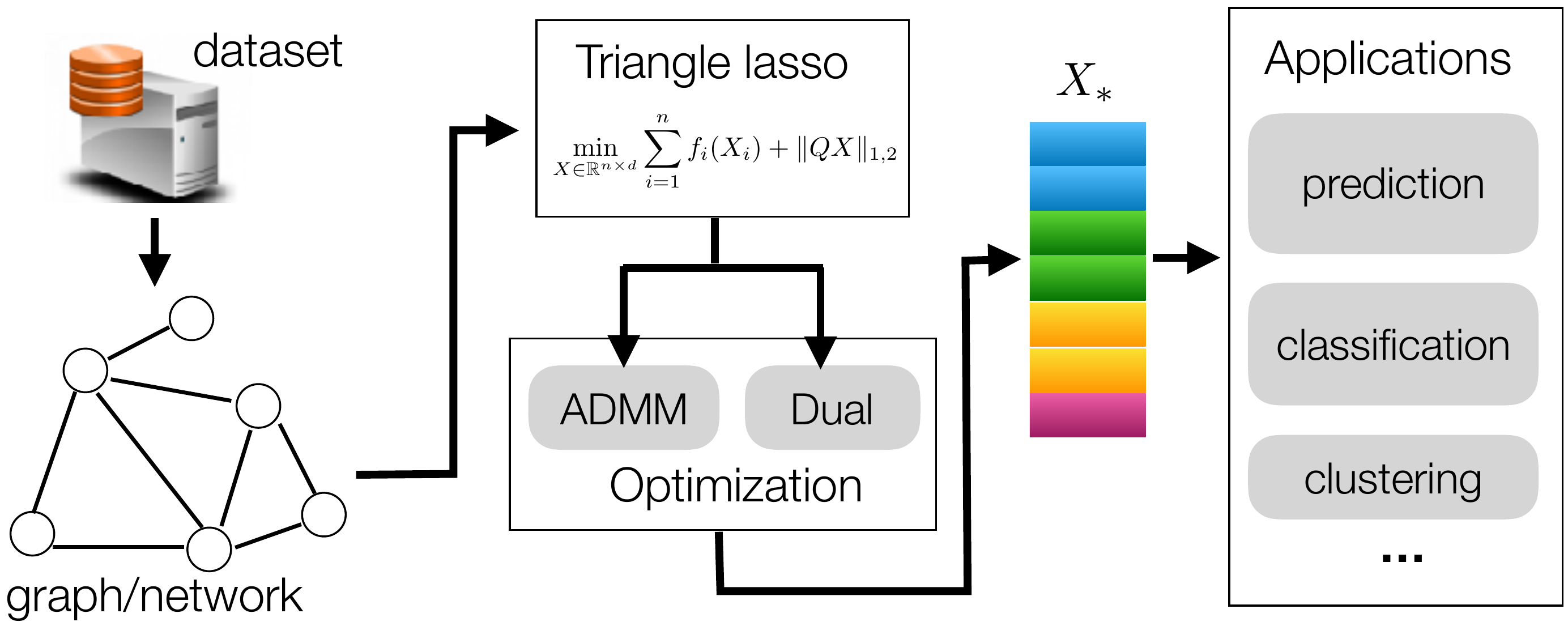}
\caption{The illustration of the workflow of triangle lasso. }
\label{figure_illustration_triangle_lasso_framework}
\end{figure}

\subsection{Applications}
The previous researches including network lasso and convex clustering are the special cases of the triangle lasso. If we force $g_{i,j}(X_i,X_j) = g_0(X_i,X_j)$, the triangle lasso degenerates to the network lasso. If we further force $f_i(X_i) = \lVert  X_i-A_i \rVert_2^2$, the triangle lasso degenerates to the convex clustering.  Specifically, we take the ridge regression and convex clustering as examples to illustrated triangle lasso in more details. 

\textbf{Ridge regression.} In a classic ridge regression task, the loss function is 
\begin{eqnarray}
\setlength{\abovedisplayskip}{0pt}
\setlength{\belowdisplayskip}{0pt}
\nonumber
\min_{\tiny x\in \mathbb{R}^{1\times d}} \frac{1}{n}\sum\limits_{i=1}^n \lVert  A_i x^\mathrm{T} - y_i \rVert_2^2+\gamma \lVert  x  \rVert^2_2.
\end{eqnarray}  Here, $A_i\in\mathbb{R}^{1\times d}$ represents the $i$-th instance in the data matrix, $y\in\mathbb{R}^{n\times1}$ is the response matrix, and $x$ is the need-to-learn weight.  $n$ represents the number of instances, and $\gamma$ with $\gamma > 0$ is the regularization coefficient to avoid overfitting. Note that $\gamma$ is a hyper-parameter introduced by the formulation of the ridge regression, not introduced by triangle lasso.
 We thus instantiate (\ref{equa_triangle_lasso}) as:
\begin{eqnarray}
\setlength{\abovedisplayskip}{0pt}
\setlength{\belowdisplayskip}{0pt}
\nonumber
\min_{\tiny X \in \mathbb{R}^{n\times d}} \frac{1}{n}\sum\limits_{i=1}^n\lVert  A_i X_i^\mathrm{T}  - y_i \rVert_2^2+\gamma \lVert  X  \rVert^2_F+\lVert  Q X  \rVert_{1,2}.
\end{eqnarray} Here, $A\in\mathbb{R}^{n\times d}$ is the stack of  the instances $A_i$ with $1\le i \le n$. $X$ is the stack of weights of those instances. That is, the $i$-th row of $X$, namely $X_i\in\mathbb{R}^{1\times d}$ is the weight of  $A_i$. In this case, 
\begin{eqnarray}
\setlength{\abovedisplayskip}{0pt}
\setlength{\belowdisplayskip}{0pt}
\nonumber
f_i(X_i,y_i) = \frac{1}{n} \lVert  A_i X_i^\mathrm{T}  - y_i \rVert_2^2+\frac{\gamma}{n} \lVert  X_i  \rVert^2_2.
\end{eqnarray}

\textbf{Convex clustering.} In a convex clustering task, the loss function is:
\begin{eqnarray}
\setlength{\abovedisplayskip}{0pt}
\setlength{\belowdisplayskip}{0pt}
\nonumber
\min_{\tiny X \in \mathbb{R}^{n\times d}}  \lVert  X - A \rVert_F^2+\alpha \sum\limits_{1\le i < j \le m}\lVert  X_i - X_j \rVert_2.
\end{eqnarray} Here,  $X$ is the need-to-learn weights. $m$ is the number of edges in the graph. $\alpha$ with $\alpha>0$  is used to control the number of clusters. Note that $\alpha$ is a parameter in the formulation of convex clustering, which can be varied to control the number of clusters. Different from the case of ridge regression,  $\alpha$ is usually increased heuristically in order to obtain a cluster path. Additionally, the $i$-th row of the optimal $X_\ast$ is the label of the instance $A_i$. If two instances $A_i$ and $A_j$ have the identical labels, it means that they belong to a cluster.  In this case,
\begin{eqnarray}
\setlength{\abovedisplayskip}{0pt}
\setlength{\belowdisplayskip}{0pt}
\nonumber
f_i(X_i) = \lVert  X_i-A_i \rVert_2^2.
\end{eqnarray} The final formulation of convex clustering is:
\begin{eqnarray}
\setlength{\abovedisplayskip}{0pt}
\setlength{\belowdisplayskip}{0pt}
\nonumber
\min_{\tiny X \in \mathbb{R}^{n\times d}}  \lVert  X - A \rVert_F^2+\lVert  QX \rVert_{1,2}.
\end{eqnarray}

Note that triangle lasso generally outperforms the classic convex clustering on recovering the correct clustering membership. We present more explanations from two views.
\begin{itemize}
\item Intuitively, triangle lasso uses the sum-of-norms regularization to obtain the clustering result, which is similar to the convex clustering. On the other hand, triangle lasso considers the neighbouring information of vertices, and uses it in the regularization. Since network science has claimed that the neighbours of vertices is essential to measure its importance in a graph \cite{Prell:2011:SNA:2222515}, triangle lasso has advantages on finding the similarity among instances over the classic convex clustering.  
\item Mathematically, triangle lasso gives large weights to a regularized item (see the equation (\ref{equa_triangle_lasso_large_regularization})), if they have many common neighbours. That is, such the regularized item is punished more than other items during the optimization procedure, which makes the vertices tend to be similar or even identical. This is different from the convex clustering because that convex clustering views each a regularized item equally, which ignores their neighbouring relationship.  
\end{itemize}

\textbf{Demo example.} To make it more clear, we take the house price prediction as a demo example to explain triangle lasso. This example is one of empirical studies in Section \ref{sect_empirical_studies}. We need to predict the price of houses in the Greater Sacramento area by using a ridge regression model. Our target is to learn the weight for each house.  Generally, the houses, which are located to a district, should use similar or identical weights. Those located in different districts should use different weights.  As illustrated in Figure \ref{figure_example_house}, triangle lasso will yield a weight for each house, and those weights can be used to as a label to obtain multiple clusters. The houses belonging to a cluster use an identical weight. We can adjust $\alpha$ to obtain different number of clusters. With the increase of $\alpha$, more houses are fused to a cluster.

\subsection{Symbols and their notations}
To make it easy to read, we present the symbols and their notations in  Table \ref{table_symbols}. Since the vector operation is usually easier to be understood and performed than the matrix operation. We tend to use vector operation replacing of the matrix operation in the paper equivalently. In other words, when we need to handle a matrix, we usually use its column stacking vectorization replacing of itself. For example, when we need to obtain the gradient of with respect to a matrix, we usually use $\partial f(vec(X))$ to replace $\partial f(X)$ for simplicity. In the paper, a matrix is viewed equivalent to its vectorization. For example, $f(X)$ is equivalent to $f(vec(X))$ because that we can transform them without any ambiguity. Finally, we use  the notation $f(X)$ in both supervised and unsupervised learning tasks for math brevity.

\begin{table}[!h]
\centering
\caption{The symbols and their notations}
\label{table_symbols}
\begin{tabular}{c|c}
\hline 
Symbols & Notations\tabularnewline
\hline 
\hline 
$\mathcal{V}$ & The vertex set containing $n$ vertices\tabularnewline
\hline 
$v_{i}$ & The $i$th vertex\tabularnewline
\hline 
$\mathcal{E}$ & The edge set containing $m$ edges\tabularnewline
\hline 
$e_{ij}$ & The edge connecting $v_i$ and $v_j$\tabularnewline
\hline 
$A$ & The data matrix\tabularnewline
\hline 
$A_i$ & The $i$-th instance\tabularnewline
\hline 
$y_i$ & The response of $A_i$\tabularnewline
\hline 
$X$ & The weight matrix\tabularnewline
\hline 
$X_i$ & The weights of the $v_i$\tabularnewline
\hline 
$N(X_i)$ & The neighbours of $X_i$\tabularnewline
\hline 
$w_{ij}$ & The weight corresponding to the edge $e_{ij}$\tabularnewline
\hline 
\multirow{1}{*}{$f^\ast(\cdot)$} & The convex conjugate of $f$\tabularnewline
\hline 
$vec(\cdot)$ & The column stacking vectorization\tabularnewline
\hline 
$\otimes$ & The kronecker product\tabularnewline
\hline 
$\circ$ & The element-wise product\tabularnewline
\hline 
$\parallel \cdot \parallel$ & The $l_2$ norm of a matrix defaultly\tabularnewline
\hline 
$\parallel \cdot \parallel_F$ & The Frobenius norm of a matrix\tabularnewline
\hline 
$\parallel \cdot \parallel_\ast$ & The dual norm\tabularnewline
\hline 
$\mathbf{1}$ & The  matrix whose elements are $1$\tabularnewline
\hline 
$I_d$ & The $d \times d$ unit matrix\tabularnewline
\hline 
$\alpha$ & The regularization coefficient\tabularnewline
\hline 
$t$ & The $t$th iteration of ADMM\tabularnewline
\hline 
$\rho$ & The step length of ADMM\tabularnewline
\hline 
$\partial$ & The sub-gradient operator\tabularnewline
\hline 
$\lambda, U$ & The dual variable\tabularnewline
\hline 
$\lambda_i$ & The $i$th row of $\lambda$\tabularnewline
\hline 
\textbf{Prox($\cdot$)} & The proximal operator\tabularnewline
\hline 
\end{tabular}
\end{table}

\section{ADMM method for the moderately accurate solution}
\label{sect_admm}
In this section, we present our ADMM method to solve triangle lasso. First, we present the details of our ADMM method as a general framework. Second, we present an example to make our method easy to understand. Finally, we discuss the convergence and the stopping criterion of our method.

\subsection{Details}
Before presentation of our method, we need to re-formulate the unconstrained optimization (\ref{equa_triangle_lasso}) to be a constrained  problem equivalently:
\begin{eqnarray}
\setlength{\abovedisplayskip}{0pt}
\setlength{\belowdisplayskip}{0pt}
\label{equa_triangle_lasso_constrained}
\min_{\tiny X\in \mathbb{R}^{n\times d}, Z\in \mathbb{R}^{m\times d}} f(X)+g(Z)
\end{eqnarray} subject to: $$QX-Z=0$$ where $f(X) = \sum\limits_{i=1}^n f_{i}(X_i)$ and $g(Z) = \lVert Z\rVert_{1,2}$. Suppose the Lagrangian dual variable is denoted by $U$ with $U\in \mathbb{R}^{n\times d}$. Its augmented Lagrangian multiplier is:
\begin{eqnarray}
\setlength{\abovedisplayskip}{0pt}
\setlength{\belowdisplayskip}{0pt}
\nonumber
&&L_\rho(X,Z, U) \\ \nonumber
&=& f(X) \mathrm{+} g(Z) \mathrm{+} \mathbf{1}_{1\times n}(U\circ(QX\mathrm{-}Z))\mathbf{1}_{p\times 1}\mathrm{+}\frac{\rho}{2}\lVert QX\mathrm{-}Z  \rVert_F^2
\end{eqnarray} where $\rho$ is a positive number.

\textbf{Update of $X$}. The basic update of $X$ is:
\begin{eqnarray}
\setlength{\abovedisplayskip}{0pt}
\setlength{\belowdisplayskip}{0pt}
\nonumber
X^{t+1} = \mathop{\arg\min}\limits_{X} L_\rho(X,Z^t, U^t)
\end{eqnarray} where $t$ represents the $t$-th iteration. Suppose $h(X) = \mathbf{1}_{1\times n}(U^t\circ(QX\mathrm{-}Z^t))\mathbf{1}_{p\times 1}+\frac{\rho}{2}\lVert QX\mathrm{-}Z^t  \rVert_F^2$. Discarding constant items, we obtain
\begin{eqnarray}
\setlength{\abovedisplayskip}{0pt}
\setlength{\belowdisplayskip}{0pt}
\label{equa_update_X}
X^{t+1} = \mathop{\arg\min}\limits_{X} f(X) + h(X).
\end{eqnarray} Apparently, $h(X)$  is strongly convex and smooth. Therefore, the hardness   of  the update of $X$ is dominated by $f(X)$.

\underline{Convex case.} If $f(X)$ is convex, it is easy to know that $f(X)+h(X)$ is convex too. Thus, it is not difficult to obtain $X^{t+1}$ by solving the convex optimization (\ref{equa_update_X}). For a general convex case, we can update $X$ by solving the following equality:
\begin{eqnarray}
\setlength{\abovedisplayskip}{0pt}
\setlength{\belowdisplayskip}{0pt}
\nonumber
\partial f(X^{t+1}) + \partial h(X^{t+1}) = 0
\end{eqnarray}  where $\partial$ represents the sub-gradient operator.

\underline{Non-convex case.} If $f(X)$ is non-convex, $f(X)+h(X)$ may not be convex. Thus, the global minimum of (\ref{equa_update_X}) is not guaranteed, and we have to obtain a local minimum. Considering that the non-convex optimization may be much more difficult than the convex case, the update of $X$ may be time-consuming. Since the Lagrangian dual of (\ref{equa_update_X}) is always convex, we update $X$ via the dual problem of (\ref{equa_update_X}). 

Before presentation of the method, we need to transform (\ref{equa_update_X}) to be a constrained problem equivalently:
\begin{eqnarray}
\setlength{\abovedisplayskip}{0pt}
\setlength{\belowdisplayskip}{0pt}
\nonumber
\min_{X,Y} f(Y) + h(X)
\end{eqnarray} subject to: $$Y-X=0.
$$ Its Lagrangian multiplier is:
\begin{eqnarray}
\setlength{\abovedisplayskip}{0pt}
\setlength{\belowdisplayskip}{0pt}
\nonumber
L(Y,X,\lambda) &=& f(Y) + h(X) + \mathbf{1}_{1\times n}(\lambda \circ (Y-X))\mathbf{1}_{p\times 1} \\ \nonumber
&=& f(vec(Y)) + h(vec(X)) + vec^{\mathrm{T}}(\lambda)vec(Y) \\ \nonumber
&-& vec^{\mathrm{T}}(\lambda)vec(X)
\end{eqnarray} where $vec(\cdot)$ represents the column stacking vectorization of a matrix. Therefore, the Lagrangian dual is:
\begin{eqnarray}
\setlength{\abovedisplayskip}{0pt}
\setlength{\belowdisplayskip}{0pt}
\nonumber
D(vec(\lambda)) &=& \inf_{vec(Y)} f(vec(Y)) + vec^{\mathrm{T}}(\lambda)vec(Y) \\ \nonumber 
&+& \inf_{vec(X)} h(vec(X)) -vec^{\mathrm{T}}(\lambda)vec(X)\\ \nonumber
&=& -f^\ast(-vec(\lambda)) - h^\ast(vec(\lambda))
\end{eqnarray} where $f^\ast(\cdot)$ is the convex conjugate of $f(\cdot)$, and $h^\ast(\cdot)$ is the convex conjugate of $h(\cdot)$. Generally, the convex conjugate function $f^\ast(x)$ is defined as $f^\ast(y) = \sup_{x}(y^{\mathrm{T}}x - f(x))$. Thus, the dual problem is:
\begin{eqnarray}
\setlength{\abovedisplayskip}{0pt}
\setlength{\belowdisplayskip}{0pt}
\nonumber
\min_{\lambda} f^\ast(-vec(\lambda)) + h^\ast(vec(\lambda)).
\end{eqnarray} Since the dual problem is always convex, it is easy to obtain its global minimum $\lambda_\ast$. According to the KKT conditions, we obtain $X^{t+1}$ by solving:
\begin{eqnarray}
\setlength{\abovedisplayskip}{0pt}
\setlength{\belowdisplayskip}{0pt}
\nonumber
\partial h(X^{t+1}) - \lambda_\ast=0.
\end{eqnarray} 

In some non-convex cases of $f(X)$, we can still obtain the global minimum when there is no duality gap, i.e. strong duality. There are various methods to verify whether there is duality gap. It is out of the scope of the paper, we recommend readers to refer the related books \cite{Bertsekas20046}.

\textbf{Update of $Z$}. $g(Z)$ is a sum-of-norms regularization, which is convex but not smooth. It is not differentiable when arbitrary two rows of $Z$ are identical. Unfortunately, we encourage the rows of $Z$ becomes identical in order to find the similar instances. Therefore, it is non-trivial  to obtain the global minimum $Z_\ast$ in the triangle lasso. In the paper, we obtain the closed form of $Z_\ast$ via the proximal operator of a sum-of-norms function.  

The basic update of $Z$ is 
\begin{eqnarray}
\setlength{\abovedisplayskip}{0pt}
\setlength{\belowdisplayskip}{0pt}
\nonumber
&&Z^{t+1} =\mathop{\arg\min}\limits_{Z} L_\rho(X^{t+1},Z, U^t)\\ \nonumber
&=& \mathop{\arg\min}\limits_{Z} g(Z) - \mathbf{1}_{1\times n}(U^t\circ Z)\mathbf{1}_{p\times 1}+\frac{\rho}{2}\lVert QX^{t+1}\mathrm{-}Z  \rVert_F^2.
\end{eqnarray} Discarding the constant item, we obtain
\begin{eqnarray}
\setlength{\abovedisplayskip}{0pt}
\setlength{\belowdisplayskip}{0pt}
\nonumber
&&vec(Z^{t+1}) \\ \nonumber
&=&\mathop{\arg\min}\limits_{vec(Z)}  g(vec(Z)) - vec^{\mathrm{T}}(U^t)vec( Z)\\ \nonumber
&+&\frac{\rho}{2} \left(vec^{\mathrm{T}}(Z)vec(Z) - 2vec^{\mathrm{T}}(QX^{t+1})vec(Z) \right) \\ \nonumber
&=&\mathop{\arg\min}\limits_{vec(Z)}  g(vec(Z)) \\ \nonumber
&+& \frac{\rho}{2}\left \lVert vec(Z) - \left(vec\left(QX^{t+1}\right)+\frac{1}{\rho}vec(U^t) \right)\right \rVert^2\\ \nonumber
&=&\mathbf{Prox}_{\rho,g}\left(  vec(QX^{t+1}) + \frac{1}{\rho} vec(U^t)  \right).
\end{eqnarray}  $\mathbf{Prox}_{\rho,g}(\cdot)$ is the proximal operator of $g(\cdot)$ with the efficient $\rho$ which is defined as: $\mathbf{Prox}_{\nu,\phi}(v) = \mathop{\arg\min}\limits_{x} \phi(x) + \frac{\nu}{2}\lVert x-v \rVert^2$. Considering $g(Z)$ is a sum-of-norms function, its proximal operator has a closed form \cite{Parikh2014Proximal}, that is:
\begin{eqnarray}
\setlength{\abovedisplayskip}{0pt}
\setlength{\belowdisplayskip}{0pt}
\nonumber
&&Z_i^{t+1} = [\mathbf{Prox}_{\rho,g}(Z^{t+1})]_i \\ \nonumber 
&=& \left( 1-\frac{1}{\lVert \left( \rho QX^{t+1} +U^t \right)_i \rVert}  \right)_{+} (QX^{t+1}+\frac{1}{\rho} U^t)_i \\ \nonumber
&=& \max\left\{0, 1-\frac{1}{\lVert \left( \rho QX^{t+1} +U^t \right)_i \rVert}  \right\} \left(QX^{t+1}+\frac{1}{\rho} U^t \right)_i.
\end{eqnarray}   Here, the subscript `$+$' represents non-negative value for each element in the matrix. The subscript `$i$' with $1\le i \le m$ represents the $i$-th row of a matrix. If some elements are negative, their values will be set to be zeros. Otherwise, the positive value will be reserved.  

\textbf{ Update of $U$.} $U$ is updated by the following rule:
\begin{eqnarray}
\setlength{\abovedisplayskip}{0pt}
\setlength{\belowdisplayskip}{0pt}
\label{equa_update_U}
U^{t+1} = U^t + \rho (QX^{t+1}-Z^{t+1}).
\end{eqnarray}

\subsection{Examples}
To make our ADMM easy to understand, we take the ridge regression as an example to show the details. As we have shown in Section \ref{sect_formulation}, the optimization objective function is:
\begin{eqnarray}
\setlength{\abovedisplayskip}{0pt}
\setlength{\belowdisplayskip}{0pt}
\nonumber
\min_{\tiny X \in \mathbb{R}^{n\times d}} \sum\limits_{i=1}^n\lVert  A_i X_i^\mathrm{T}  - y_i \rVert_2^2+ \gamma \lVert X  \rVert_F^2 +  \lVert  Q X  \rVert_{1,2}.
\end{eqnarray} We thus obtain $f(X) = \sum\limits_{i=1}^n\lVert  A_i X_i^\mathrm{T}  - y_i \rVert_2^2+ \gamma \lVert X  \rVert_F^2$ which is convex and smooth. Therefore, the update of $X^{t+1}$ is to solve the following equalities:
\begin{eqnarray}
\setlength{\abovedisplayskip}{0pt}
\setlength{\belowdisplayskip}{0pt}
\nonumber
(A_i X_i^\mathrm{T}  - y_i )A_i + \gamma X_i = 0, {~~~~} 1\le i \le n.
\end{eqnarray} The update of $Z^{t+1}$ is independent to $f(X)$, and $U^{t+1}$ is easy to understand. We do not re-write them again.

\subsection{Convergence and stopping criterion}
\label{subsection_admm_stop_criterion}
When $f(X)$ and $g(Z)$ are convex, the ADMM method is convergent \cite{Boyd:2011}. Recently, many researches have investigated the convergence of ADMM \cite{Nishihara,tongda}.  But, it is non-trivial to obtain the convergence rate for a general $f(X)$ and $g(Z)$. In triangle lasso, $g(Z)$ is convex but not smooth. The convergence rate is impacted by the convexity of $f(X)$ and the matrix $Q$. Many previous researches have claimed that if $f(X)$ is smooth and $Q$ is  row full rank, the ADMM will obtain a linear convergence rate \cite{Hong2017}.  

 The basic ADMM has its stopping criterion \cite{Boyd:2011}. But, we can re-define the stopping criterion of ADMM in triangle lasso for some specific tasks to gain a high efficiency. Taking convex clustering as an example, we do not care the specific value of $X$. All we want to obtain is the clustering result. If two rows of $X$ are identical, the corresponding instances belong to a cluster.  If the clustering result keeps same between two iterations, we can stop the method when $X$ is close to the minimum. Finally, our ADMM method is illustrated in Algorithm \ref{algo_admm}.
\begin{algorithm}[t]
    \caption{ADMM for the triangle lasso}
    \label{algo_admm}
    \begin{algorithmic}[1]
        \Require The data matrix $A\in\mathbb{R}^{n\times d}$, and a positive $\alpha$. $t=0$.
        \State Initialize $X^0$, $Z^0$, and $U^0$.
        \For {Stopping criterion is not satisfied}
            \If {$f(X)$ is convex}
                \State Update $X^{t+1 }$ by solving $\partial f(X^{t+1}) + \partial h(X^{t+1}) = 0$.
            \EndIf
            \If {$f(X)$ is non-convex}
                \State $\lambda_\ast = \mathop{\arg\min}\limits_{\lambda} f^\ast(-vec(\lambda)) + h^\ast(vec(\lambda))$.
                \State Update $X^{t+1}$ by solving $\partial h(X^{t+1}) - \lambda^{\ast}=0$.
            \EndIf
            \State $Z_i^{t+1} \mathrm{=} \max\left\{0, 1\mathrm{-}\frac{1}{\lVert \left( \rho QX^{t+1} +U^t \right)_i \rVert}  \right\} \left(QX^{t+1}\mathrm{+}\frac{1}{\rho} U^t \right)_i$ with $1\le i \le m$.
            \State $U^{t+1} = U^t + \rho (QX^{t+1}-Z^{t+1})$.
            \State $t=t+1$;
        \EndFor
        \State\Return The final value of $X$.
    \end{algorithmic}
\end{algorithm}

\section{Dual method for the accurate solution}
\label{sect_dual}
Although our ADMM is efficient to yield a moderately  accurate solution, it is necessary to provide an efficient method to obtain the accurate solution in some applications. In the section, we transform (\ref{equa_triangle_lasso}) to be a second-order cone programming problem, and develop a method to solve it in the dual space. First, we first present the details of our Dual method. Second, we use an example to explain our dual method.

\subsection{Details}
We first re-formulate (\ref{equa_triangle_lasso}) to be a constrained optimization problem equivalently.  
\begin{eqnarray}
\setlength{\abovedisplayskip}{0pt}
\setlength{\belowdisplayskip}{0pt}
\nonumber
\min_{X,Z} f(X) + g(Z)
\end{eqnarray} subject to: $$vec(QX)-vec(Z)=0.$$ Its Lagrangian multiplier is:
\begin{eqnarray}
\setlength{\abovedisplayskip}{0pt}
\setlength{\belowdisplayskip}{0pt}
\nonumber
L(X,Z,\lambda) = f(X)+g(Z)+vec^{\mathrm{T}}(\lambda)(vec(QX)-vec(Z)). 
\end{eqnarray} Thus, the dual optimization objective function is:
\begin{eqnarray}
\setlength{\abovedisplayskip}{0pt}
\setlength{\belowdisplayskip}{0pt}
\nonumber
D(\lambda) &=& \inf_{X}f(vec(X))+vec^{\mathrm{T}}(\lambda) vec(QX)  \\ \nonumber  
&+& \inf_{Z} g(Z) - vec^{\mathrm{T}}(\lambda) vec(Z) \\ \nonumber
&=& \inf_{X} f(vec(X)) + vec^{\mathrm{T}}(\lambda) ((I_{d} \otimes Q)vec(X)) \\ \nonumber
&+& \inf_{Z} \sum\limits_{i=1}^m g_i(Z_i) - \lambda_i Z_i^\mathrm{T}\\ \nonumber
&=& -f^{\ast}(-vec^{\mathrm{T}}(\lambda)(I_d \otimes Q) ) - \sum\limits_{i=1}^m g_i^\ast (\lambda_i)
\end{eqnarray} Here, $Z_i$ and $\lambda_i$ represent the $i$-th row of $Z$ and $\lambda$, respectively. $g_i(\lambda_i) = \lVert \lambda_i \rVert$ holds, and $g_i^\ast(\lambda_i)$ is its convex conjugate.  Thus, we obtain
\begin{eqnarray}
\setlength{\abovedisplayskip}{0pt}
\setlength{\belowdisplayskip}{0pt}
\nonumber
g_i^\ast (\lambda_i) = \left\{ 
\begin{array}{ll}
0, {~~~~~} \lVert  \lambda_i \rVert_\ast \le 1\\
\infty, {~~~~} otherwise
\end{array} \right.
\end{eqnarray}
where $\lVert \cdot \rVert_\ast$ denotes the dual norm of $\lVert \cdot \rVert$. Since the dual norm  of the $l_2$ norm is still the $l_2$ norm, its dual problem is:
\begin{eqnarray}
\setlength{\abovedisplayskip}{0pt}
\setlength{\belowdisplayskip}{0pt}
\label{equa_triangle_lasso_dense}
\min_\lambda   f^\ast(-vec^{\mathrm{T}}(\lambda)(I_d \otimes Q))
\end{eqnarray} subject to:
\begin{eqnarray}
\setlength{\abovedisplayskip}{0pt}
\setlength{\belowdisplayskip}{0pt}
\nonumber
\lVert  \lambda_i \rVert_2 \le 1, {~~~~~} 1\le i\le m.
\end{eqnarray} After that, we can obtain the optimal $X_\ast$ by solving 
\begin{eqnarray}
\setlength{\abovedisplayskip}{0pt}
\setlength{\belowdisplayskip}{0pt}
\label{equa_recover_x_dual}
\partial f(vec(X_\ast)) + (I_d \otimes Q)^\mathrm{T} vec(\lambda_\ast) = 0.
\end{eqnarray} Here, $\lambda_\ast$ is the minimizer of (\ref{equa_triangle_lasso_dense}).  Since the  conjugate function $f^\ast(\cdot)$ is always convex no matter whether $f(\cdot)$ is convex. The dual problem (\ref{equa_triangle_lasso_dense}) is easier to be solved than the primal problem. If there is no duality gap between (\ref{equa_triangle_lasso}) and (\ref{equa_triangle_lasso_dense}), the global minimum of the primal problem  (\ref{equa_triangle_lasso})  can be obtained from the solution of the dual problem (\ref{equa_triangle_lasso_dense})  according to (\ref{equa_recover_x_dual}).

\begin{theorem}
\label{theorem_conjugate_sum}
The conjugate of the sum of the independent convex functions is the sum of their conjugates. Here, "independent" means that they have different variables \cite{Boyd:2004}.
\end{theorem}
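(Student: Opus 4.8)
This is a direct consequence of the definition of the convex conjugate together with the elementary fact that optimizing a separable objective over a product domain decouples across blocks. The plan is as follows. Write the composite variable as $x=(x_1,\dots,x_k)$ with each $x_i$ ranging over its own space, so that $F(x)=\sum_{i=1}^k f_i(x_i)$, and let $y=(y_1,\dots,y_k)$ be partitioned conformally. I would start from $F^\ast(y)=\sup_{x}\big(\langle y,x\rangle-F(x)\big)$, expand $\langle y,x\rangle=\sum_{i=1}^k \langle y_i,x_i\rangle$, and regroup the objective as $\sum_{i=1}^k\big(\langle y_i,x_i\rangle-f_i(x_i)\big)$, a sum of terms each depending on a single block $x_i$.

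The key step is then the interchange $\sup_{(x_1,\dots,x_k)}\sum_{i=1}^k\phi_i(x_i)=\sum_{i=1}^k\sup_{x_i}\phi_i(x_i)$, valid for arbitrary extended-real-valued $\phi_i$: the inequality ``$\le$'' holds term by term, while ``$\ge$'' follows by choosing for each $i$ a point $x_i$ (or a maximizing sequence) that brings $\phi_i(x_i)$ arbitrarily close to its supremum and assembling these independent choices into a single feasible $x$. Applying this with $\phi_i(x_i)=\langle y_i,x_i\rangle-f_i(x_i)$ yields $F^\ast(y)=\sum_{i=1}^k\sup_{x_i}\big(\langle y_i,x_i\rangle-f_i(x_i)\big)=\sum_{i=1}^k f_i^\ast(y_i)$, which is the claim. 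Note that convexity of the $f_i$ is not actually needed for the identity itself; it matters only for ensuring the conjugates are proper and for the absence of a duality gap when the identity is invoked later in Sections \ref{sect_admm} and \ref{sect_dual}.

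The main (and essentially only) subtlety, which I would spell out explicitly so the regrouping of the supremum is rigorous, is the bookkeeping with $\pm\infty$: if some $\sup_{x_i}\phi_i(x_i)=+\infty$ then both sides equal $+\infty$ and there is nothing further to check; if all the suprema are finite the near-optimizer assembling argument applies verbatim. I expect this to be the hardest part only in the sense that it is the one place where the argument is not a one-line computation; everything else is immediate from the definitions.
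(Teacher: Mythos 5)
Your argument is correct and is exactly the standard decoupling-of-the-supremum proof that the paper itself relies on by citation to \cite{Boyd:2004} rather than proving in-line, so there is nothing to reconcile. Your extra care with the $\pm\infty$ bookkeeping and the observation that convexity is not needed for the identity itself are both sound refinements of that same approach.
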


According to Theorem \ref{theorem_conjugate_sum}, if $f(x)$ is separable, that is, $f(X_1, ..., X_i, ..., X_m) = \sum\limits_{i=1}^m f_i(X_i)$, we have $f^\ast(X_1, ..., X_i, ..., X_m) = \sum\limits_{i=1}^m f^\ast_i(X_i)$. We can obtain the solution of  (\ref{equa_triangle_lasso_dense}) by solving each component $f^\ast_i(\cdot)$ with $1\le i\le m$ independently. But, when the $f(x)$ is not separable, we have to solve (\ref{equa_triangle_lasso_dense}) as an entire problem. Unfortunately,  it may be time-consuming to solve (\ref{equa_triangle_lasso_dense}) for a large dense graph because that we have to optimize a large number of variables, i.e. $O(md)$. But, we can divide the graph to multiple sub-graphs, and solve (\ref{equa_triangle_lasso_dense})  for each sub-graph. Repeating those steps for different graph partitions, we can refine the final solution  of (\ref{equa_triangle_lasso_dense}). Finally, the details of our dual method is illustrated in Algorithm \ref{algo_dc_triangle_lasso}.
\begin{algorithm}[t]
    \caption{Dual method for the triangle lasso}
    \label{algo_dc_triangle_lasso}
    \begin{algorithmic}[1]
        \Require The data matrix $A\in\mathbb{R}^{n\times d}$, the graph $\mathcal{G}$, and a positive $\alpha$. 
        \State Solve (\ref{equa_triangle_lasso_dense}) for $\mathcal{G}$, and obtain $\lambda_\ast$.
        \State Obtain the optimal $X_\ast$ by solving (\ref{equa_recover_x_dual}).
        \end{algorithmic}
\end{algorithm}

\subsection{Example}
To make it easy to understand, we take the ridge regression as an example to show the details of the method. As we have shown in Section \ref{sect_formulation}, the optimization objective function is:
\begin{eqnarray}
\setlength{\abovedisplayskip}{0pt}
\setlength{\belowdisplayskip}{0pt}
\nonumber
\min_{\tiny X \in \mathbb{R}^{n\times d}} \sum\limits_{i=1}^n\lVert  A_i X_i^\mathrm{T}  - y_i \rVert_2^2+ \gamma \lVert X  \rVert_F^2 +   \lVert  Q X  \rVert_{1,2}.
\end{eqnarray} We thus obtain
\begin{eqnarray}
\setlength{\abovedisplayskip}{0pt}
\setlength{\belowdisplayskip}{0pt}
\nonumber
f(vec(X)) &=& \sum\limits_{i=1}^n\lVert  A_i X_i^\mathrm{T}  - y_i \rVert_2^2+ \gamma \lVert X  \rVert_F^2 \\ \nonumber
&=&  vec^{\mathrm{T}}(X)\Omega vec(X) - 2 \Phi vec(X).
\end{eqnarray} Here, $\Delta = (\mathbf{1}_{1\times d} \otimes I_n)\text{diag}(vec(A))$, $\Omega = \Delta^{\mathrm{T}}\Delta + \gamma I_{nd}$ and $\Phi = y^{\mathrm{T}}\Delta$. $\text{diag}(vec(A))$ yields a diagional matrix consisting of $vec(A)$. Discarding the constant item, we obtain 
\begin{eqnarray}
\setlength{\abovedisplayskip}{0pt}
\setlength{\belowdisplayskip}{0pt}
\nonumber
f^\ast(\theta) = \frac{1}{4} \left( \theta^{\mathrm{T}}  \Omega^{-1} \theta + 4 \Phi \Omega^{-1} \theta  \right).
\end{eqnarray} Substituting $\theta$ with $-(I_d \otimes Q^{T})vec(\lambda)$, we obtain the equivalent formulation is: 
\begin{eqnarray}
\setlength{\abovedisplayskip}{0pt}
\setlength{\belowdisplayskip}{0pt}
\nonumber
\min_{\lambda\in\mathbb{R}^{m\times d}} &&  vec^{\mathrm{T}}(\lambda)  (I_d \otimes Q)  \Omega^{-1}  (I_d \otimes Q)^{\mathrm{T}}   vec(\lambda) \\ \nonumber 
&-& 4 \Phi \Omega^{-1} (I_d \otimes Q)^{\mathrm{T}} vec(\lambda)
\end{eqnarray} subject to 
\begin{eqnarray}
\setlength{\abovedisplayskip}{0pt}
\setlength{\belowdisplayskip}{0pt}
\nonumber
\lVert  \lambda_i \rVert_2 \le 1, {~~~~~} 1\le i\le m.
\end{eqnarray}  After solving this equivalent optimization problem, we obtain the optimal $\lambda$, namely $\lambda_\ast$. Finally, the optimal $X$ is 
\begin{eqnarray}
\setlength{\abovedisplayskip}{0pt}
\setlength{\belowdisplayskip}{0pt}
\nonumber
vec(X_\ast) = \frac{1}{2} \Omega^{-1} \left(-(I_d \otimes Q^{\mathrm{T}})vec(\lambda_\ast)   + 2  \Phi^{\mathrm{T}}   \right).
\end{eqnarray}

\section{Complexity analysis}
\label{sect_time_complexity}

In this section, we analyze the time complexity of the proposed methods, i.e., the ADMM method and the dual method, for the case of convex $f(\cdot)$. 

\subsection{Time complexity of the ADMM method}

Consider the ADMM method. It is time-consuming for the calculation of the gradient rather than  the matrix multiplication. The time complexity due to the calculation of the gradient per iteration  is $O(nd)$. Note that the number of the iterations dominates the total time complexity of the ADMM method. For example, if the number of iterations is $T$, the total time complexity is $O(Tnd)$.  Generally, the large number of iterations leads to a relatively accurate solution, which leads to high time complexity.  Before presenting the time complexity formally, we introduce some new notations.  $w^t\in\RR^{(nd+2md)\times 1}$ yielded by ADMM at the $t$-th iteration is defined as

\begin{align}
\nonumber
w^t := \lrincir{vec^T(X^t);vec^T(Z^t);vec^T(\lambda^t)}.
\end{align} 

Given a vector $w\in\RR^{(nd+2md)\times 1}$, $\lrnorm{w}_H^2$ is defined as
\begin{align}
\nonumber
\lrnorm{w}_H^2 := w^THw
\end{align} where $H$ is defined by 
\begin{align}
\nonumber
H := \begin{pmatrix}
 \textbf{0}_{nd\times nd}&  & \\ 
 & \rho I_{md} & \\ 
 &  & \frac{1}{\rho} I_{md}
\end{pmatrix}.
\end{align} 
Thus, when $f(\cdot)$ is convex, the total time complexity of our ADMM method is presented as the following theorem.
\begin{theorem}
When our ADMM is convergent satisfying $\lrnorm{w^t - w^{t+1}}_H^2 \le \epsilon$, the total time complexity of our ADMM is $O\lrincir{\frac{nd}{\epsilon}}$.
\end{theorem}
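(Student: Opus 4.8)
The plan is to combine a standard ADMM convergence rate with the per-iteration cost already computed. The overall strategy rests on two ingredients: first, an $O(1/t)$ (ergodic or best-iterate) rate for ADMM measured in the $H$-norm on the combined iterate $w^t$, which is the classical result of He and Yuan for convex $f$ and $g$; second, the per-iteration cost of $O(nd)$ established above (dominated by the gradient evaluation, with the proximal step on $Z$ and the dual update on $U$ both $O(md)$ and hence absorbed since $m = O(n)$ for sparse graphs constructed by KNN). I would state the rate precisely in the form that the monotone quantity $\lrnorm{w^t - w^{t+1}}_H^2$ decreases, and that after $T$ iterations $\min_{t\le T}\lrnorm{w^t - w^{t+1}}_H^2 \le C/T$ for a constant $C$ depending on the distance of the initialization to the optimal set.

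First I would recall that the ADMM iterates satisfy a Fej\'er-type monotonicity: the sequence $\lrnorm{w^t - w^\ast}_H^2$ is non-increasing, where $w^\ast$ is any KKT point, and moreover $\sum_{t=0}^\infty \lrnorm{w^t - w^{t+1}}_H^2 \le \lrnorm{w^0 - w^\ast}_H^2 =: C$. This is where the specific block structure of $H$ matters — the $\mathbf{0}_{nd\times nd}$ block reflects that no contraction is claimed in the $X$-coordinate directly, only in the $Z$ and $\lambda$ (scaled) coordinates, which is exactly the standard ADMM setup once $Q$ is not assumed full rank. Second, I would observe that $\lrnorm{w^t - w^{t+1}}_H^2$ is itself monotonically non-increasing in $t$ (a known refinement), so from the summability bound we get $\lrnorm{w^T - w^{T+1}}_H^2 \le C/(T+1)$. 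Hence to reach the stopping tolerance $\lrnorm{w^t - w^{t+1}}_H^2 \le \epsilon$ it suffices to take $T = O(C/\epsilon) = O(1/\epsilon)$ iterations.

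Third, I would multiply the iteration count by the per-iteration cost: $T = O(1/\epsilon)$ iterations, each costing $O(nd)$, yields total time complexity $O(nd/\epsilon)$, which is the claim. I would note that the constant $C$ is independent of $n$ and $d$ in the sense relevant here (it scales with the squared $H$-norm distance from $w^0$ to the solution set, treated as a problem-dependent constant), so it does not contribute additional $n$ or $d$ factors to the stated bound.

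The main obstacle I expect is justifying the monotonic decrease of the successive-difference quantity $\lrnorm{w^t - w^{t+1}}_H^2$ and pinning down that the relevant summability constant genuinely does not hide an $n$- or $d$-dependence — the first is a somewhat delicate but established fact about ADMM (it requires the variational-inequality reformulation of the ADMM step and a careful use of the update for $U$), and the second is really a modeling assumption about the initialization and the conditioning of $f$ that should be stated as such rather than proved. A secondary point to handle cleanly is the bookkeeping that the $Z$- and $U$-updates cost only $O(md)$ and that $m = O(n)$ under the graph-construction procedure of Section \ref{sect_formulation}, so that the gradient evaluation in the $X$-update is indeed the dominant $O(nd)$ term; everything else is routine.
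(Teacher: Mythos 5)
Your proposal is correct and follows essentially the same route as the paper: the paper simply cites the He--Yuan non-ergodic rate $\lrnorm{w^t - w^{t+1}}_H^2 \le \frac{1}{t+1}\lrnorm{w^0 - w_\ast}_H^2$ (Theorem 5.1 of the cited ADMM convergence work), treats $\lrnorm{w^0 - w_\ast}_H^2$ as a problem-dependent constant, and multiplies the resulting $O(1/\epsilon)$ iteration count by the $O(nd)$ per-iteration cost. Your reconstruction of that rate via Fej\'er monotonicity, summability of successive differences, and their monotone decrease, together with the explicit bookkeeping that the $Z$- and $U$-updates cost $O(md)$ with $m=O(n)$ for the KNN graph, just supplies details the paper leaves to the citation.
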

\begin{proof}

\cite{He2015} proves that $\lrnorm{w^t - w^{t+1}}_H^2 \le \frac{1}{t+1} \lrnorm{w^0 - w_{\ast}}_H^2$ holds when the Douglas-Rachford ADMM is performed for $t$ iterations (Theorem $5.1$ in \cite{He2015}). Our ADMM is its special case when $f(\cdot)$ is convex. Thus, given an $\epsilon>0$, to obtain $\lrnorm{w^t - w^{t+1}}_H^2 \le \epsilon$, our ADMM needs to be run for $\frac{1}{\epsilon}\lrnorm{w^0 - w_{\ast}}_H^2-1$ iterations. Since the time complexity per iteration is $O(nd)$, and $\lrnorm{w^0 - w_{\ast}}_H^2$ is a constant, the total time complexity is $O\lrincir{\frac{nd}{\epsilon}}$.

\end{proof}

\subsection{Time complexity of the dual method}
Consider the dual method. Before presenting the details of the complexity analysis. Let us present some basic definitions, which are widely used to analyze the performance of an optimization method theoretically \cite{Nocedal1999Numerical,Shalev2014Understanding,ShalevShwartz:2012dz,Hazan2016Introduction}.

\begin{definition}[$\zeta$-smooth]
\nonumber
A function $f: \Xcal\mapsto \RR$ is $\zeta$ ($\zeta>0$) smooth, if and only if, for any vecoters $x\in\Xcal$ and $y\in\Xcal$, we have $f(y)\le f(x)+\lrangle{\nabla f(x), y-x} + \frac{\zeta}{2}\lrnorm{y - x}^2$.
\end{definition}
\begin{definition}[$\varsigma$-strongly convex]
\nonumber
A function $f: \Xcal\mapsto \RR$ is $\varsigma $ ($\varsigma >0$) strongly convex, if and only if, for any vectors $x\in\Xcal$ and $y\in\Xcal$, we have $f(y)\ge f(x)+\lrangle{\nabla f(x), y-x} + \frac{\varsigma}{2}\lrnorm{y - x}^2$.
\end{definition} 
\begin{definition}
\nonumber
If a function $f(\cdot)$ is $\zeta$-smooth and $\varsigma$-strongly convex, its condition number $\kappa$ is defined by $\kappa := \frac{\zeta}{\varsigma}$.
\end{definition}

There are many tasks whose optimization objective function is smooth and strongly convex. Those tasks include convex clustering, ridge regression, $l_2$ norm regularized logistic regression etc. We recommend to \cite{Shalev2014Understanding} for more details. When $f(\cdot)$ is $\zeta$-smooth and $\varsigma$-strongly convex, its convex conjugate function $f^{\ast}(\cdot)$ is thus $\frac{1}{\varsigma}$-smooth and $\frac{1}{\zeta}$-strongly convex (Lemma $2.19$ in \cite{ShalevShwartz:2012dz}). The condition number of $f^{\ast}(\cdot)$ is $\kappa = \frac{\zeta}{\varsigma}$. Additionally, there are various optimization methods to solve the dual problem (\ref{equa_triangle_lasso_dense}). Since Nesterov optimal method \cite{Nesterov:2004ic} is one of the widely used optimization methods, we use it to solve the dual problem.

\begin{theorem}
\nonumber
When the Nesterov optimal method is used to solve the dual problem (\ref{equa_triangle_lasso_dense}), and obtains $\lVert vec(\lambda^t) - vec(\lambda_{\ast}) \rVert \le \epsilon$ for a given positive $\epsilon$, then the total time complexity is $O\lrincir{\frac{md}{\epsilon}\sqrt{\kappa}}$. 
\end{theorem}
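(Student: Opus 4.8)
The plan is to bound separately (i) the cost of one iteration of the (projected) Nesterov optimal method applied to the dual problem (\ref{equa_triangle_lasso_dense}) and (ii) the number of iterations needed until $\lVert vec(\lambda^t) - vec(\lambda_\ast)\rVert \le \epsilon$, and then multiply the two. Throughout I would treat the graph (hence $Q$) and the initialization gap $\lVert vec(\lambda^0) - vec(\lambda_\ast)\rVert$ as fixed quantities, exactly as the ADMM complexity proof treats $\lVert w^0 - w_\ast\rVert_H^2$ as a constant.

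\emph{Per-iteration cost.} Each Nesterov step performs one gradient evaluation of the dual objective $D(\lambda):= f^\ast(-vec^{\mathrm{T}}(\lambda)(I_d\otimes Q))$ followed by a projection onto the feasible set $\{\lambda : \lVert\lambda_i\rVert_2\le 1,\ 1\le i\le m\}$. By the chain rule the gradient is obtained from $\nabla f^\ast$ by applying the fixed linear operator $I_d\otimes Q$; since $f$ is separable across the $n$ vertices, so is $f^\ast$ (Theorem \ref{theorem_conjugate_sum}), so $\nabla f^\ast$ costs $O(nd)$, while $I_d\otimes Q$ has $O(md)$ nonzero entries and is applied in $O(md)$ time. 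The projection decouples into $m$ independent radial projections in $\RR^d$, i.e. $O(md)$ in total. Hence one iteration costs $O(md)$, matching the gradient-dominated accounting used for the ADMM method.

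\emph{Iteration count.} Because $f(\cdot)$ is $\zeta$-smooth and $\varsigma$-strongly convex, $f^\ast(\cdot)$ is $\frac1\varsigma$-smooth and $\frac1\zeta$-strongly convex (Lemma $2.19$ in \cite{ShalevShwartz:2012dz}). Composition with the fixed linear map $I_d\otimes Q$ preserves both properties: using $\sigma_{\min}^2(Q)\lVert z\rVert^2\le\lVert(I_d\otimes Q)z\rVert^2\le\lVert Q\rVert_2^2\lVert z\rVert^2$, the objective $D$ is $\frac{\lVert Q\rVert_2^2}{\varsigma}$-smooth and $\frac{\sigma_{\min}^2(Q)}{\zeta}$-strongly convex, so its condition number is $\frac{\lVert Q\rVert_2^2}{\sigma_{\min}^2(Q)}\kappa = \Theta(\kappa)$. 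The standard accelerated-gradient guarantee for smooth convex objectives gives $D(\lambda^t)-D(\lambda_\ast)\le \frac{2\lVert Q\rVert_2^2\lVert vec(\lambda^0)-vec(\lambda_\ast)\rVert^2}{\varsigma(t+1)^2}$, and combining it with the strong-convexity lower bound $\frac{\sigma_{\min}^2(Q)}{2\zeta}\lVert vec(\lambda^t)-vec(\lambda_\ast)\rVert^2\le D(\lambda^t)-D(\lambda_\ast)$ yields $\lVert vec(\lambda^t)-vec(\lambda_\ast)\rVert = O\!\left(\frac{\sqrt\kappa}{t+1}\right)$. Hence $t = O\!\left(\frac{\sqrt\kappa}{\epsilon}\right)$ iterations suffice, and multiplying by the per-iteration cost $O(md)$ yields the claimed $O\!\left(\frac{md}{\epsilon}\sqrt\kappa\right)$.

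\emph{Main obstacle.} The delicate step is the strong convexity of $D$: it requires $\sigma_{\min}(Q)>0$, i.e. $I_d\otimes Q$ injective on the subspace spanned by the dual iterates, which also makes $\lambda_\ast$ unique so that $\lVert vec(\lambda^t)-vec(\lambda_\ast)\rVert$ is the right quantity to control. When $Q$ is rank deficient one must either add a full-column-rank assumption on $Q$ (the analogue of the ``row full rank'' hypothesis invoked for the linear convergence of ADMM) or carry out the argument on the orthogonal complement of $\ker(I_d\otimes Q)$, on which $D$ is genuinely strongly convex and to which the Nesterov iterates can be restricted. One should also check that the $O(1/t^2)$ function-gap rate persists for the \emph{projected} Nesterov method on the product-of-balls constraint (it does); note that this route --- the smooth-convex rate converted through strong convexity --- is what produces the $1/\epsilon$ dependence, rather than the $\log(1/\epsilon)$ rate obtainable from the strongly-convex variant of the method.
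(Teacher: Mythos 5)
Your overall accounting matches the paper's: the paper's proof is a two-line argument that multiplies an iteration count by a per-iteration cost, citing Corollary 1 of \cite{Lan:2009ea} for the bound $O(\tfrac{1}{\epsilon}\sqrt{\kappa})$ on the number of iterations needed for $\lVert vec(\lambda^t)-vec(\lambda_\ast)\rVert\le\epsilon$, and charging $O(md)$ per iteration for the gradient step. Your per-iteration analysis (gradient of the separable $f^\ast$, multiplication by $I_d\otimes Q$, and the $m$ radial projections) is a more careful version of that second half and is fine. Where you diverge is the iteration count: instead of invoking the external corollary, you derive it by combining the $O(1/t^2)$ function-gap rate of accelerated gradient with strong convexity of the composed dual objective $D$, and this is where the proposal has a genuine gap.

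Strong convexity of $D(\lambda)=f^\ast\bigl(-(I_d\otimes Q)^{\mathrm{T}}vec(\lambda)\bigr)$ requires the map $I_d\otimes Q^{\mathrm{T}}$ acting on $vec(\lambda)$ to be injective, i.e.\ $Q$ must have full \emph{row} rank (your text asks for full column rank of $Q$, which is the wrong orientation -- the dual variable is multiplied by $Q^{\mathrm{T}}$ -- and is in any case impossible since every row of $Q$ is a positive multiple of $e_i^{\mathrm{T}}-e_j^{\mathrm{T}}$, so $Q\mathbf{1}=0$). Full row rank fails systematically in this setting: the three rows of $Q$ attached to any triangle $(i,j),(i,k),(j,k)$ satisfy $r_1/c_1-r_2/c_2+r_3/c_3=0$ with positive scalars $c_1,c_2,c_3$, and more generally $\mathrm{rank}(Q)\le n-1<m$ for the $k$-NN graphs used in the paper. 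So in exactly the regime triangle lasso is designed for, $D$ is never strongly convex, $\lambda_\ast$ is non-unique, and your conversion from function gap to $\lVert vec(\lambda^t)-vec(\lambda_\ast)\rVert$ does not apply; this is the generic case, not a corner case to be assumed away. The fallback you sketch -- running the argument on the orthogonal complement of the kernel -- also does not go through as stated, because the projection onto the product of balls $\{\lVert\lambda_i\rVert_2\le 1\}$ rescales rows individually and does not preserve membership in $\mathrm{range}(I_d\otimes Q)$, so the projected Nesterov iterates cannot simply be confined to that subspace; one would have to restate the guarantee in terms of distance to the dual solution set and prove a quadratic-growth or error-bound property there, which your proposal does not supply. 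The paper sidesteps all of this only by treating the iteration bound as a black-box citation rather than deriving it through strong convexity of the dual.
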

\begin{proof}

When we use Nesterov optimal method to solve the dual problem, the number of iterations is required to be $O(\frac{1}{\epsilon}\sqrt{\kappa})$ for $\lVert vec(\lambda^t) - vec(\lambda_{\ast}) \rVert \le \epsilon$ (Corollary $1$ in \cite{Lan:2009ea}). Furthermore, the Nesterov optimal method performs a gradient descent per iteration, which leads to $O(md)$ time complexity. Thus, the total time complexity is $O\lrincir{\frac{md}{\epsilon}\sqrt{\kappa}}$.

\end{proof}

\section{Empirical studies}
\label{sect_empirical_studies}
In this section, we conduct empirical studies to evaluate triangle lasso on the robustness and efficiency. First, we present the settings of the experiments. Second, we evaluate the robustness and efficiency of triangle lasso by conducting prediction tasks. Third, we evaluate the quality of the cluster path by conducting convex clustering with triangle lasso. After that, we evaluate the efficiency of our methods in various network topologies. Finally, we use triangle lasso to conduct community detection in order to show that triangle lasso is able to perform a general data analysis task.

\subsection{Settings}

\textbf{Model and algorithms.} As we have shown in the previous section, we conduct empirically studies by conducting ridge regression and convex clustering tasks. The weights of edges, i.e. $w_{ij}$ in $Q$ is set to be negatively proportional to the distance between the vertices.   All the algorithms are implemented by using Matlab 2015b and the solver CVX \cite{gb08}. The hardware is a server equipped with an i7-4790 CPU and $20$GB memory.   

The total compared algorithms are:
\begin{itemize}
\item \textbf{Network lasso \cite{Hallac:2015fy}.} This is the state-of-the-art method to conduct data analysis and clustering simultaneously. Both network lasso and triangle lasso can be used as a general framework. Thus, we compare the triangle lasso with it in the prediction tasks.
\item \textbf{AMA \cite{Chi:2013ey}.} This is the state-of-the-art method to conduct convex clustering. Convex clustering is a special case of the network lasso and triangle lasso. We compare triangle lasso with it in the convex clustering task.
\item \textbf{Triangle lasso-basic ADMM.} This is the basic version of ADMM which is used to solve triangle lasso in the evaluations. We use it as the baseline to compare our algorithms and other state-of-the-art methods.
\item \textbf{Triangle lasso-ADMM.} This is our proposed ADMM method to solve the triangle lasso. Since it is very fast, we use it to conduct each evaluations in default.
\item \textbf{Triangle lasso-Dual.} This is our proposed Dual method to solve the triangle lasso. It is not efficient when the dataset or the graph is large. We use it to conduct evaluations on some moderate graphs. As we have illustrated, triangle lasso is implemented by our ADMM method defaultly.
\end{itemize}

\begin{figure*}[!t]
\setlength{\abovecaptionskip}{0pt}
\setlength{\belowcaptionskip}{0pt}
\centering 
\subfigure[Best $\alpha$]{\includegraphics[width=0.49\columnwidth]{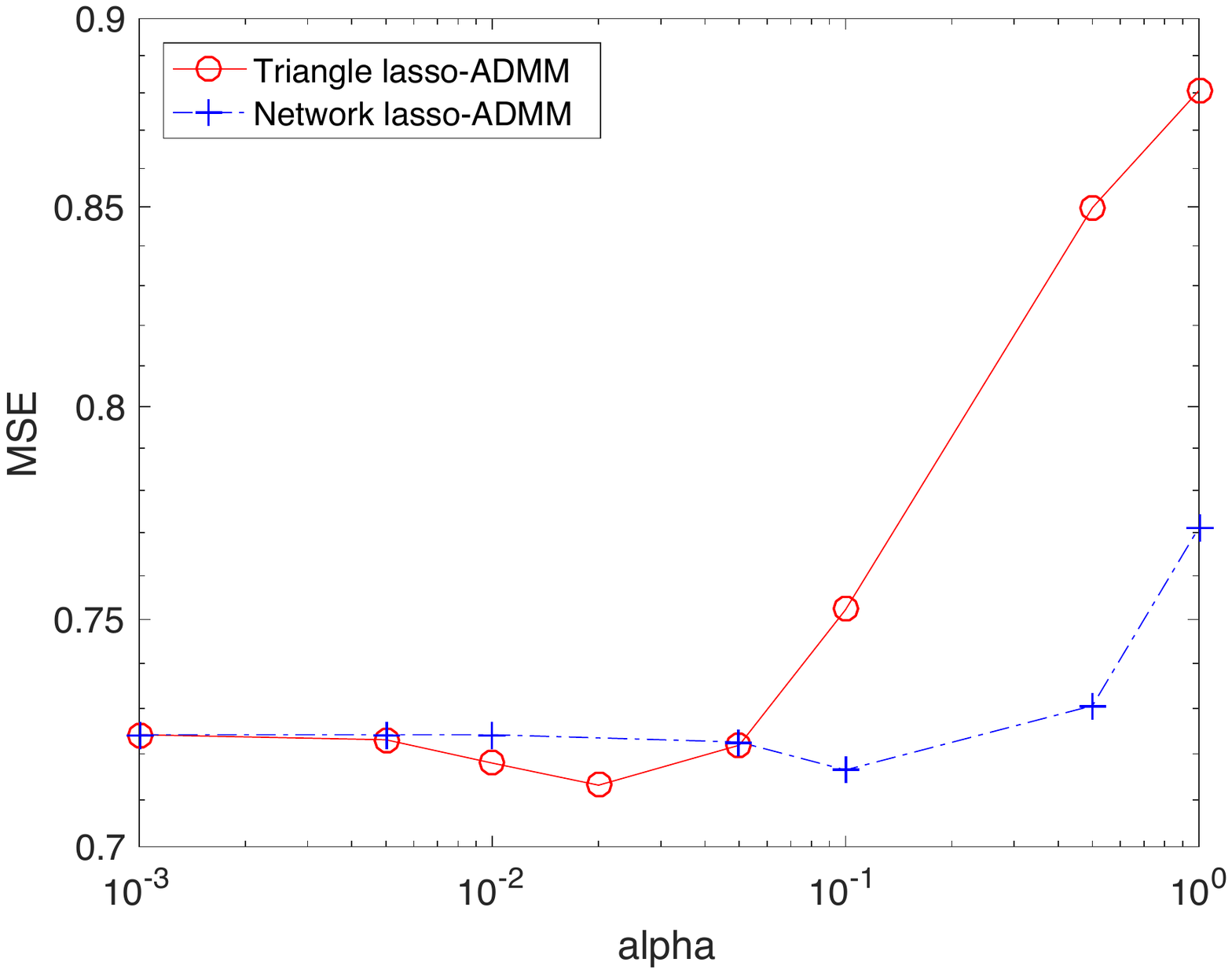}\label{figure_evalution_house_best_alpha}}
\subfigure[Robustness: low MSE]{\includegraphics[width=0.49\columnwidth]{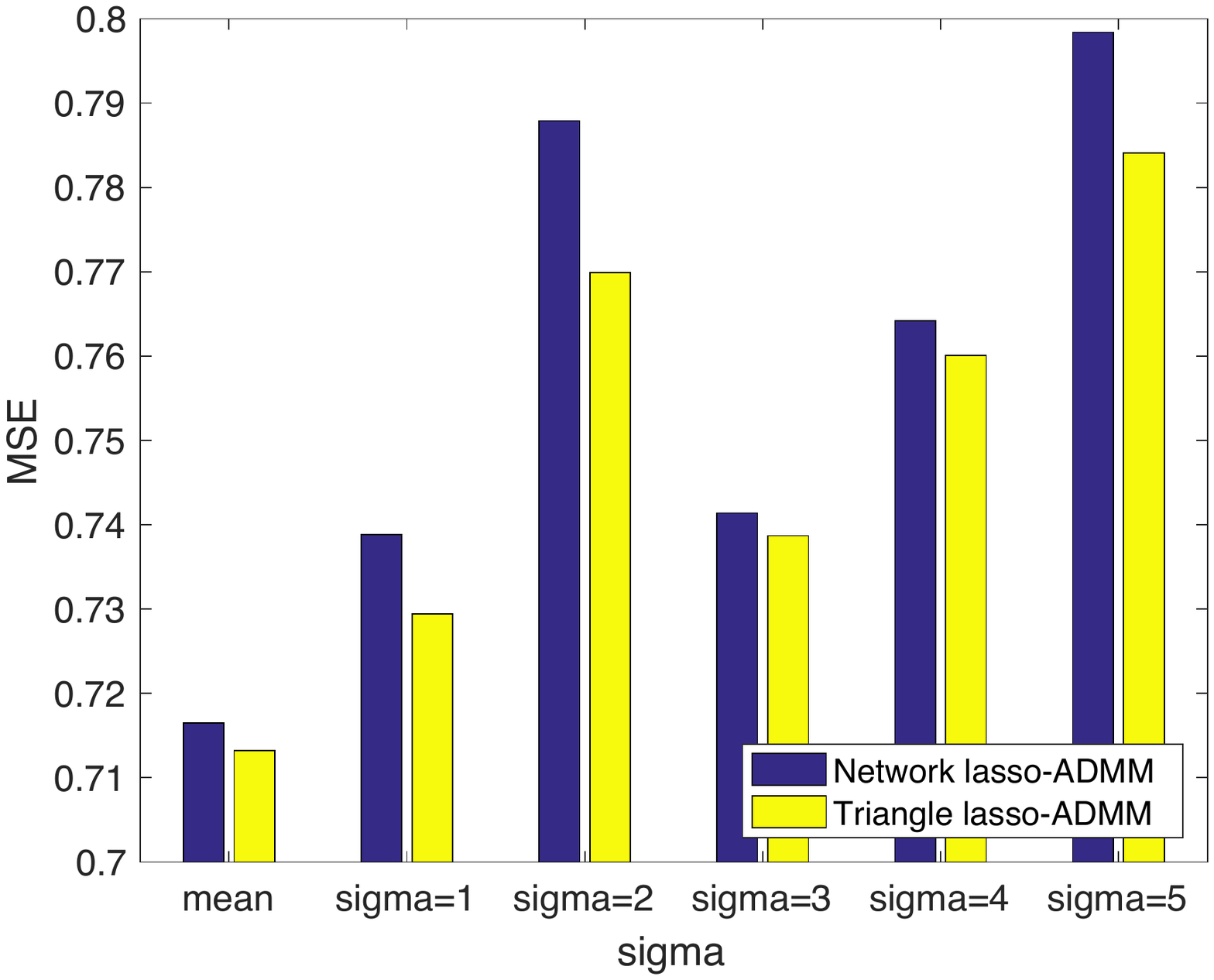}\label{figure_evalution_house_noise}}
\subfigure[Robustness: visualization]{\includegraphics[width=0.48\columnwidth]{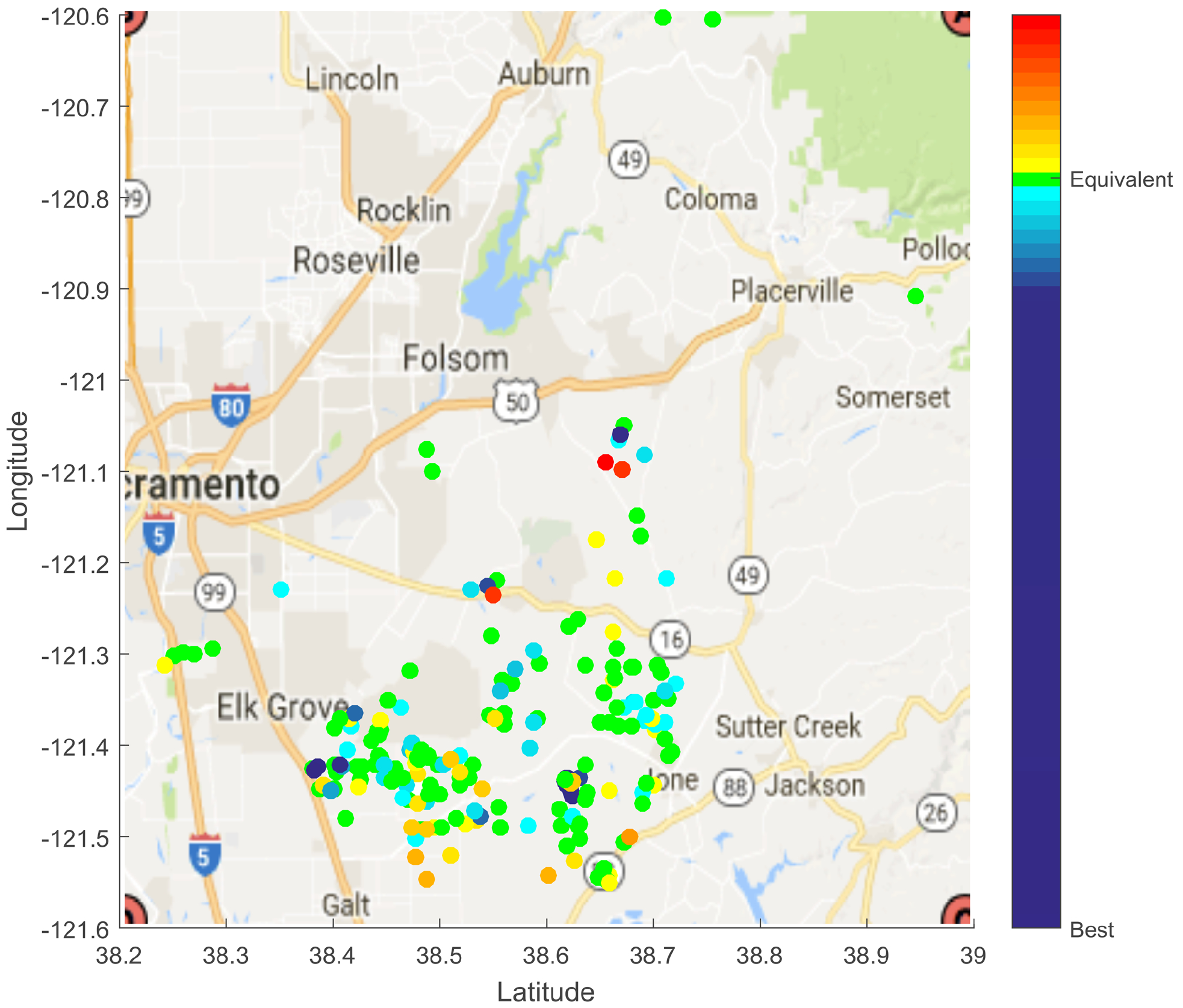}\label{figure_house_mse_tri}}
\subfigure[ Robustness: statistics]{\includegraphics[width=0.49\columnwidth]{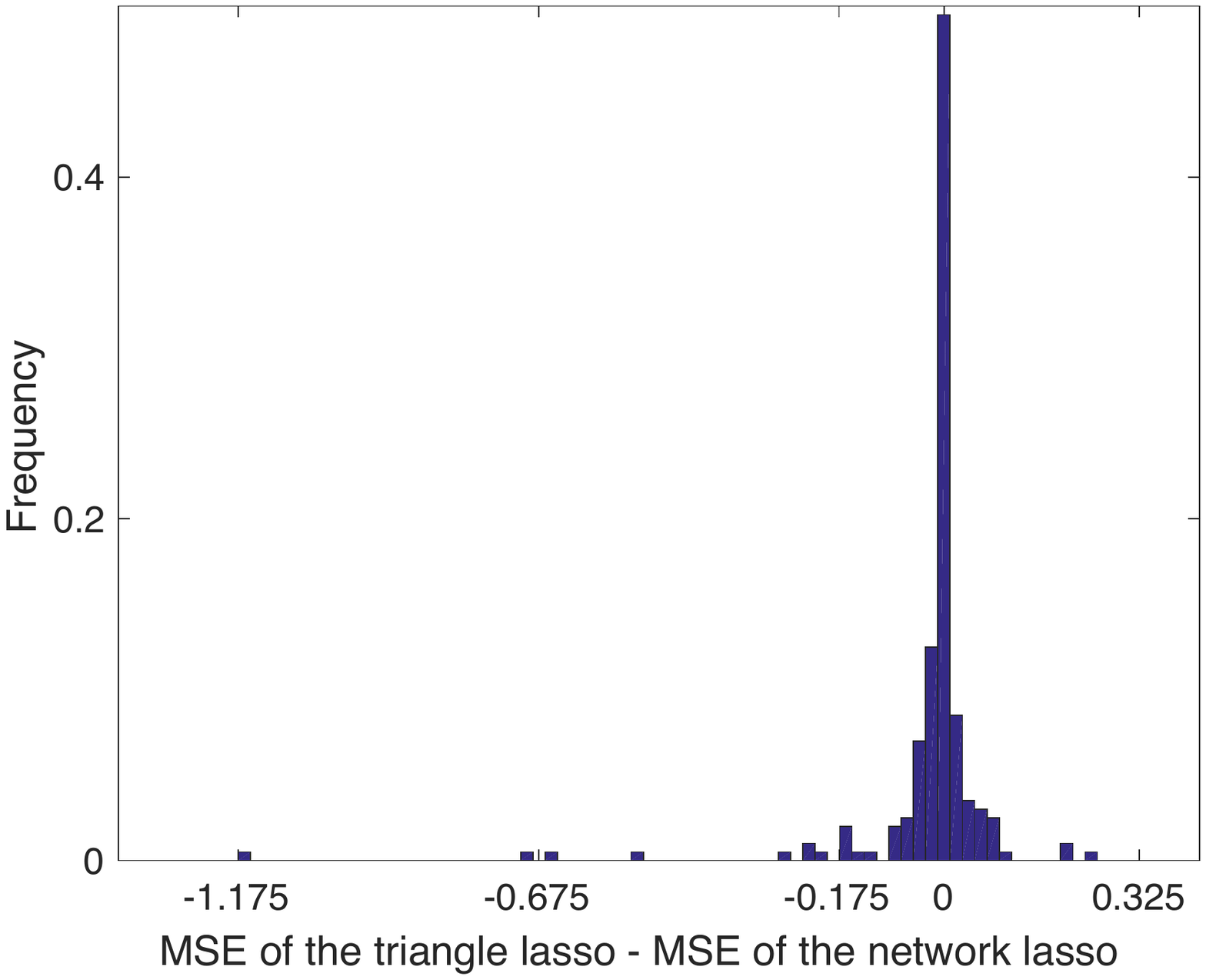}\label{figure_house_mse_diff_hist}}
\caption{The illustration of the best $\alpha$, and the comparison of the prediction accuracy with the best $\alpha$. Triangle lasso is more robust than the network lasso  because of the lower MSE.}
\label{figure_acc_effi_house}
\end{figure*}

\textbf{Graph construction and metrics.}   If the dataset is a graph dataset, triangle lasso use the graph directly. In some cases, if the dataset is not a graph,  the graph is usually generated by using the following rules in default:
\begin{itemize}
\item If the dataset has missing values, those missing values are filled by the mean values of the corresponding features.
\item Each instance is represented by a vertex.
\item Given any two arbitrary vertices, if one of them is the k-nearest peers ($k\ge 1$) of the other one, there is an edge between them.
\end{itemize}
Additionally, we evaluate the prediction accuracy by using the Mean Square Error (MSE). The small MSE leads to the highly accurate prediction. Given a dataset with imperfect data, if an algorithm yields smaller MSE than other algorithms, its prediction is thus more accurate than others. Therefore, it is more robust to the imperfect data than others.  We record the run time (seconds) to evaluate the efficiency.

\subsection{Prediction tasks}

\begin{table}[!]
\centering
\caption{Statistics of the datasets.}
\label{table_datasets}
\begin{tabular}{c|c|c|c}
\hline 
Datasets & Data size & Dimensions & Missing values\tabularnewline
\hline 
\hline 
RET & $985$ & $4$ & $17\%$\tabularnewline
\hline 
AOM & $126$ & $38$ & $17.1\%$\tabularnewline
\hline 
wiki4HE & $913$ & $13$ & $16.6\%$\tabularnewline
\hline 
DJI & $750$ & $4$ & $20\%$\tabularnewline
\hline 
cpusmall & $8192$ & $12$ & $20\%$\tabularnewline
\hline 
\end{tabular}
\end{table}

\textbf{Datasets. } The empirical studies are mainly conducted on the following four datasets. The statistics of those datasets are presented in Table \ref{table_datasets}.  It is worth noting that all of them contain many missing values. Those missing values are filled by using zeros in the raw datasets. In all experiments, the values of each feature  is standardized to zero mean and unit variance. We use $5$-fold cross validation to evaluate the robustness of triangle lasso. For each instance in the validation dataset, we find its nearest neighbour from the training dataset. Then, we use the weight of the nearest neighbour to conduct prediction and evaluate the robustness of the solutions. 

\begin{itemize}
\item \textbf{Real estate transactions (RET).} The dataset is the real estate transactions over a week period in May 2008 in the Greater Sacramento area \footnote{https://support.spatialkey.com/spatialkey-sample-csv-data}.   The latitude and longitude features of each house are used to construct the graph. Each house is profiled by using features: \textit{number of beds}, \textit{number of baths} and \textit{square feet}. The response is the \textit{sales price}. The task is to predict price of a house. $17\%$ of the house sales are missing at least one of the features. 

 \item \textbf{AusOpen-men-2013 (AOM).} A collection containing the match statistics for men at the Australian Open tennis tournaments of the year 2013\footnote{http://archive.ics.uci.edu/ml/datasets/Tennis+Major+Tourname\\ nt+Match+Statistics}.  Each instance has $38$ features, and the response is  \textit{Result}. The task is to predict  the winner for two tennis players. This data matrix contains $17.1\%$ missing values. 
 
\item \textbf{wiki4HE.} Survey of faculty members from two Spanish universities on teaching uses of Wikipedia\footnote{http://archive.ics.uci.edu/ml/datasets/wiki4HE} \cite{Meseguer2016Factors}. We pick the first question and its answer from each module, and finally obtain $13$ features, namely \textit{PU1}, \textit{PEU1}, \textit{ENJ1}, \textit{QU1}, \textit{VIS1}, \textit{IM1}, \textit{SA1}, \textit{USE1}, \textit{PF1}, \textit{JR1}, \textit{BI1}, \textit{INC1}, and \textit{EXP1}. The response is \textit{USERWIKI}. The task is to predict whether a teacher register an account in wikipedia site. The data matrix contains $16.6\%$ missing values.

\item \textbf{Dow Jones Index (DJI).} This dataset contains weekly data for the Dow Jones Industrial Index \footnote{http://archive.ics.uci.edu/ml/datasets/Dow+Jones+Index}. Each instance is profiled by using features: \textit{open} (price), \textit{close} (price) and \textit{volume}. The response is \textit{next\_week\_open} (price). The task is to predict the open price in the next week. The raw dataset does not contain imperfect data. We thus randomly pick $20\%$ values in the data matrix, and set them by using zeros. 

\item \textbf{cpusmall.} This dataset is a collection of a computer systems activity measures, which is obtained from LIBSVM website \footnote{https://www.csie.ntu.edu.tw/$\sim$ cjlin/libsvmtools/datasets/regre\\ ssion.html\#cpusmall}. Each instance is profiled by $12$ features. The task is to predict the portion of time (\%) that cpus run in user mode. In the experiment, we randomly pick $20\%$ values in the data matrix, and fill those values to be zeros as the imperfect data. 
\end{itemize}   

\begin{figure}[t]
\setlength{\abovecaptionskip}{0pt}
\setlength{\belowcaptionskip}{0pt}
\centering 
\subfigure[Efficiency]{\includegraphics[width=0.48\columnwidth]{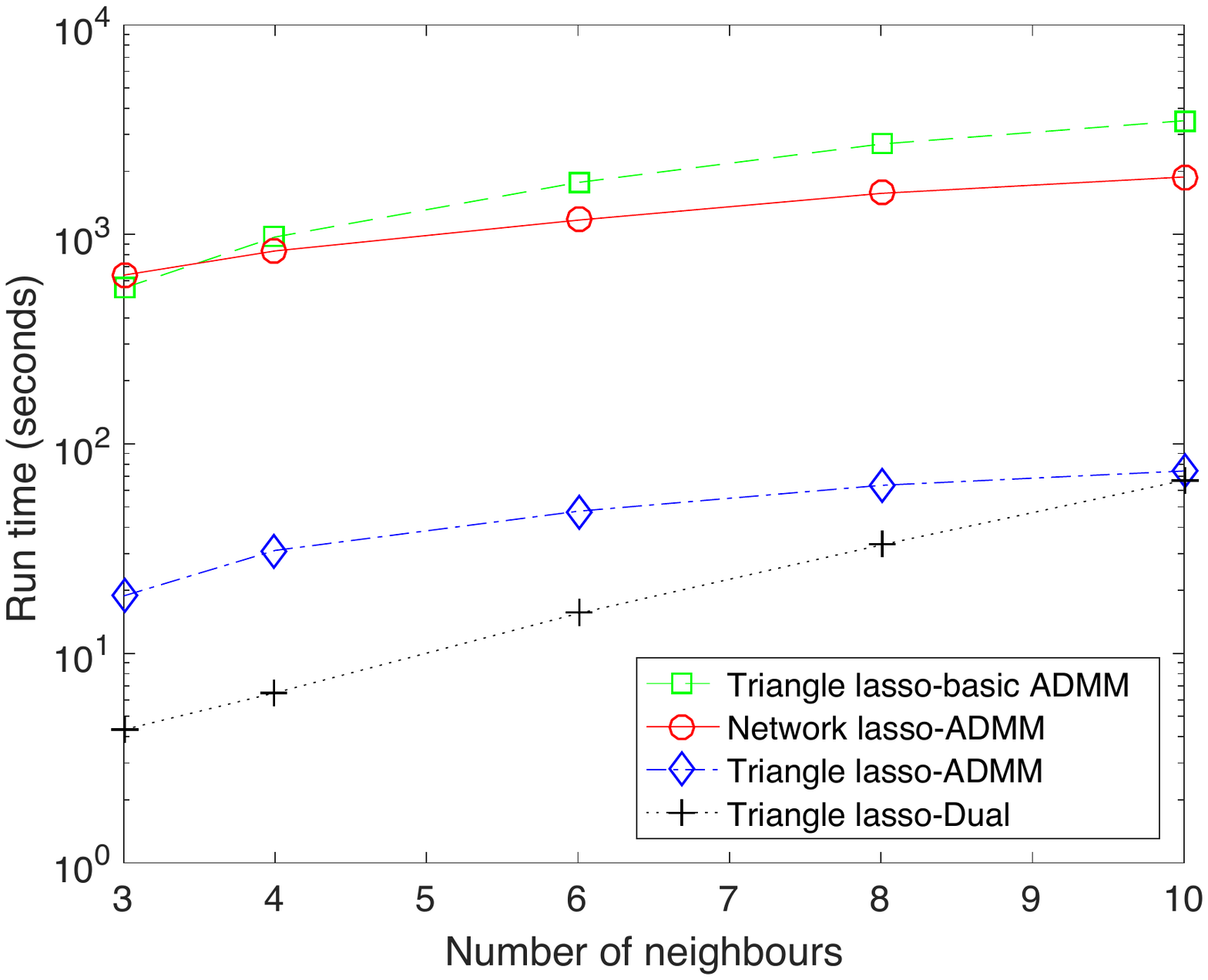}\label{figure_evalution_house_runtime}}
\subfigure[Efficiency]{\includegraphics[width=0.49\columnwidth]{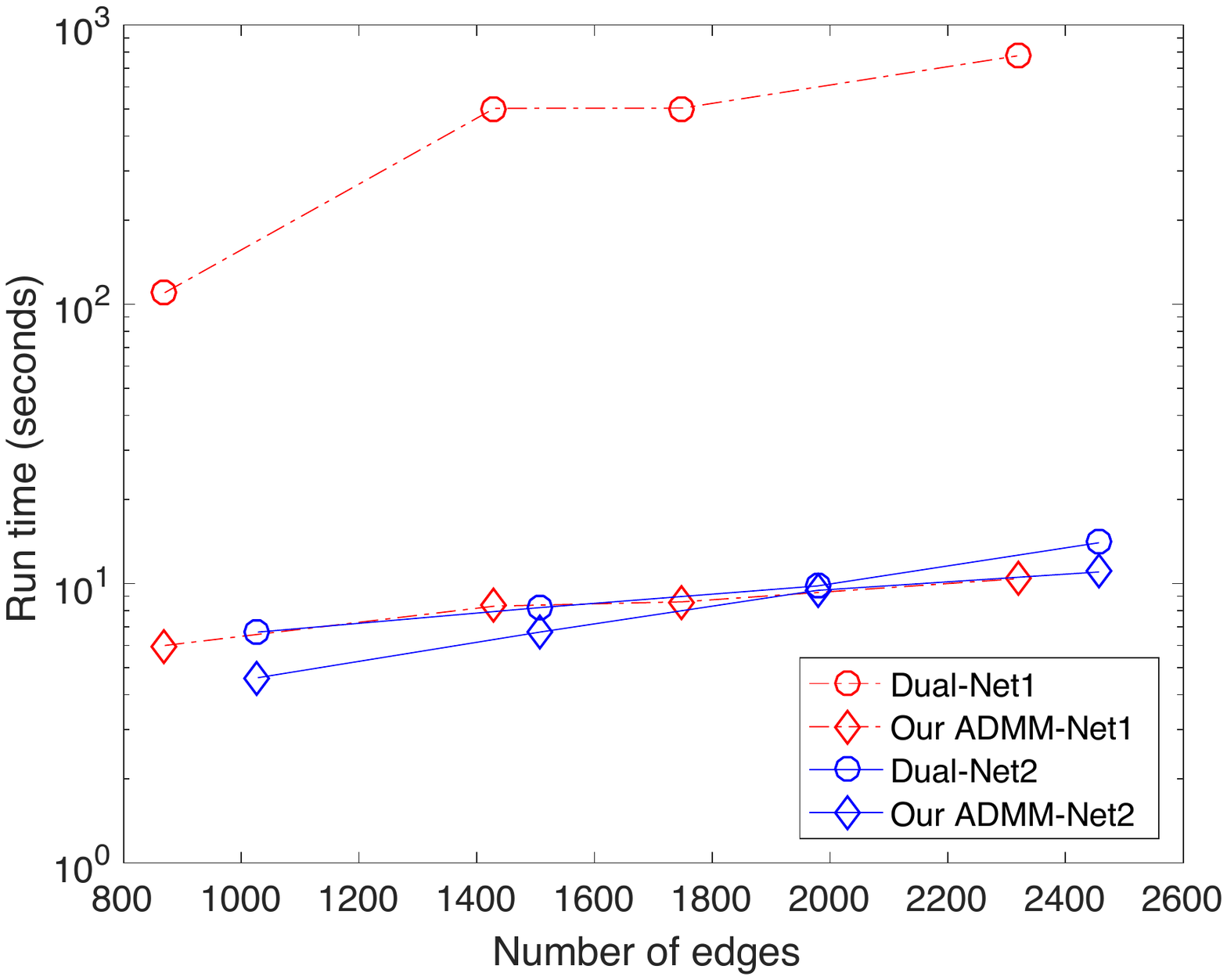}\label{figure_evalution_runtime_network_types}}
\label{figure_house_solution_quality}
\caption{The comparison of the efficiency. The Dual method is more efficient than the  ADMM method in a sparse graph but less efficient in a dense graph. }
\end{figure}

\begin{figure*}[t]
\setlength{\abovecaptionskip}{0pt}
\setlength{\belowcaptionskip}{0pt}
\centering 
\subfigure[AOM, Robustness: low MSE]{\includegraphics[width=0.49\columnwidth]{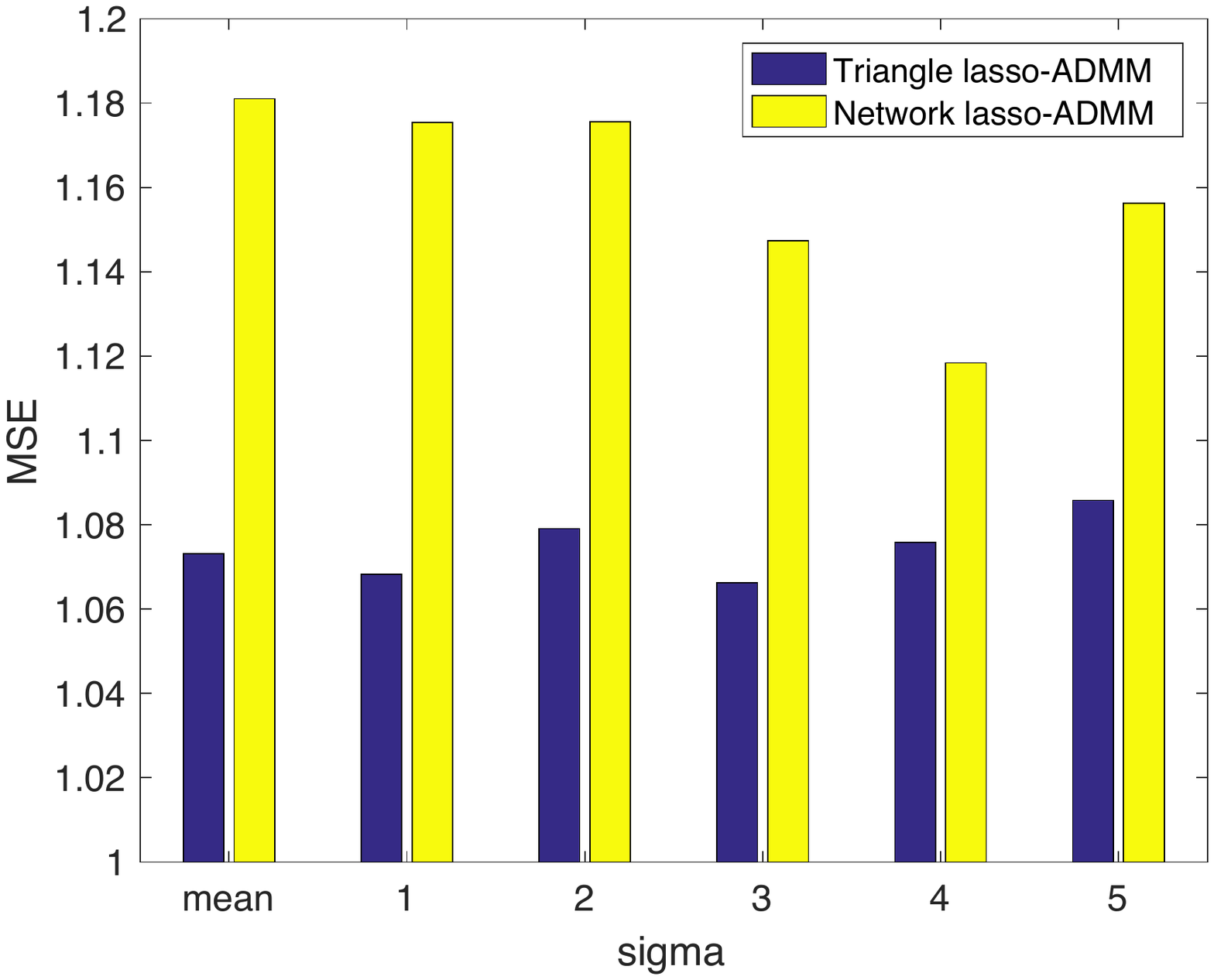}\label{figure_robust_aom}}
\subfigure[wiki4HE, Robustness: low MSE]{\includegraphics[width=0.49\columnwidth]{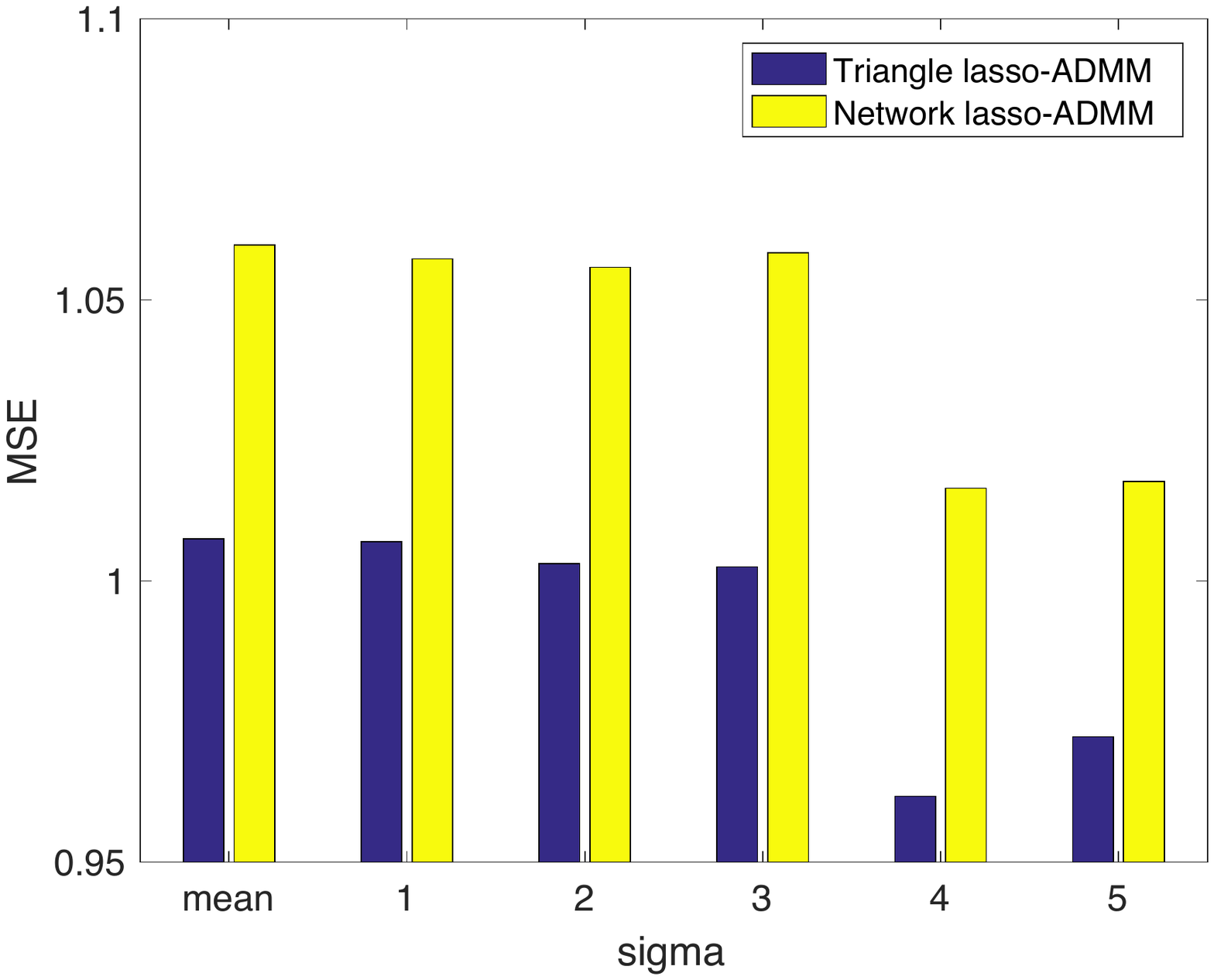}\label{figure_robust_wiki4h}}
\subfigure[DJI, Robustness: low MSE]{\includegraphics[width=0.49\columnwidth]{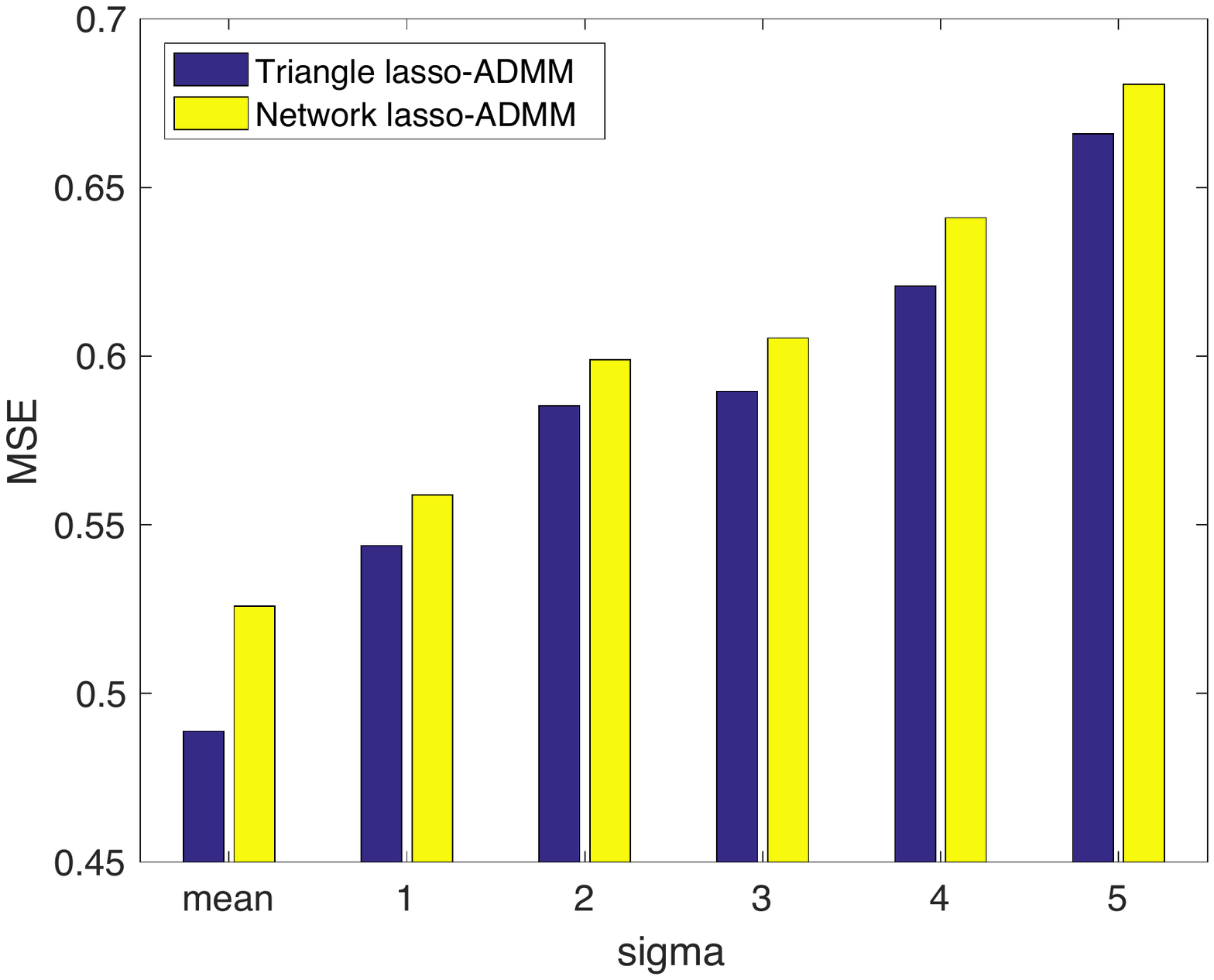}\label{figure_robust_dji}}
\subfigure[cpusmall, Robustness: low MSE]{\includegraphics[width=0.495\columnwidth]{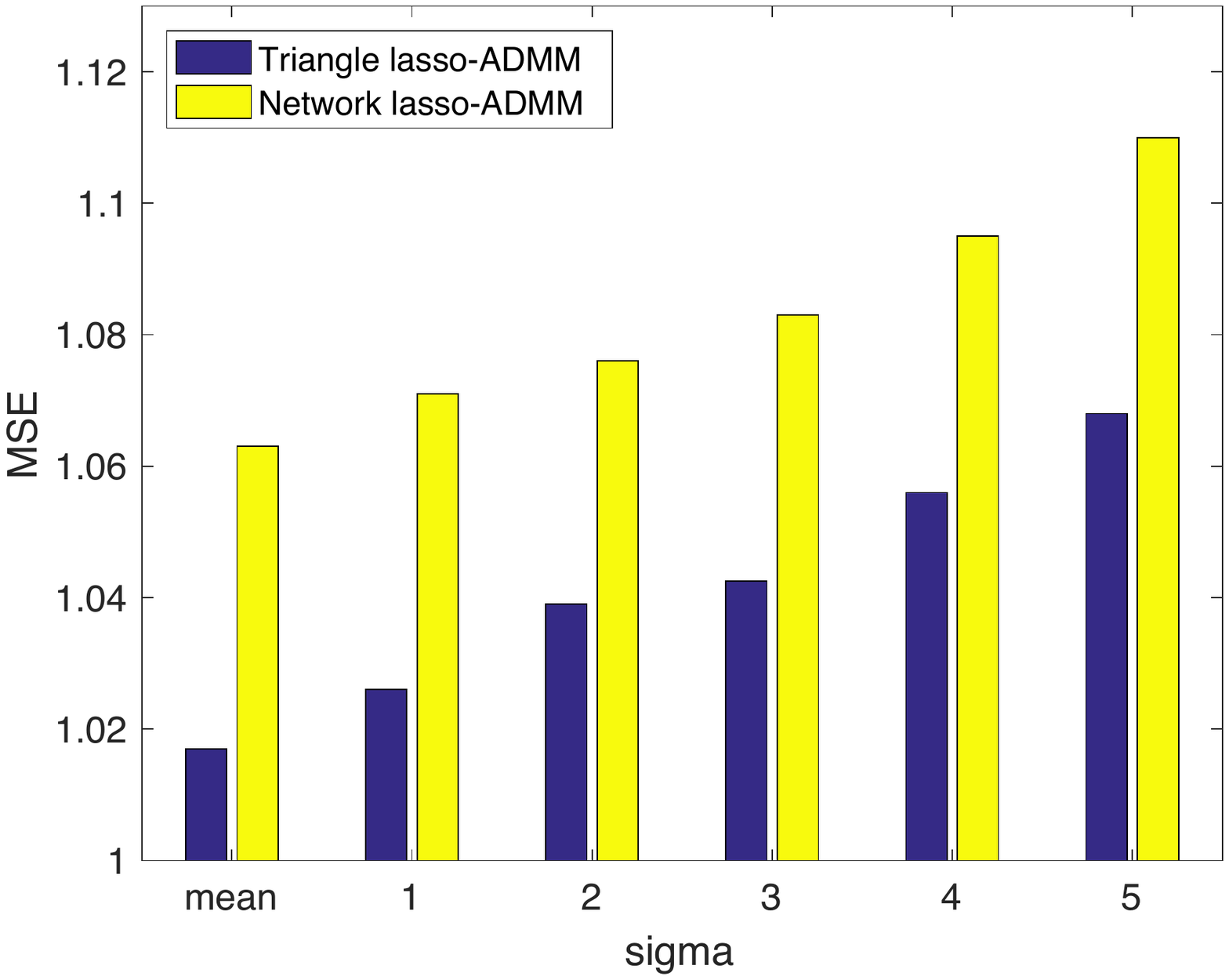}\label{figure_robust_cpusmall}}
\caption{The comparison of the MSE by varying the imperfect data. It shows that triangle lasso is more robust than network lasso because of its low MSE.}
\label{figure_prediction_robust}
\end{figure*}

\begin{table*}[!]
\centering
\caption{CPU seconds consumed when handling AOM, wiki4HE, DJI and cpusmall}
\label{table_efficiency}
\begin{tabular}{c|c|c|c|c|c}
\hline 
\multicolumn{2}{c|}{Algorithms} & basic ADMM & Network lasso & Triangle lasso-ADMM & Triangle lasso-Dual\tabularnewline
\hline 
\hline 
\multirow{6}{*}{AOM} & $\alpha=100$, $NoN=3$ & $576$ & $397$ & $23$ & $\mathbf{12}$\tabularnewline
\cline{2-6} 
 & $\alpha=100$, $NoN=4$ & $880$ & $607$ & $32$ & $\mathbf{29}$\tabularnewline
\cline{2-6} 
 & $\alpha=100$, $NoN=5$ & $1100$ & $824$ & $\mathbf{37}$ & $84$\tabularnewline
\cline{2-6} 
 & $\alpha=200$, $NoN=5$ & $1124$ & $699$ & $\mathbf{43}$ & $76$\tabularnewline
\cline{2-6} 
 & $\alpha=300$, $NoN=5$ & $1211$ & $700$ & $\mathbf{43}$ & $79$\tabularnewline
\cline{2-6} 
 & $\alpha=500$, $NoN=5$ & $1547$ & $748$ & $\mathbf{56}$ & $81$\tabularnewline
\hline 
\multirow{4}{*}{wiki4HE} & $\alpha=100$, $NoN=3$ & out of memory & out of memory & $1864$ & $\mathbf{337}$\tabularnewline
\cline{2-6} 
 & $\alpha = 100$, $NoN=4$ & out of memory & out of memory & $\mathbf{593}$ & $2020$\tabularnewline
\cline{2-6} 
 & $\alpha = 50$, $NoN=3$ & out of memory & out of memory & $1305$ & $\mathbf{368}$\tabularnewline
\cline{2-6} 
 & $\alpha=200$, $NoN=3$ & out of memory & out of memory & $2854$ & $\mathbf{341}$\tabularnewline
\hline 
\multirow{4}{*}{DJI} & $\alpha=100$, $NoN=5$ & $132$ & $26$ & $\mathbf{3}$ & $6$\tabularnewline
\cline{2-6} 
 & $\alpha=50$, $NoN=5$ & $99$ & $54$ & $\mathbf{2}$ & $31$\tabularnewline
\cline{2-6} 
 & $\alpha=100$, $NoN=10$ & $499$ & $54$ & $\mathbf{12}$ & $32$\tabularnewline
\cline{2-6} 
 & $\alpha=1$, $NoN=10$ & $100$ & $54$ & $\mathbf{2}$ & $31$\tabularnewline
\hline 
\multirow{4}{*}{cpusmall} & $\alpha=100$, $NoN=2$ & out of memory & out of memory & $7248$ & $\mathbf{1175}$\tabularnewline 
\cline{2-6} 
& $\alpha=200$, $NoN=3$ & out of memory & out of memory & $8940$ & $\mathbf{5600}$\tabularnewline
\cline{2-6} 
& $\alpha = 200$, $NoN=4$ & out of memory & out of memory & $\mathbf{10276}$ & $15940$\tabularnewline
\cline{2-6}
& $\alpha = 500$, $NoN=4$ & out of memory & out of memory & $\mathbf{11745}$ & $16874$\tabularnewline
\hline 
\end{tabular}
\end{table*}

\textbf{Results for RET.}
Each row of $X$ represents the weights of a house.  $B$ represents the prices of houses.     Additionally,  each vertex is connected with its $10$ nearest neighbours via  the latitude and longitude, and there exist $28521$ triangles in the network.

First, we need to find the best $\alpha$.   As illustrated in Fig. \ref{figure_evalution_house_best_alpha}, the comparison of MSE is conducted by varying $\alpha$.  With the increase of $\alpha$, the houses located in a same region begin to use the similar or identical weights.  The quality of the predictions is thus improved. When $\alpha$ is too large, the houses located in different regions also use the similar weights, resulting in the decrease of the accuracy of the prediction. In the experiment, we set $\alpha\mathrm{=}0.02$ for the triangle lasso, and set $\alpha\mathrm{=}0.1$ for the network lasso.

Second, we evaluate the robustness of the triangle lasso by  varying the missing values. The missing values are filled by using the data generated from a Gauss distribution whose mean is $\mu = 0$, and standard deviation $\sigma$ is varied from $1$ to $5$. `mean' represents those missing values are filled by zeros. As illustrated in Fig. \ref{figure_evalution_house_noise}, the triangle lasso yields a better prediction than the network lasso. The reason is that we use the shared neighbouring relation to decrease the impact of the missing values.  We present the predictions in Fig. \ref{figure_house_mse_tri}. The blue or red markers on the map represent the better or the worse predictions yielded by the triangle lasso, respectively. The distribution of those predictions are presented in Fig. \ref{figure_house_mse_diff_hist}. We find that many predictions are comparable for the triangle lasso and the network lasso, but the triangle lasso yields more better predictions than the worse predictions.

\begin{figure*}[!ht]
\setlength{\abovecaptionskip}{0pt}
\setlength{\belowcaptionskip}{0pt}
\centering 
\subfigure[Triangle lasso, $698$ seconds]{\includegraphics[width=0.48\columnwidth]{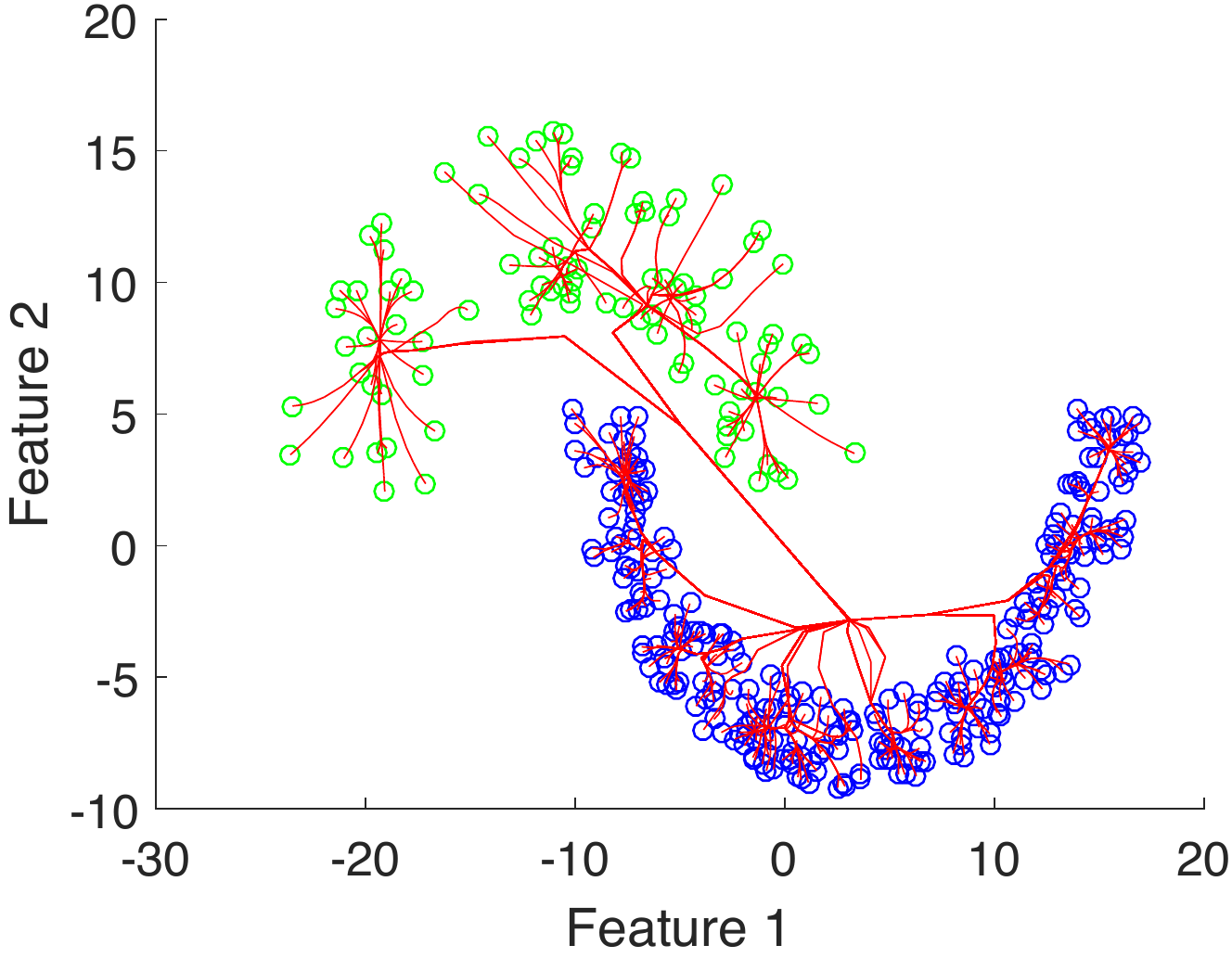}\label{figure_moon_tri_sigma0}}
\subfigure[AMA, $695$ seconds]{\includegraphics[width=0.48\columnwidth]{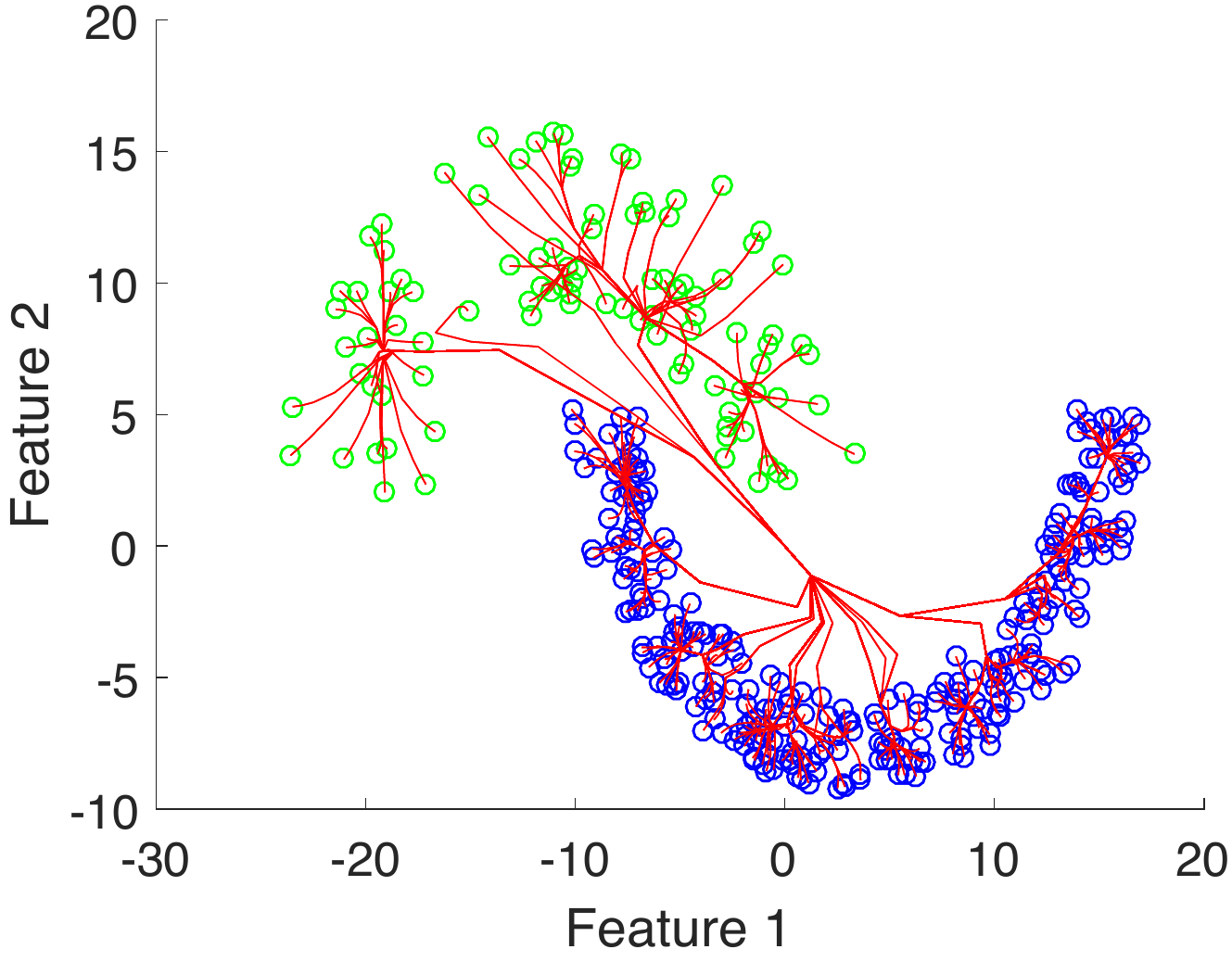}\label{figure_moon_net_sigma0}}
\subfigure[Triangle lasso,  $1.87$ seconds ]{\includegraphics[width=0.49\columnwidth]{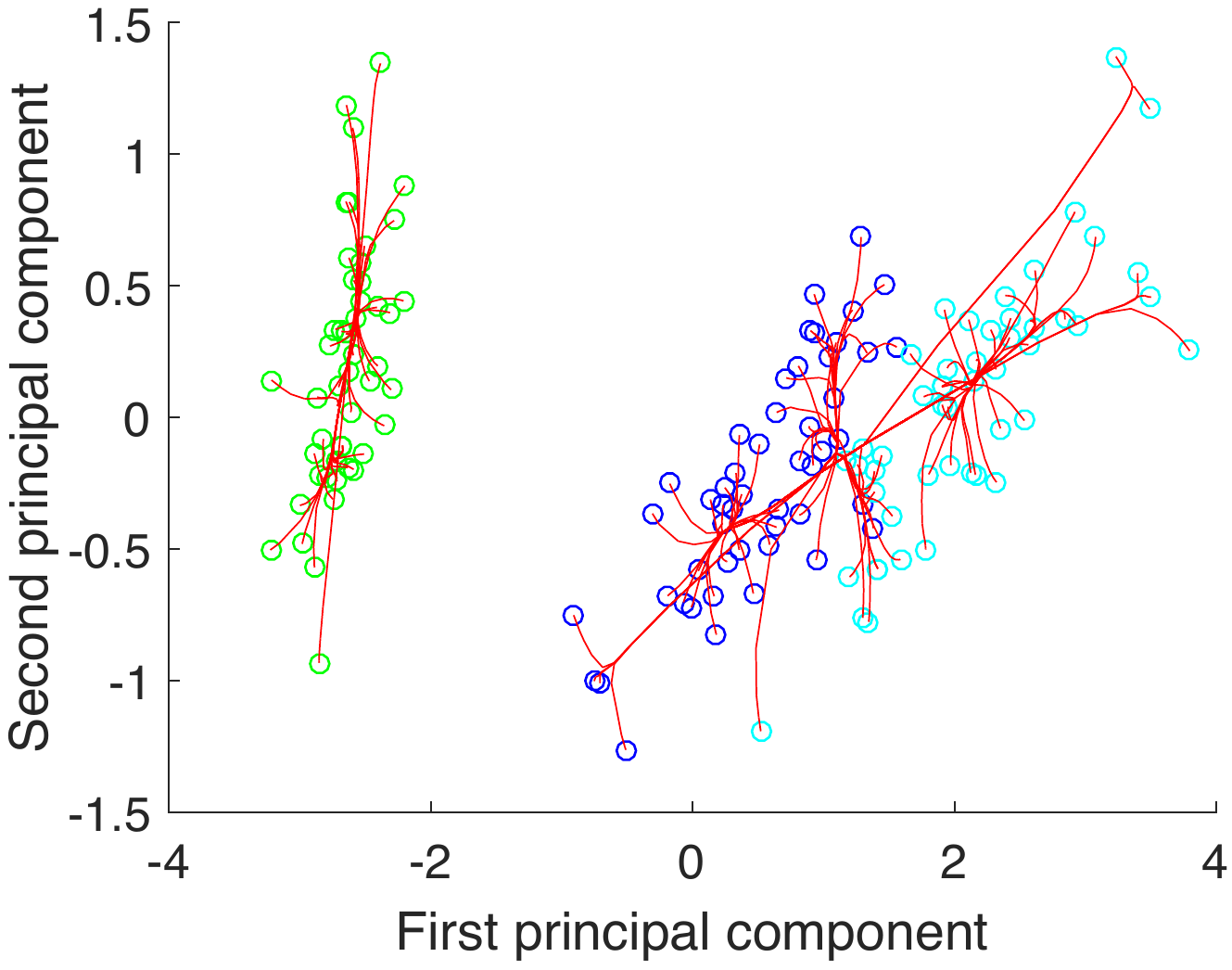}\label{figure_iris_tri}}
\subfigure[AMA, $0.96$ seconds]{\includegraphics[width=0.49\columnwidth]{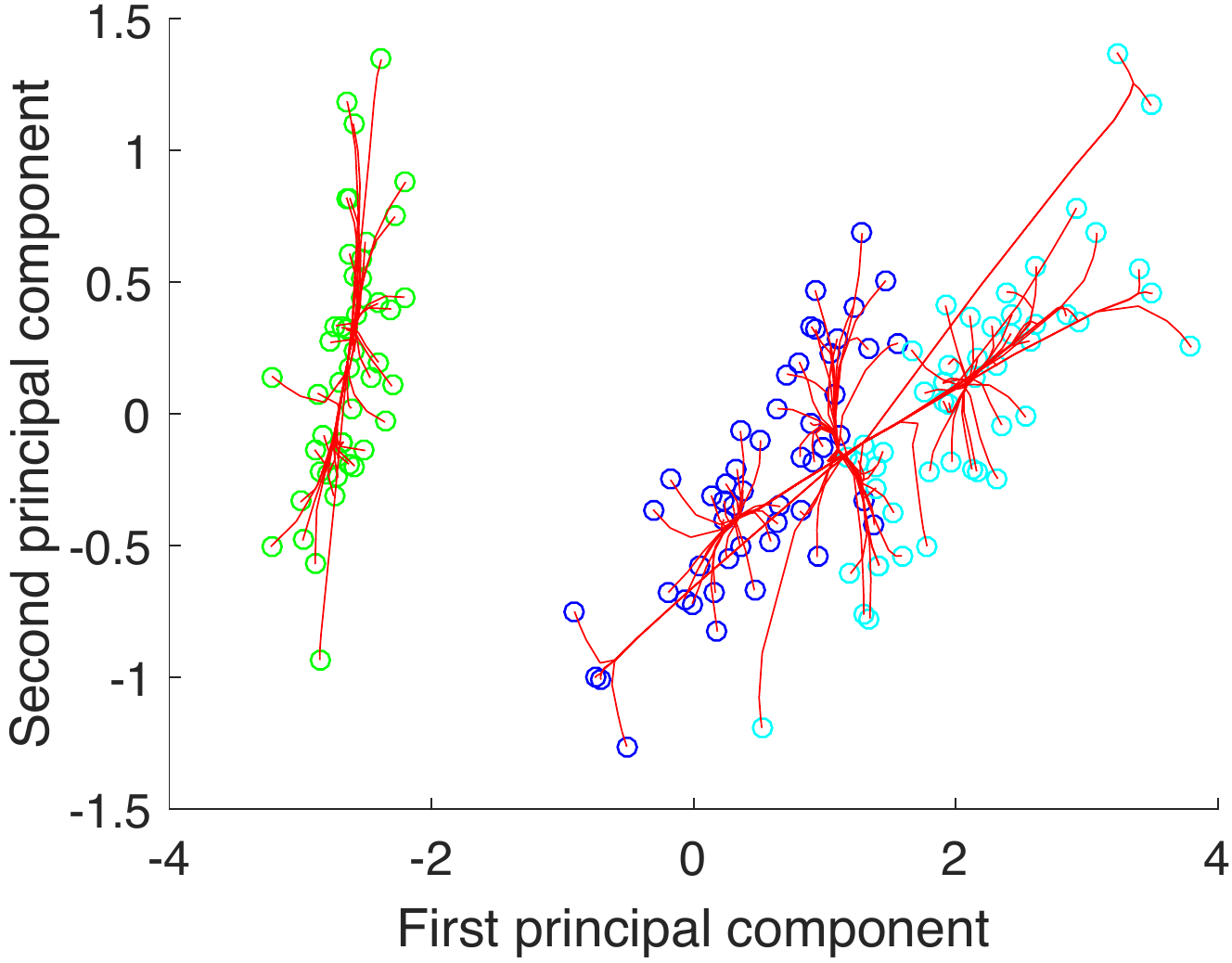}\label{figure_iris_net}}
\caption{The comparison of the cluster paths.}
\label{figure_clusterpath}
\end{figure*}
\begin{figure*}[!ht]
\setlength{\abovecaptionskip}{0pt}
\setlength{\belowcaptionskip}{0pt}
\centering 
\subfigure[Random \& small world networks]{\includegraphics[width=0.48\columnwidth]{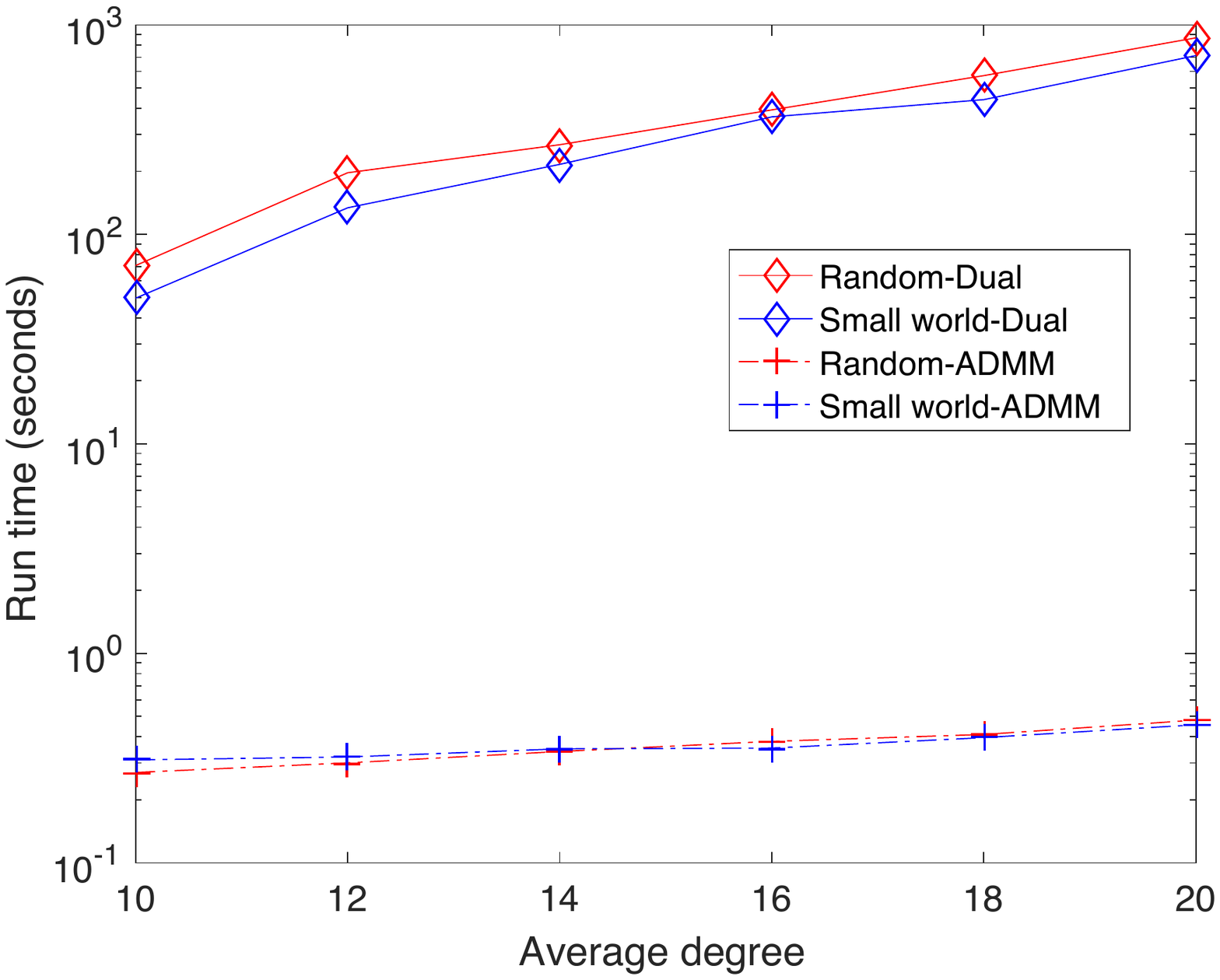}\label{figure_evaluation_runtime_random_small_world}}
\subfigure[Scale free network]{\includegraphics[width=0.48\columnwidth]{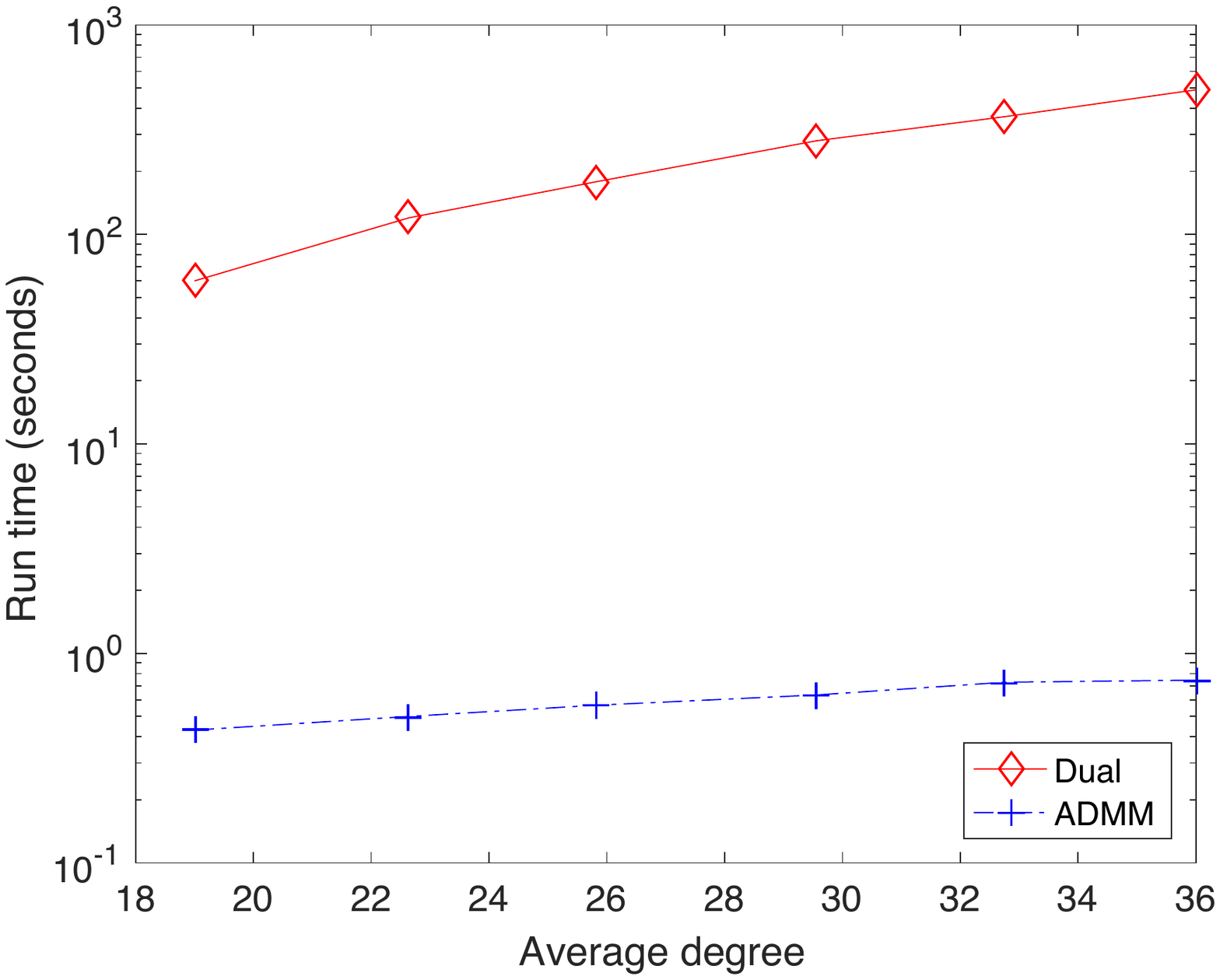}\label{figure_evaluation_runtime_scale_free}}
\subfigure[C3 network]{\includegraphics[width=0.48\columnwidth]{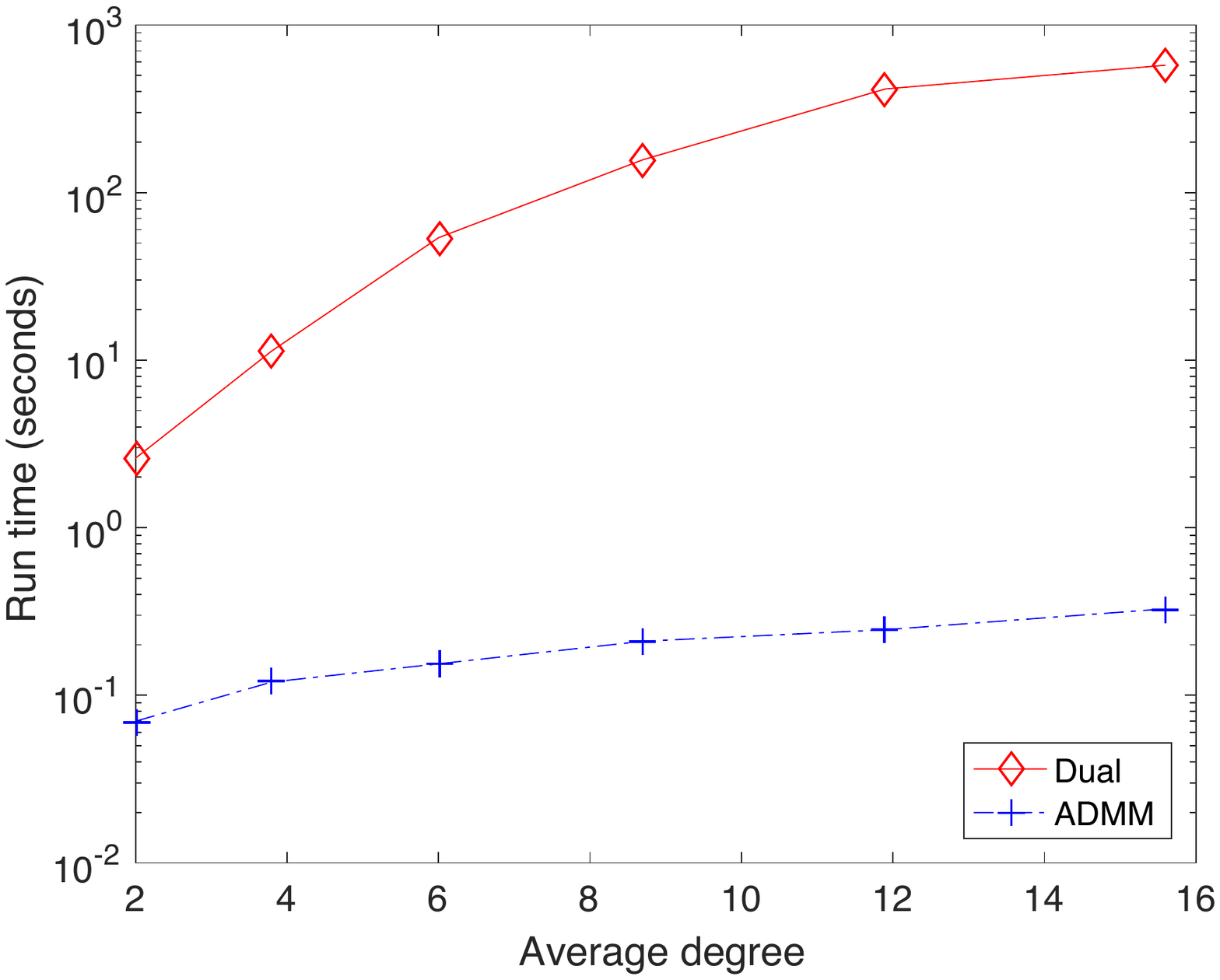}\label{figure_evaluation_runtime_partial_dense}}
\subfigure[ADMM v.s. Dual]{\includegraphics[width=0.495\columnwidth]{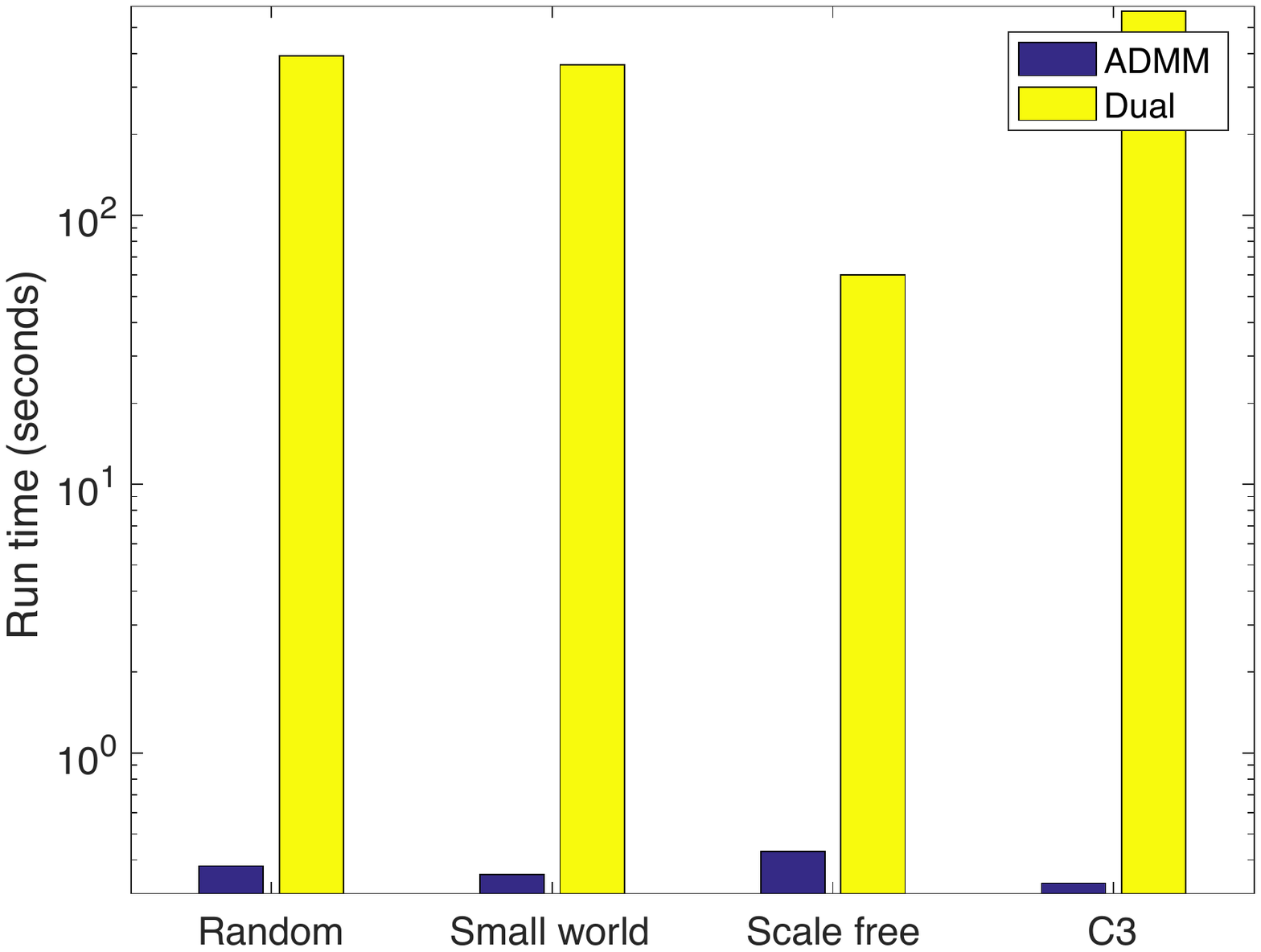} \label{figure_evaluation_runtime_different_networks_admm_dual}}
\caption{The comparison of the run time by varying the average degree in the first three subfigures, and fixing it to be $16$ in the last subfigure.}
\label{figure_effi_networks}
\end{figure*}

Third, we evaluate the efficiency of our methods by varying the number of neighbours for each vertex. Our method, which yields the accurate solution, is denoted by $\emph{Dual}$. As illustrated in Fig. \ref{figure_evalution_house_runtime}, our methods are more efficient than the basic ADMM and the network lasso. Meanwhile, it can be seen that the Dual method is more efficient than our ADMM method when the number of neighbours is not large ($\le 10$). But, the superiority is decreased with the increase of the number of the neighbours. When we use the ridge regression model, the Dual method yields $X_\ast$ by solving a large number of linear equations, which is time-consuming for a large and dense graph.   Additionally, we build the network in a different way. We set a threshold $10km$, and connect the neighbouring vertices whose distance is less than the threshold.  The network, which is yielded by using the neighbours of vertices, is denoted by \emph{Net1}. Similarly, the network, which is yielded by using the threshold, is denoted by \emph{Net2}. We evaluate the efficiency of our methods by varying the number of edges in those networks.  As illustrated in Fig. \ref{figure_evalution_runtime_network_types}, both methods perform better in the Net1 than in the Net2. The efficiency of the Dual method is decreased sharply in the Net2.  The reason is that the vertices in the Net2 tend to completely connect with their neighbours. Thus, their weights are highly non-separable with the weights of their nighbours, which decreases  the efficiency of the Dual method sharply.

\textbf{Results for AOM, wiki4HE, DJI and cpusmall.} First, we evaluate the robustness of triangle lasso. The missing values are still filled by using the data generated from a Gauss distribution whose mean is $\mu = 0$, and standard deviation $\sigma$ is varied from $1$ to $5$.  `mean' represents those missing values are filled by zeros.  As illustrated in Figure \ref{figure_prediction_robust}, triangle lasso yields a lower MSE than network lasso in each experiment, which shows the robustness of triangle lasso\footnote{It is out of memory for network lasso to handle wiki4HE. We degenerate triangle lasso to the settings of network lasso, and use our methods to obtain the MSE corresponding to network lasso.}. Second, we test the efficiency of our methods. \textit{NoN} represents the number of neighbours for each vertex.  As shown in Table \ref{table_efficiency}, our methods are more efficient than their counterparts. Note that the ADMM method outperforms the Dual method when the graph is dense. That is, the average of the number of neighbours, i.e. $NoN$ is relatively large. 
When a graph is dense, the optimization variables are highly non-separable. In the case, it is time-consuming to be solved.  Since the Dual method want to return a highly accurate solution, it needs more time than the ADMM. But, when the graph is sparse, many of the optimization variables are separable, which makes the optimization problem easy to be solved. In the case, the Dual method is performed efficiently, and thus outperforms the ADMM method.

\subsection{Convex clustering.}
\textbf{Dataset and settings.} We aim to obtaining the cluster paths on the moon dataset \footnote{https://cs.joensuu.fi/sipu/datasets/jain.txt} and the iris dataset \footnote{http://archive.ics.uci.edu/ml/datasets/Iris}. The moon dataset contains $373$ instances, and each instance has two features. The iris dataset contains $150$ instances, and each instance has four features.  In the experiment, the values in each feature  is standardized to zero mean and unit variance. We run our ADMM method to obtain a cluster path, and compare it with the state-of-the-art convex clustering method, i.e. AMA. Consider the moon dataset. The network is built by connecting a vertex with its $50$ nearest neighbours. The initial $\alpha$ is $100$, and it is increased by multiplying a step size. The step size is initialized to be $1$ and is increased by $2$ at each iteration.  Similarly, consider the iris dataset. The network is built by connecting a vertex with its $10$ nearest neighbours. The initial $\alpha$ is $0.01$, and it is increased by multiplying a step size. The step size is initialized to be $1$ and is increased by $0.1$ at each iteration.  In order to draw the cluster path for the iris dataset,  we pick and visualize the first and second principal components  by using the Principal Component Analysis (PCA) method. 
%

\textbf{Results.}  As illustrated in Fig. \ref{figure_clusterpath},  the triangle lasso yields sightly better cluster paths than AMA with the comparable efficiency. The reason is that triangle lasso uses the neighbouring information for the vertices to find the cluster membership. In the network science, the neighbours are usually viewed as  the most valuable information for a vertex \cite{Prell:2011:SNA:2222515}. Therefore, triangle lasso outperforms AMA, and yields a better cluster path.

\begin{figure*}[t]
\setlength{\abovecaptionskip}{0pt}
\setlength{\belowcaptionskip}{0pt}
\centering 
\subfigure[$\alpha=1$, $37$ seconds]{\includegraphics[width=0.46\columnwidth]{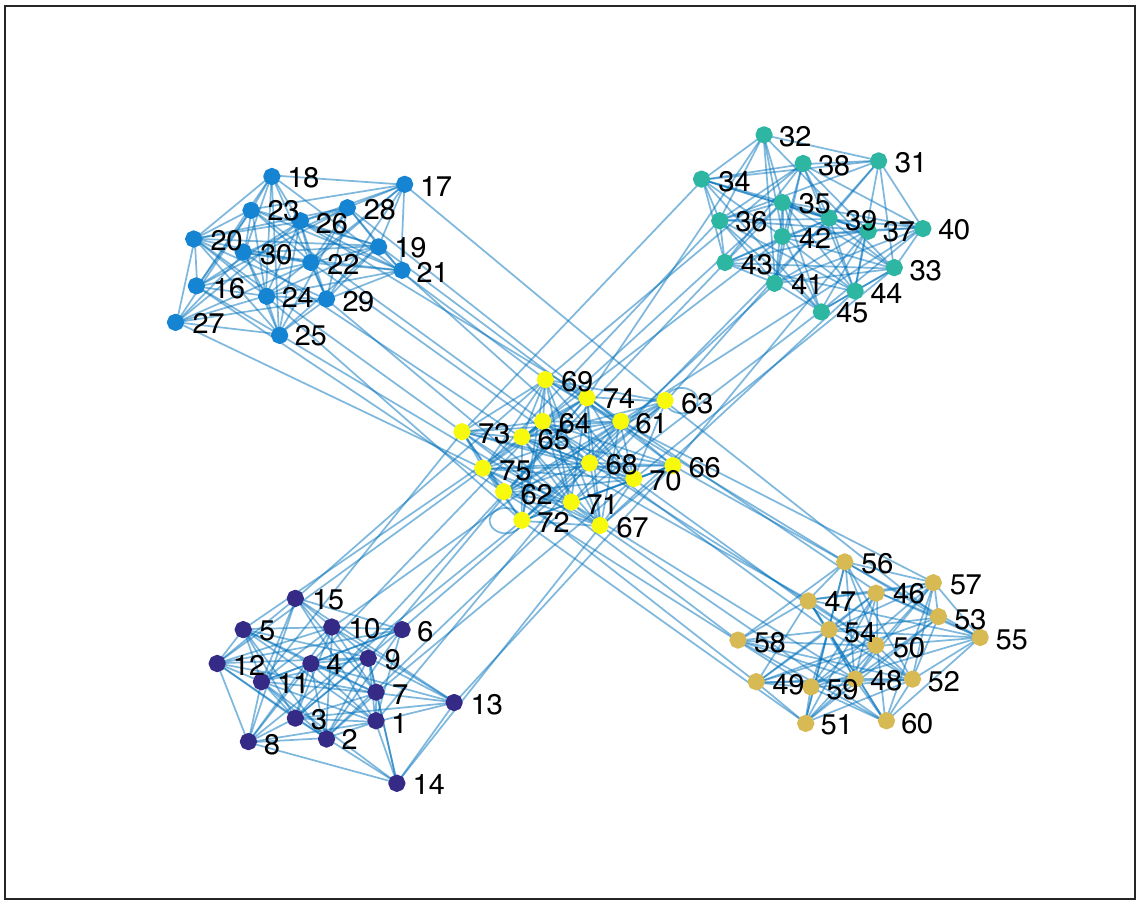}\label{figure_community_detection_5clusters}}
\hspace{5pt}
\subfigure[$\alpha = 1$, $187$ seconds]{\includegraphics[width=0.46\columnwidth]{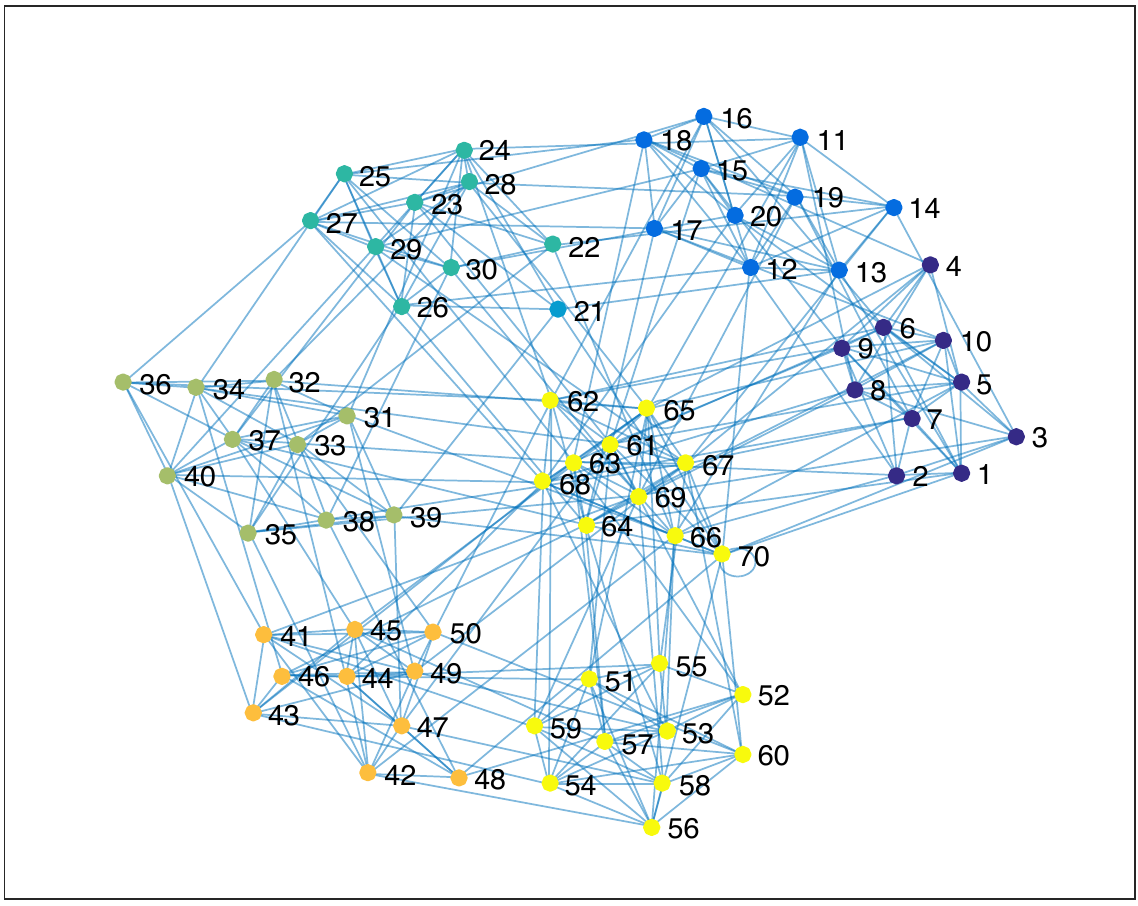}\label{figure_community_detection_7clusters}}
\hspace{5pt}
\subfigure[$\alpha=5$, $489$ seconds]{\includegraphics[width=0.46\columnwidth]{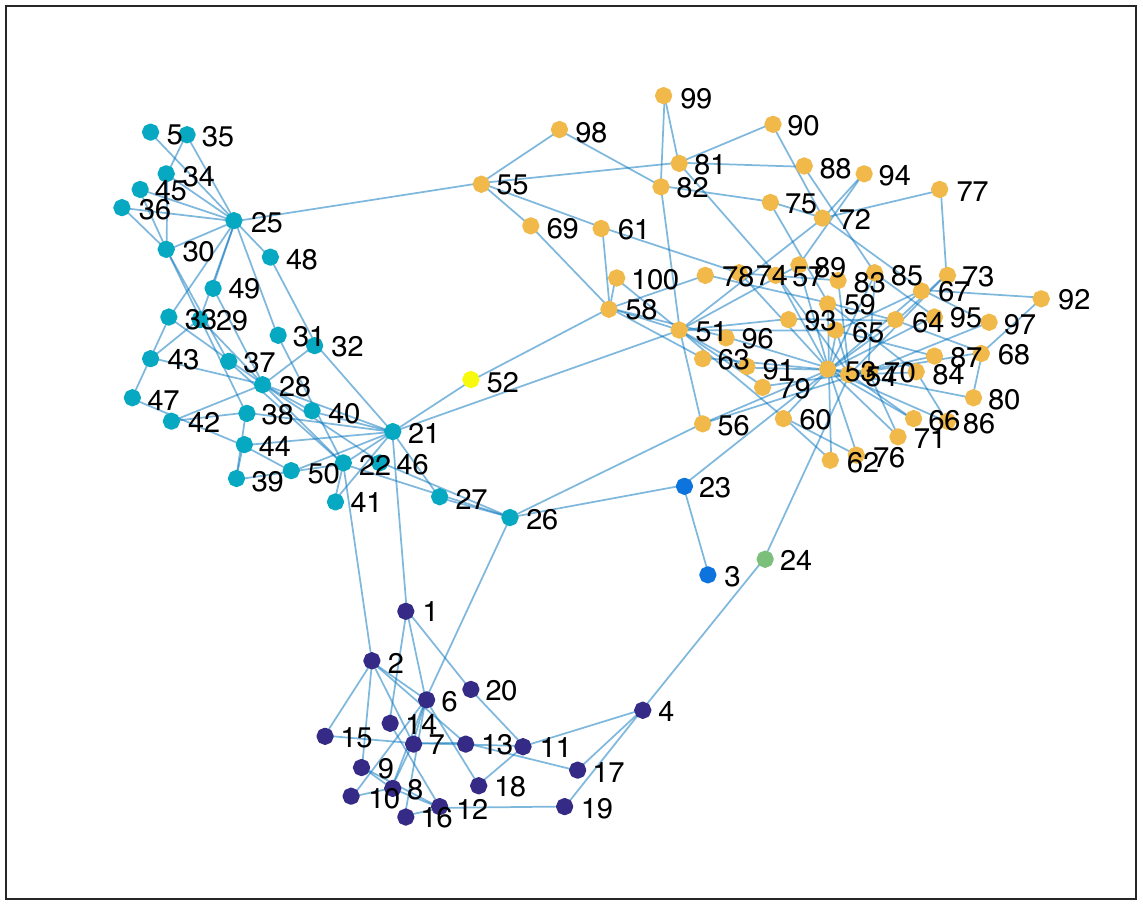}\label{figure_community_detection100-100}}
\hspace{5pt}
\subfigure[$\alpha = 170$, $50$ seconds]{\includegraphics[width=0.46\columnwidth]{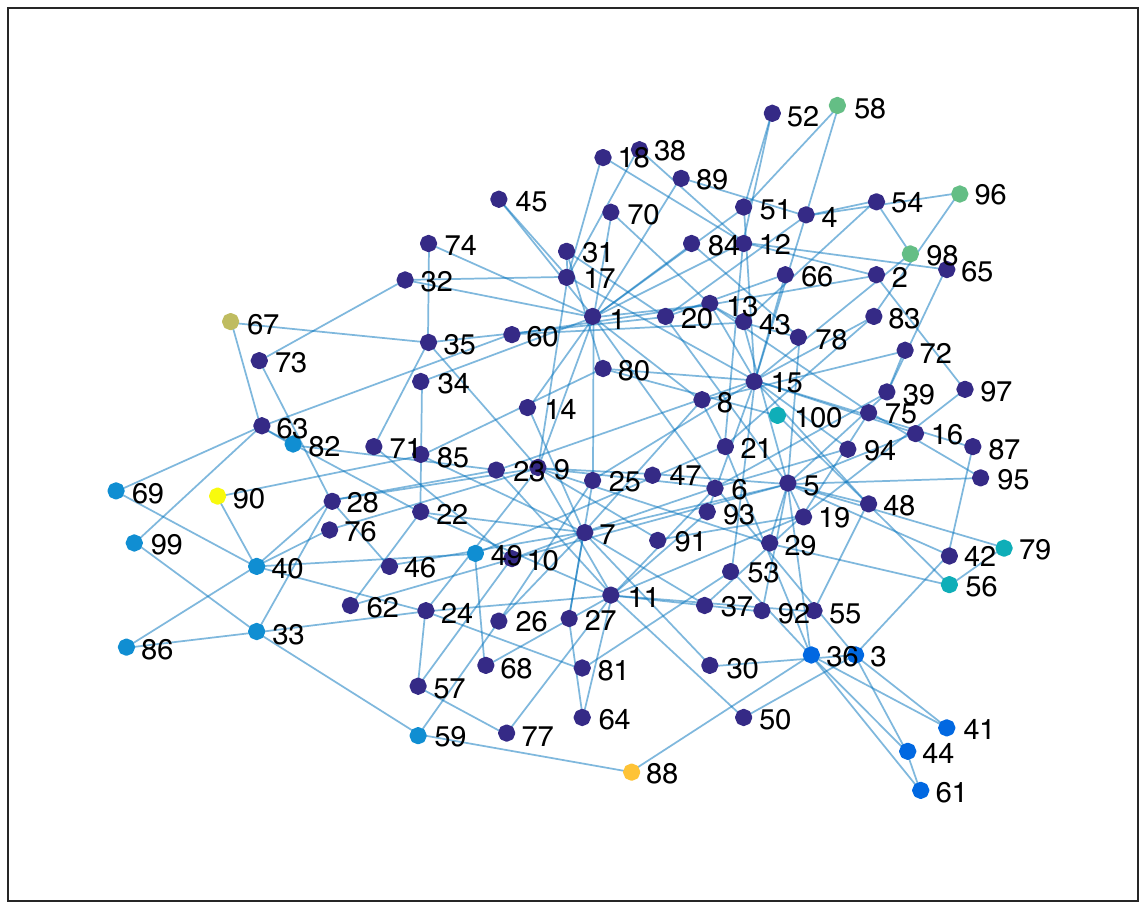}\label{figure_community_detection_100-8clusters}}
\caption{The community detection is conducted via the triangle lasso by using the ADMM method.}
\label{figure_community_detection}
\end{figure*}
\begin{figure}[t]
\centering 
\subfigure[No preturbation]{\includegraphics[width=0.49\columnwidth]{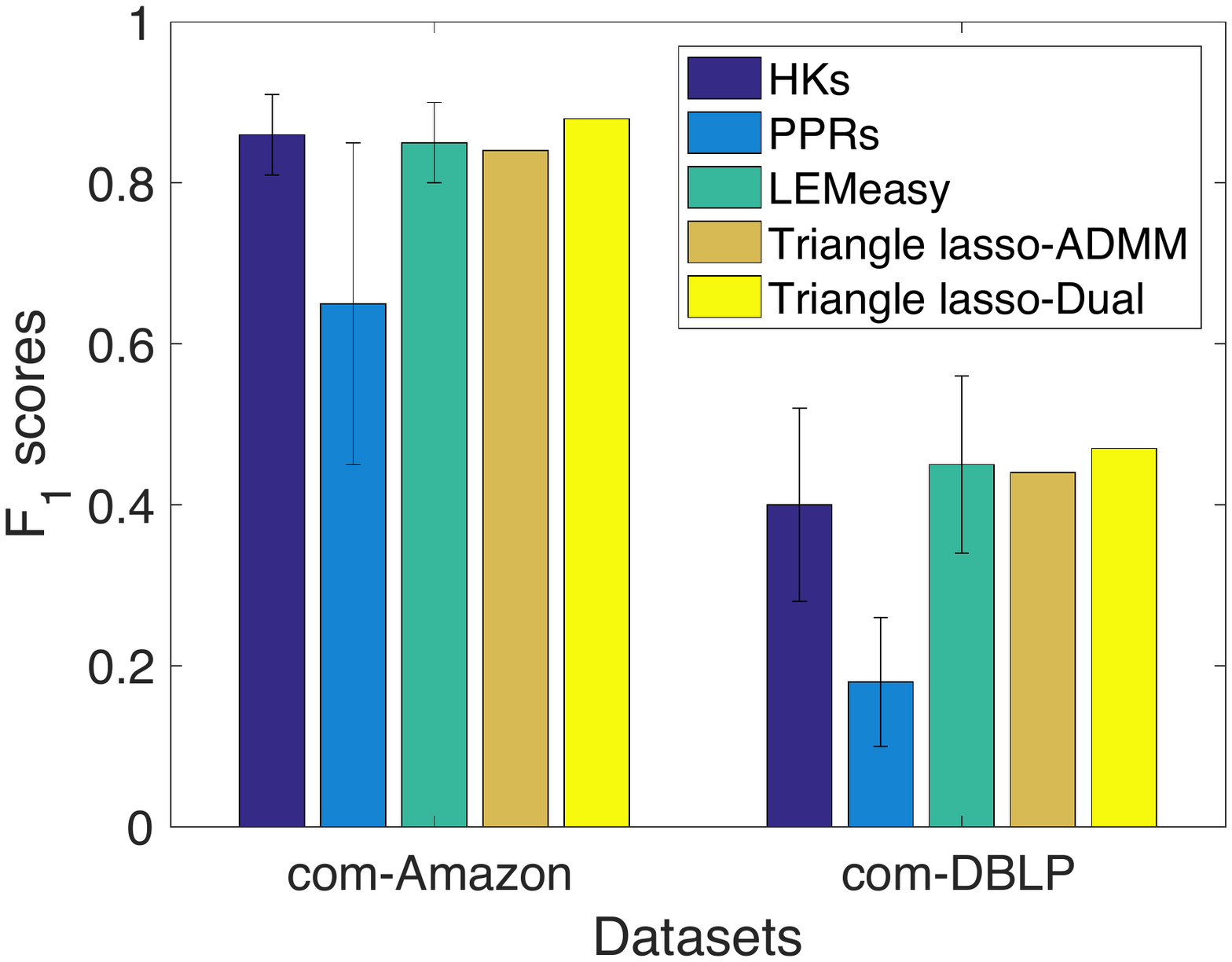}\label{figure_community_detection_f1_scores}}
\subfigure[Preturbation]{\includegraphics[width=0.49\columnwidth]{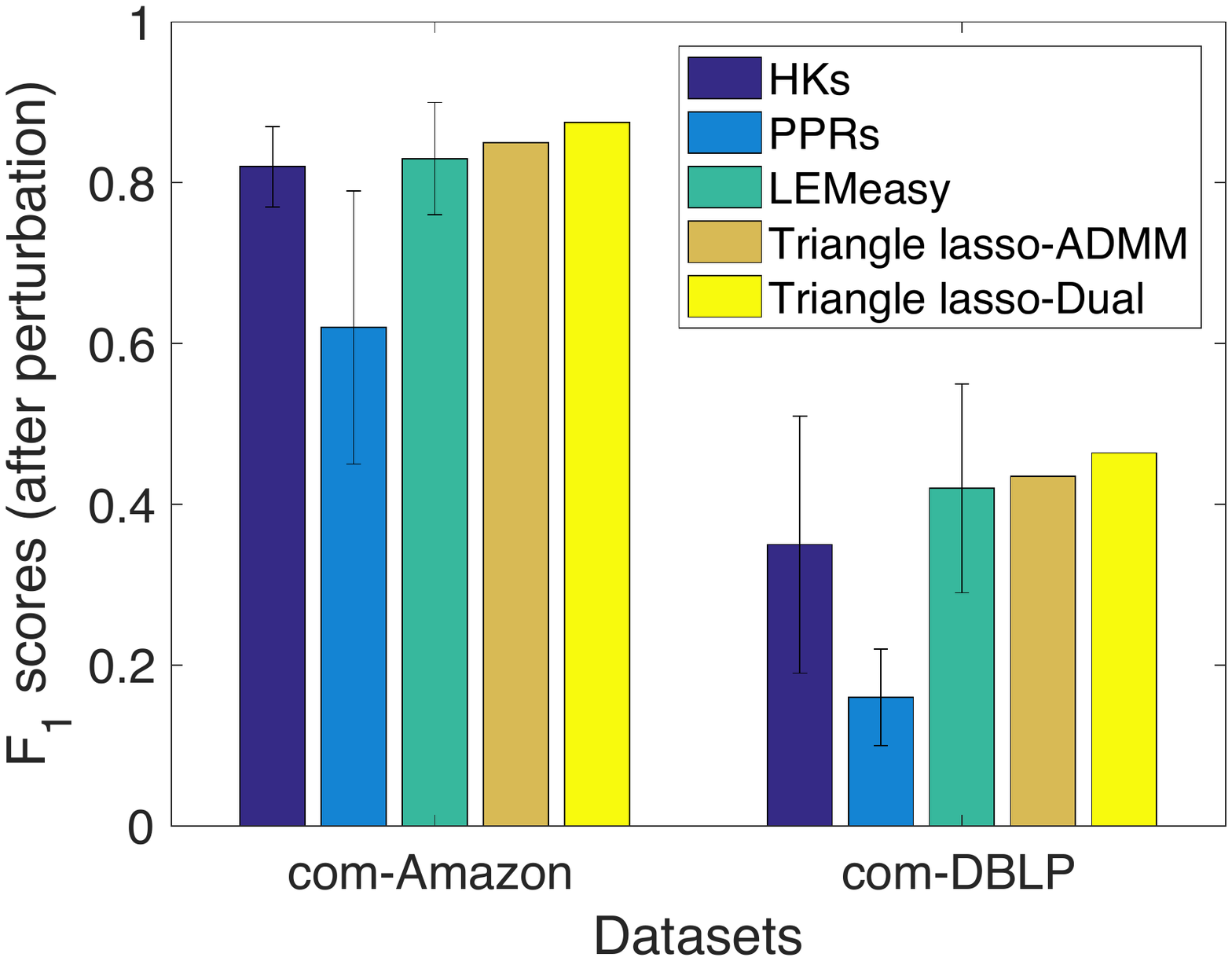}\label{figure_community_detection_f1_scores_perturb}}
\caption{Comparison of $F_1$ scores when detecting communities on the raw and the perturbed datasets.}
\label{figure_community_detection_f1}
\end{figure}

\subsection{Efficiency in various networks}
\textbf{Dataset and networks.} The dataset is yielded from a Gauss distribution whose mean is varied as $-5$, $-3$, $0$, $3$, and $5$, and its standard deviation is $1$. We still use ridge regression to test the efficiency of our methods. The Gauss distribution generates $20$  instances in each setting. The total number of instances in the dataset is $100$.  Each instance, e.g. $A_i$ is a $5$ dimensional row vector. Besides, the response for an instance, e.g. $y_i$ is set to be the mean of its Gauss distribution. $\gamma=0.01$, and $\alpha=0.01$. We evaluate the efficiency of our methods by varying the average degree in the classic networks: the random network, the small world network and the scale free network. Besides, we yield a network where there is a Completely Connected Community (C3) in the network.

\textbf{Results.} As illustrated in Fig. \ref{figure_effi_networks}, our ADMM is more efficient than the Dual method. The reason is that our Dual method has to solve a large number of linear equations, which is time-consuming. Furthermore, we can obtain the following observations. (1) Both our ADMM and Dual methods have the similar efficiency in the random and small world networks according to Fig. \ref{figure_evaluation_runtime_random_small_world}. (2) Our Dual method is performed very fast in the scale free network. It shows that our Dual method is suitable to solve a large-scale problem in the scale free network according to Fig. \ref{figure_evaluation_runtime_scale_free} and \ref{figure_evaluation_runtime_different_networks_admm_dual}. But, our ADMM method in the scale free network has the comparable efficiency in the random and small world networks. (3) Our Dual method has a relatively low scalability in the C3 network according to Fig. \ref{figure_evaluation_runtime_partial_dense} and \ref{figure_evaluation_runtime_different_networks_admm_dual}. If some vertices are completely connected in a network, the efficiency of our Dual method will be decreased sharply. The reason is that the weights of a vertex are highly non-separable with that of their neighbours in the completely connected community.  This fact illustrates that we should avoid to perform the  Dual method in such a network.

\subsection{Community detection}

\begin{table}[!]
\centering
\caption{Statistics of the network datasets with ground truth.}
\label{table_network_datasets}
\begin{tabular}{c|c|c|c}
\hline 
Datasets & \# Nodes & \# Edges & \# Communities  \tabularnewline
\hline 
\hline  
com-Amazon & $334,863$ & $925,872$ & $75,149$\tabularnewline
\hline 
com-DBLP & $317,080$ & $1,049,866$ & $13,477$\tabularnewline
\hline 
\end{tabular}
\end{table}

\textbf{Dataset and settings.} First, we synthetize four graph datasets, and present that the performance of triangle lasso for community detection.  
Second, some real graph datasets with ground truth are provided in the SNAP repository \cite{snapnets,Yang2012DefiningAE}. We use two of them: com-Amazon \footnote{https://snap.stanford.edu/data/com-Amazon.html} and com-DBLP \footnote{https://snap.stanford.edu/data/com-DBLP.html} to test the performance of triangle lasso quantitatively. The statics of those datasets are illustrated in Table \ref{table_network_datasets}.

Comparing with triangle lasso, we conduct the community detection by using the state-of-the-art methods: HKs \cite{Kloster:2016ue}, PPRs \cite{Kloster:2016ue} and LEMeasy \cite{Kloster:2016ue}, which are implemented in the open source project \cite{lemon-sqz_project}. Additionally, we use the $F_1$ score to test the performance of the community detection quantitatively. The quantitative metric is the $F_1$ score, which is defined by
\begin{align}
\nonumber
F_1 = 2\cdot \frac{\mathrm{precision}\times \mathrm{recall}}{\mathrm{precision} + \mathrm{recall}}.
\end{align} 

More details about the metric are recommended to refer to \cite{f1_score_definition}.  Note that the previous methods HKs, PPRs and LEMeasy need to set a seed for each a community. Since we know the ground truth of the communities, we randomly pick a member of a community, and then set it as the seed for the community. Thus, we obtain an $F_1$ for the community. Then, the average of those $F_1$ scores for all communities is used to represent the $F_1$ score based on the seed. Repeating the procedure $10$ times, we record the mean and variance of those averaged $F_1$ scores to evaluate the performance of all the methods quantitatively.

\textbf{Results.} As illustrated in Fig. \ref{figure_community_detection}, the triangle lasso is able to detect multiple communities. Fig. \ref{figure_community_detection_5clusters}, \ref{figure_community_detection_7clusters}, and \ref{figure_community_detection100-100} show that it performs very well when the network consists of multiple communities. Fig. \ref{figure_community_detection_100-8clusters} shows that the triangle lasso still finds the local community structure of the network when the community structure is not obvious.

Quantitatively, as illustrated in Fig. \ref{figure_community_detection_f1_scores}, our dual method obtains the highest $F_1$ scores, outperforming the state-of-the-art methods significantly.  It is highlighted that both our methods including the dual method and the ADMM method yield the deterministic solutions, which are not impacted by the initial values to start them. Thus, the variance of the solutions yielded by our methods is $0$. But, the previous methods are heuristic, and their solutions are sensitive to the seeds which are selected before running them.

Furthermore, we evaluate the robustness of the solution by performing those methods on the perturbed datasets. The perturbation includes the following steps. 
\begin{itemize}
\item \textit{We randomly select a node which is a member of a community, and a node which is a non-member of the community.}
\item \textit{An edge is generated to connect those two nodes.}
\end{itemize} 

The number of those member nodes is controlled to be less than $1\%$ of the total number of the nodes in the selected community. Fig. \ref{figure_community_detection_f1_scores_perturb} shows that both our ADMM method and our dual method outperform their counterparts under the perturbation strategy.  The reason is that triangle lasso uses the neighbouring information to yield a robust solution, and thus decreases the impact of the perturbation.

\section{Conclusion}
\label{sect_conclusion}
It is challenging to simultaneously clustering and simultaneously in practical datasets due to the imperfect data.   In the paper, we formulate the triangle lasso as a convex problem. After that, we develop the ADMM method to obtain the moderately accurate solution. Additionally,  we transform the original problem to be a second-order cone programming problem, and solve it in the dual space.  Finally, we conduct extensive empirical studies to show the superiorities  of triangle lasso.


%

\section*{Acknowledgment}
This work was supported by the National Key R \& D Program of China 2018YFB1003203 and the National Natural Science Foundation of China (Grant No. 61672528, 61773392, 60970034, 61170287, 61232016, and 61671463). We thank Cixing intelligent manufacturing research institute, Cixing textile automation research institute, Ningbo Cixing corporation limited and Ningbo Cixing robotics company limited because of their financial support and application scenarios.




%


\bibliographystyle{IEEEtran}  
\bibliography{reference}

\begin{thebibliography}{10}
\providecommand{\url}[1]{#1}
\csname url@samestyle\endcsname
\providecommand{\newblock}{\relax}
\providecommand{\bibinfo}[2]{#2}
\providecommand{\BIBentrySTDinterwordspacing}{\spaceskip=0pt\relax}
\providecommand{\BIBentryALTinterwordstretchfactor}{4}
\providecommand{\BIBentryALTinterwordspacing}{\spaceskip=\fontdimen2\font plus
\BIBentryALTinterwordstretchfactor\fontdimen3\font minus
  \fontdimen4\font\relax}
\providecommand{\BIBforeignlanguage}[2]{{%
\expandafter\ifx\csname l@#1\endcsname\relax
\typeout{** WARNING: IEEEtran.bst: No hyphenation pattern has been}%
\typeout{** loaded for the language `#1'. Using the pattern for}%
\typeout{** the default language instead.}%
\else
\language=\csname l@#1\endcsname
\fi
#2}}
\providecommand{\BIBdecl}{\relax}
\BIBdecl

\bibitem{Hallac:2015fy}
D.~Hallac, J.~Leskovec, and S.~Boyd, ``Network lasso: Clustering and
  optimization in large graphs.'' in \emph{ACM International Conference on
  Knowledge Discovery \& Data Mining}, 2015, p. 387.

\bibitem{Wang:2016kw}
Q.~Wang, P.~Gong, S.~Chang, T.~S. Huang, and J.~Zhou, ``{Robust Convex
  Clustering Analysis},'' in \emph{IEEE 16th International Conference on Data
  Mining}, 2016, pp. 1263--1268.

\bibitem{Wangsparse}
B.~Wang, Y.~Zhang, W.~Sun, and Y.~Fang, ``Sparse convex clustering,''
  \emph{arXiv.org}, 2016.

\bibitem{Jung:2017ug}
A.~Jung, ``{When is Network Lasso Accurate?}'' \emph{arXiv.org}, apr 2017.

\bibitem{icml2017}
A.~Panahi, D.~Dubhashi, F.~D. Johansson, and C.~Bhattacharyya, ``Clustering by
  sum of norms: Stochastic incremental algorithm, convergence and cluster
  recovery,'' in \emph{Proceedings of the 34th International Conference on
  Machine Learning}, 2017.

\bibitem{Chi:2013ey}
E.~C. Chi and K.~Lange, ``Splitting methods for convex clustering,''
  \emph{Professional Geographer}, vol.~46, no.~1, pp. 80--89, 2014.

\bibitem{Han:2016tg}
L.~Han and Y.~Zhang, ``{Reduction Techniques for Graph-Based Convex
  Clustering.}'' in \emph{Proceedings of the Thirtieth AAAI Conference on
  Artificial Intelligence}, 2016.

\bibitem{Tan:2015vr}
K.~M. Tan and D.~Witten, ``Statistical properties of convex clustering,''
  \emph{Electronic Journal of Statistics}, vol.~9, no.~2, p. 2324, 2015.

\bibitem{Chi:2016bu}
E.~C. Chi, G.~I. Allen, and R.~G. Baraniuk, ``Convex biclustering,''
  \emph{Biometrics}, vol.~73, no.~1, pp. 10--19, may 2016.

\bibitem{Ghosh2016An}
S.~Ghosh, K.~Page, and D.~D. Roure, ``An application of network lasso
  optimization for ride sharing prediction,'' in \emph{ACM International
  Conference on Knowledge Discovery \& Data Mining}, 2016.

\bibitem{Chen:2015kn}
G.~K. Chen, E.~C. Chi, J.~M.~O. Ranola, and K.~Lange, ``{Convex Clustering - An
  Attractive Alternative to Hierarchical Clustering.}'' \emph{PLoS
  Computational Biology}, vol.~11, no.~5, p. e1004228, 2015.

\bibitem{Prell:2011:SNA:2222515}
C.~Prell, \emph{Social Network Analysis: History, Theory and
  Methodology}.\hskip 1em plus 0.5em minus 0.4em\relax Sage Publications Ltd.,
  2011.

\bibitem{Bertsekas20046}
D.~Bertsekas, A.~Nedic, and A.~Ozdaglar, \emph{Convex Analysis and
  Optimization}.\hskip 1em plus 0.5em minus 0.4em\relax Athena Scientific,
  2004.

\bibitem{Parikh2014Proximal}
N.~Parikh and S.~Boyd, ``Proximal algorithms,'' \emph{Foundations \& Trends in
  Optimization}, vol.~1, no.~3, pp. 127--239, 2014.

\bibitem{Boyd:2011}
S.~Boyd, N.~Parikh, E.~Chu, B.~Peleato, and J.~Eckstein, ``Distributed
  optimization and statistical learning via the alternating direction method of
  multipliers,'' \emph{Found. Trends Mach. Learn.}, vol.~3, no.~1, pp. 1--122,
  jan 2011.

\bibitem{Nishihara}
R.~Nishihara, L.~Lessard, B.~Recht, A.~Packard, and M.~I. Jordan, ``A general
  analysis of the convergence of admm,'' in \emph{Proceedings of the 32nd
  International Conference on International Conference on Machine Learning},
  2015, pp. 343--352.

\bibitem{tongda}
D.~Tang and T.~Zhang, ``On the duality gap convergence of admm methods,'' in
  \emph{Proceedings of the 34th International Conference on International
  Conference on Machine Learning}, 2017.

\bibitem{Hong2017}
M.~Hong and Z.-Q. Luo, ``On the linear convergence of the alternating direction
  method of multipliers,'' \emph{Mathematical Programming}, vol. 162, no.~1,
  pp. 165--199, Mar 2017.

\bibitem{Boyd:2004}
S.~Boyd and L.~Vandenberghe, \emph{Convex Optimization}.\hskip 1em plus 0.5em
  minus 0.4em\relax New York, NY, USA: Cambridge University Press, 2004.

\bibitem{He2015}
B.~He and X.~Yuan, ``On non-ergodic convergence rate of douglas--rachford
  alternating direction method of multipliers,'' \emph{Numerische Mathematik},
  vol. 130, no.~3, pp. 567--577, Jul 2015.

\bibitem{Nocedal1999Numerical}
J.~Nocedal and S.~J. Wright, \emph{Numerical Optimization}.\hskip 1em plus
  0.5em minus 0.4em\relax Springer, 1999.

\bibitem{Shalev2014Understanding}
S.~Shalev-Shwartz and S.~Ben-David, \emph{Understanding Machine Learning: From
  Theory to Algorithms}.\hskip 1em plus 0.5em minus 0.4em\relax Cambridge
  University Press, 2014.

\bibitem{ShalevShwartz:2012dz}
S.~Shalev-Shwartz, ``{Online Learning and Online Convex Optimization},''
  \emph{Foundations and Trends{\textregistered} in Machine Learning}, vol.~4,
  no.~2, pp. 107--194, 2012.

\bibitem{Hazan2016Introduction}
E.~Hazan, ``Introduction to online convex optimization,'' vol.~2, no. 3-4, pp.
  157--325, 2016.

\bibitem{Nesterov:2004ic}
Y.~Nesterov, ``{Smooth minimization of non-smooth functions},''
  \emph{Mathematical Programming}, vol. 103, no.~1, pp. 127--152, 2004.

\bibitem{Lan:2009ea}
G.~Lan, Z.~Lu, and R.~D.~C. Monteiro, ``{Primal-dual first-order methods with
  ${\mathcal {O}(1/\epsilon)}$ iteration-complexity for cone programming},''
  \emph{Mathematical Programming}, vol. 126, no.~1, pp. 1--29, 2009.

\bibitem{gb08}
M.~Grant and S.~Boyd, ``Graph implementations for nonsmooth convex programs,''
  in \emph{Recent Advances in Learning and Control}, ser. Lecture Notes in
  Control and Information Sciences, V.~Blondel, S.~Boyd, and H.~Kimura,
  Eds.\hskip 1em plus 0.5em minus 0.4em\relax Springer-Verlag Limited, 2008,
  pp. 95--110.

\bibitem{Meseguer2016Factors}
A.~Meseguer-Artola, E.~Aibar, J.~Lladós, J.~Minguillón, and M.~Lerga,
  ``Factors that influence the teaching use of wikipedia in higher education,''
  \emph{Journal of the Association for Information Science \& Technology},
  vol.~67, no.~5, pp. 1224--1232, 2016.

\bibitem{snapnets}
J.~Leskovec and A.~Krevl, ``{SNAP Datasets}: {Stanford} large network dataset
  collection,'' \url{http://snap.stanford.edu/data}, Jun. 2014.

\bibitem{Yang2012DefiningAE}
J.~Yang and J.~Leskovec, ``Defining and evaluating network communities based on
  ground-truth,'' 2012, pp. 745--754.

\bibitem{Kloster:2016ue}
K.~Kloster and Y.~Li, ``{Scalable and Robust Local Community Detection via
  Adaptive Subgraph Extraction and Diffusions},'' \emph{arXiv.org}, Nov. 2016.

\bibitem{lemon-sqz_project}
``{lemon-sqz project},'' https://github.com/kkloste/lemon-sqz, accessed:
  2018-05-01.

\bibitem{f1_score_definition}
``{F1 score},'' {https://en.wikipedia.org/wiki/F1y\_score}, accessed:
  2018-05-10.

\end{thebibliography}

%



\begin{IEEEbiography}[{\includegraphics[width=1in,height=1.25in,clip,keepaspectratio]{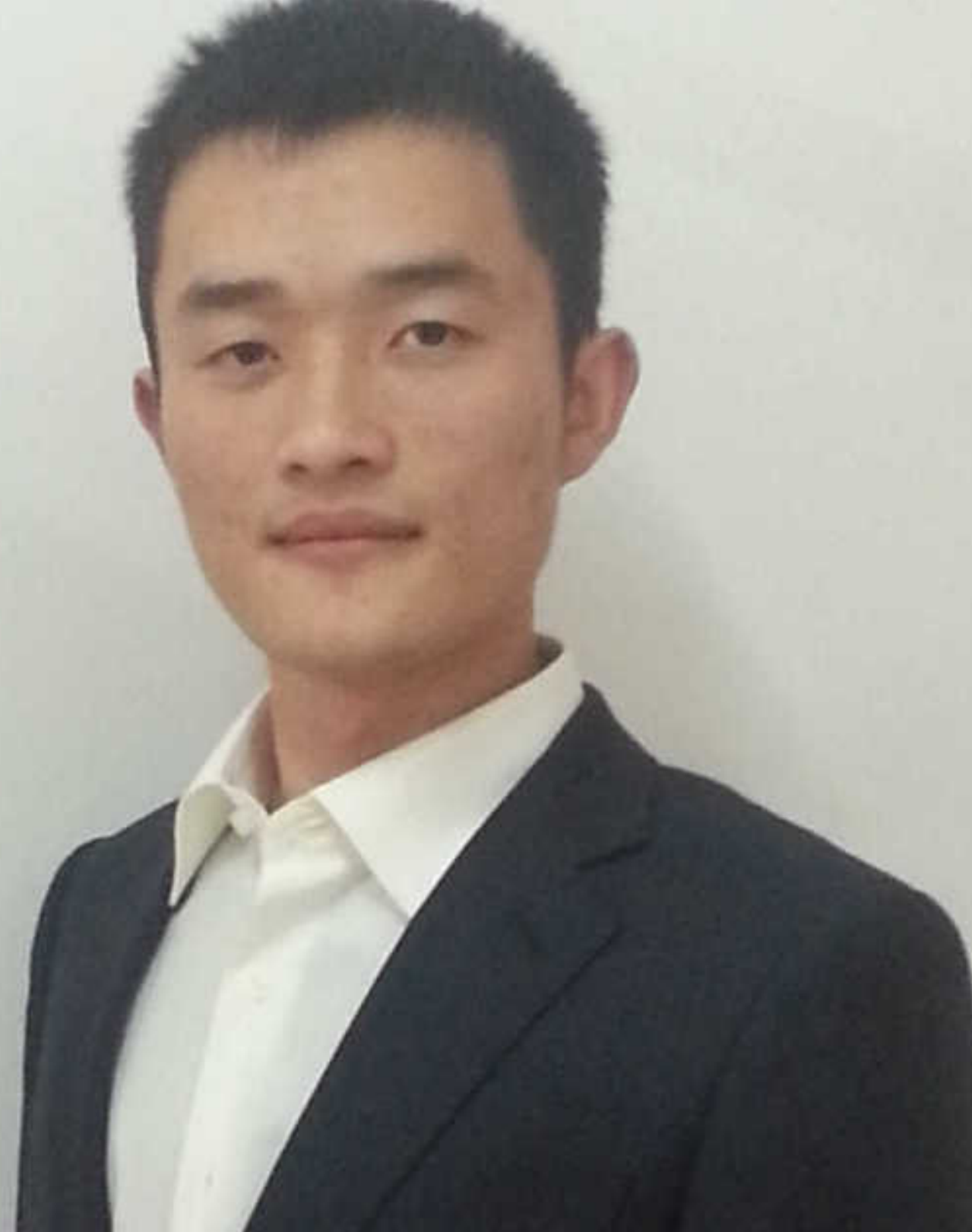}}]{Yawei Zhao} is currently a Ph.D. candidate in Computer Science from the National University of Defense Technology, China. He received his B.E. degree and M.S. degree in Computer Science from the National University of Defense Technology, China, in 2013 and 2015, respectively. His research interests include asynchronous and parallel optimization algorithms, pattern recognition and machine learning.
\end{IEEEbiography}

\begin{IEEEbiography}[{\includegraphics[width=1in,height=1.25in,clip,keepaspectratio]{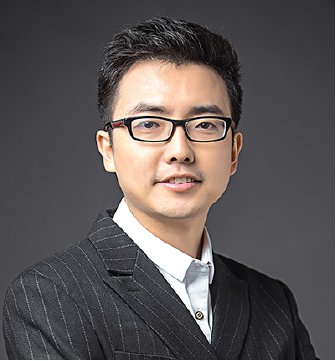}}]{Kai Xu} is an Associate Professor at the School of Computer, National University of Defense Technology, where he received his Ph.D. in 2011. From 2008 to 2010, he conducted visiting research at Simon Fraser University. He is visiting Princeton University since July 2017. His research interests include geometry processing and geometric modeling, especially on data-driven approaches to the problems in those directions, as well as 3D-gemoetry-based computer vision for robotic applications. 
\end{IEEEbiography}

\begin{IEEEbiography}[{\includegraphics[width=1in,height=1.25in,clip,keepaspectratio]{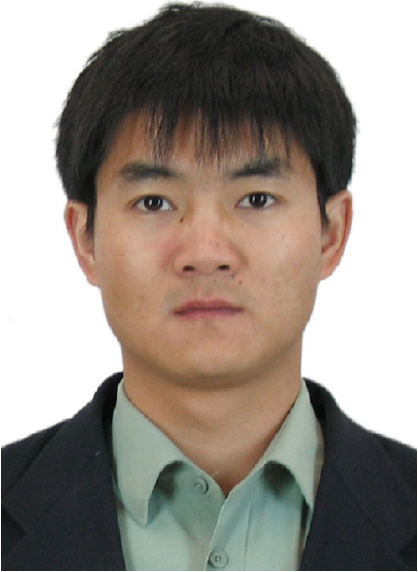}}]{Xinwang Liu} received his PhD degree from National University of Defense Technology (NUDT), China. He is now Assistant Researcher of School of Computer Science, NUDT. His current research interests include kernel learning and unsupervised feature learning. Dr. Liu has published 40+ peer-reviewed papers, including those in highly regarded journals and conferences such as IEEE T-IP, IEEE T-NNLS, ICCV, AAAI, IJCAI, etc. He served on the Technical Program Committees of IJCAI 2016-2017, AAAI 2016-2018.
\end{IEEEbiography}

\begin{IEEEbiography}[{\includegraphics[width=1in,height=1.25in,clip,keepaspectratio]{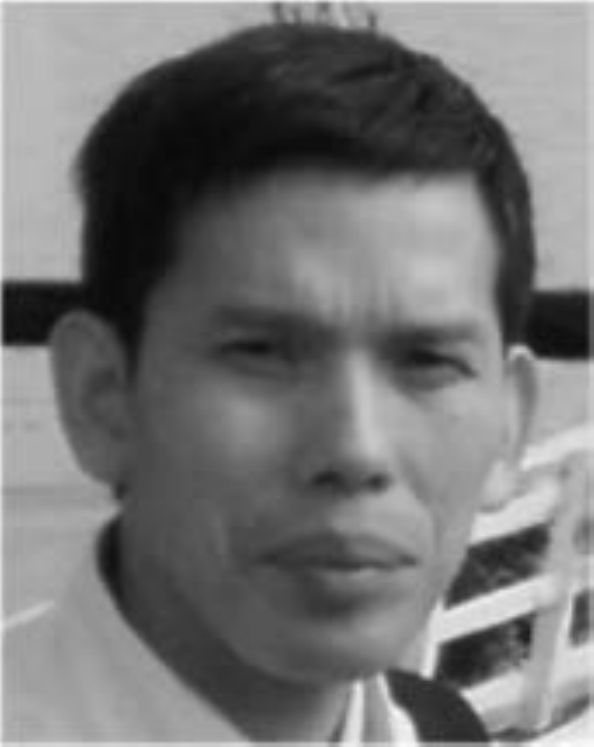}}]{En Zhu} received his M.S. degree and Ph.D. degree in Computer Science from the National University of Defense Technology, China, in 2001 and 2005, respectively. He is now working as a full professor in the School of Computer Science, National University of Defense Technology, China. His main research interests include pattern recognition, image processing, and information security.
\end{IEEEbiography}

\begin{IEEEbiography}[{\includegraphics[width=1in,height=1.25in,clip,keepaspectratio]{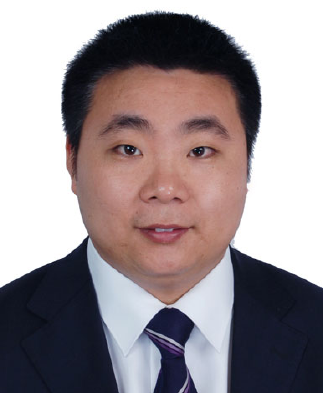}}]{Xinzhong Zhu} is a professor at College of Mathematics, Physics and Information Engineering, Zhejiang Normal University, PR China. He received his B.S. degree in computer science and technology from the University of Science and Technology Beijing (USTB), Beijing, in 1998, and received M.S. degree in software engineering from National University of Defense Technology in 2005, and now he is a Ph.D. candidate at XIDIAN University.  His research interests include epileptic seizure detection, machine learning, computer vision, manufacturing informatization, robotics and system integration, and intelligent manufacturing. He is a member of the ACM.
\end{IEEEbiography}

\begin{IEEEbiography}[{\includegraphics[width=1in,height=1.25in,clip,keepaspectratio]{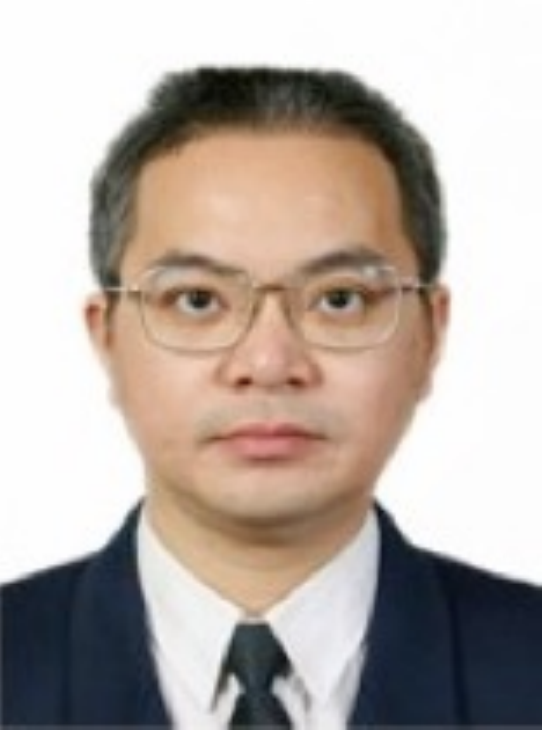}}]{Jianping Yin} received his M.S. degree and Ph.D. degree in Computer Science from the National University of Defense Technology, China, in 1986 and 1990, respectively. He is a professor of computer science in the Dongguan University of Technology. His research interests involve artificial intelligence, pattern recognition, algorithm design, and information security.
\end{IEEEbiography}

\end{document}